\newcommand{\qselect}{QuerySelect\xspace}
\newcommand{\aqselect}{Adaptive QuerySelect\xspace}
\definecolor{MyGreen1}{RGB}{20,180,40}
\tikzset{
  block/.style    = {draw, thick, rectangle, minimum width = 2em},
sblock/.style      = {draw, thick, rectangle, minimum height = 2em,
minimum width = 2em}, 
}
\renewcommand{\tilde}{\widetilde}
\newcommand{\argmax}{\mathop{\operatorname{argmax}}}
\newtheorem{theorem}{Theorem}
\newtheorem{proposition}{Proposition}
\theoremstyle{definition}
\newtheorem{remark}{Remark}
\newenvironment{hproof}{%
  \proof}{\endproof}
\newcommand{\calM}{\mathcal{M}}
\newcommand{\calQ}{\mathcal{Q}}
\newcommand{\calV}{\mathcal{V}}
\newcommand{\calX}{\mathcal{X}}
\newcommand{\calY}{\mathcal{Y}}
\newcommand{\bbR}{\mathbb{R}}
\newcommand{\bbE}{\mathbb{E}}
\newcommand{\vect}[1]{\boldsymbol{#1}}
\newcommand{\bv}{\vect{v}}
\newcommand{\bz}{\vect{z}}
\newcommand{\bA}{\vect{A}}
\newcommand{\bD}{\vect{D}}
\newcommand{\bR}{\vect{R}}
\tikzstyle{block} = [draw, rectangle, minimum height=3em, minimum width=3em]
\def\probP{\mathsf{P}}
\newcommand{\LLMf}[1]{#1_{\mathsf{LLM}}}
\def\len{\mathrm{len}}
\def\enc{\mathrm{enc}}
\def\dec{\mathrm{dec}}
\def\dist{\mathsf{d}}
\def\distlog{\mathsf{d}_{\log}}
\def\distacc{\mathsf{d}_{0/1}}
\def\distvec{\bD}
\def\ratevec{\bR}
\def\zvec{\bz}
\def\comp{\texttt{comp}}
\def\indic{\mathbbm{1}}
\DeclareMathAlphabet{\mathdutchcal}{U}{dutchcal}{m}{n}
\def\probset{\mathdutchcal{P}}
\title{Fundamental Limits of Prompt Compression:\\ A Rate-Distortion Framework for Black-Box Language Models}
\author{%
    Alliot Nagle$^*$ \\
    UT Austin 
    \and Adway Girish\thanks{Equal contribution. $^\text{\textdagger}$Equal contribution.} \\
    EPFL 
    \and
    Marco Bondaschi \\
    EPFL 
    \and
    Michael Gastpar \\
    EPFL
    \and 
    Ashok Vardhan Makkuva$^\text{\textdagger}$ \\
    EPFL 
    \and
    Hyeji Kim$^\text{\textdagger}$ \\
    UT Austin
}
\begin{document}
\maketitle


\begin{abstract}
We formalize the problem of prompt compression for large language models (LLMs) and present a framework to unify token-level prompt compression methods which create \emph{hard prompts} for black-box models. We derive the distortion-rate function for this setup as a linear program, and provide an efficient algorithm to compute this fundamental limit via the dual of the linear program. Using the distortion-rate function as the baseline, we study the performance of existing compression schemes on a synthetic dataset consisting of prompts generated from a Markov chain, natural language queries, and their respective answers. Our empirical analysis demonstrates the criticality of \emph{query-aware} prompt compression, where the compressor has knowledge of the downstream task/query for the black-box LLM. We show that there is a large gap between the performance of current prompt compression methods and the optimal strategy, and propose \emph{Adaptive QuerySelect}, a query-aware, variable-rate adaptation of a prior work to close the gap. We extend our experiments to a small natural language dataset to further confirm our findings on our synthetic dataset.
\end{abstract} 


\newpage
\tableofcontents

\newpage

\section{Introduction}
\label{sec:intro}

In spite of the recent success of transformer-based \cite{vaswani2017attention} large language models (LLMs) in language modeling tasks, inference calls to a transformer can be costly in both time and memory usage. 
Although significant progress has been made to improve the memory usage and runtime efficiency via
implementation-level optimizations  \cite{kwon2023efficient,dao2022flashattention,dao2023flashattention2}
and architecture-level optimizations and alternatives \cite{wang2020linformer,peng2023rwkv,mamba2023gu}, a third type of optimization that compresses the input (an \textit{input-level} optimization) has the benefit that it directly reduces the resource usage of an LLM inference call, \textit{and} it can be used in conjunction with the other two types of optimizations for further efficiency gains.
\emph{In this work, we offer a framework and analysis for a recent body of literature in this direction, known as prompt compression} \cite{askell2021general,snell2022learning,wingate2022prompt}. 

The goal of a prompt compression method is to transform a sequence of input tokens $x$ into a shorter sequence of tokens $m$ such that the response generated by a target LLM will semantically mean the same thing regardless of whether $x$ or $m$ is given as input. Using $m$ as the input directly decreases the memory and runtime requirements necessary for an LLM inference call. Moreover, the additional benefits to this approach are: (1) redundant or superfluous tokens are removed, making room to fit more pertinent information in the target LLM's limited-size context window, (2) it can be used in addition to implementation and architecture-level optimizations to get further efficiency gains, and (3) it is the only technique available when seeking to lower costs for black-box API calls to closed-source models. This third point is particularly important, since the associated cost for a black-box API model inference call, from the perspective of the caller, is determined by the runtime and the number of input tokens, both which can be reduced with prompt compression. In our framework and analysis, we focus on the \emph{prompt compression for black-box models} setting, where the output of a prompt compression method is a set of tokens (``hard prompts'') \cite{li2023compressing,jiang2023llmlingua,jiang23longllmlingua,wu2024llmlingua2}, and exclude methods which output embedding vectors (``soft prompts'') \cite{mu2023learning,ge2024incontext,chevalier2023adapting} as those are not transferable to black-box models.

Despite the progress made in the prompt compression literature, there is a lack of proper formalization of this problem and there is no clear framework to unify these works. Further, most works propose methods that work well but offer no insight into key questions, such as \emph{``How far are we from the theoretical limit of the rate-distortion trade-off?''}, \emph{``How essential is the conditioning on the query when compressing the prompt?''}, and \emph{``How does tokenization impact the performance of prompt compression methods?''} We offer a unifying framework for the problem of prompt compression and seek to answer these questions with theory and experimental results. Our main contributions can be summarized as follows.

\begin{enumerate}[leftmargin=1.8em]
    \item 
{\bf \emph{Theoretical analysis:}} We formalize the problem of prompt compression and formulate it as a rate-distortion problem (\prettyref{sec: formal-pc}). We characterize the optimal trade-off between the rate of compression and the distortion incurred, i.e., the \emph{distortion-rate function}, via a dual linear program, and provide a geometric algorithm to compute this optimal trade-off (\prettyref{sec: prompt_comp_RD}, \prettyref{sec: LP_dual}). 

\item 
{\bf \emph{Evaluation:}} We introduce a synthetic dataset with binary prompts and natural language queries, for which we can compute the distortion-rate function (\prettyref{sec: exp_setup}), and compare and obtain insights on existing prompt compression algorithms as in \prettyref{fig: comp_forced} (\prettyref{sec: exp_results}). We further confirm our findings by extending our experiments to a small natural language dataset and NarrativeQA \cite{kočiský2017narrativeqareadingcomprehensionchallenge}.

\item 
{\bf \emph{Algorithm design:}} 
Our novel method, ``\aqselect,'' a query-aware, variable-rate adaptation of LLMLingua-2 \cite{wu2024llmlingua2}, outperforms all prompt compression methods on our datasets and has a rate-distortion curve that significantly reduces the gap with the theoretical limit (\prettyref{sec:experiments}).
\end{enumerate}

\begin{figure}[t!]
    \centering
    \includegraphics[width=1.0\textwidth]{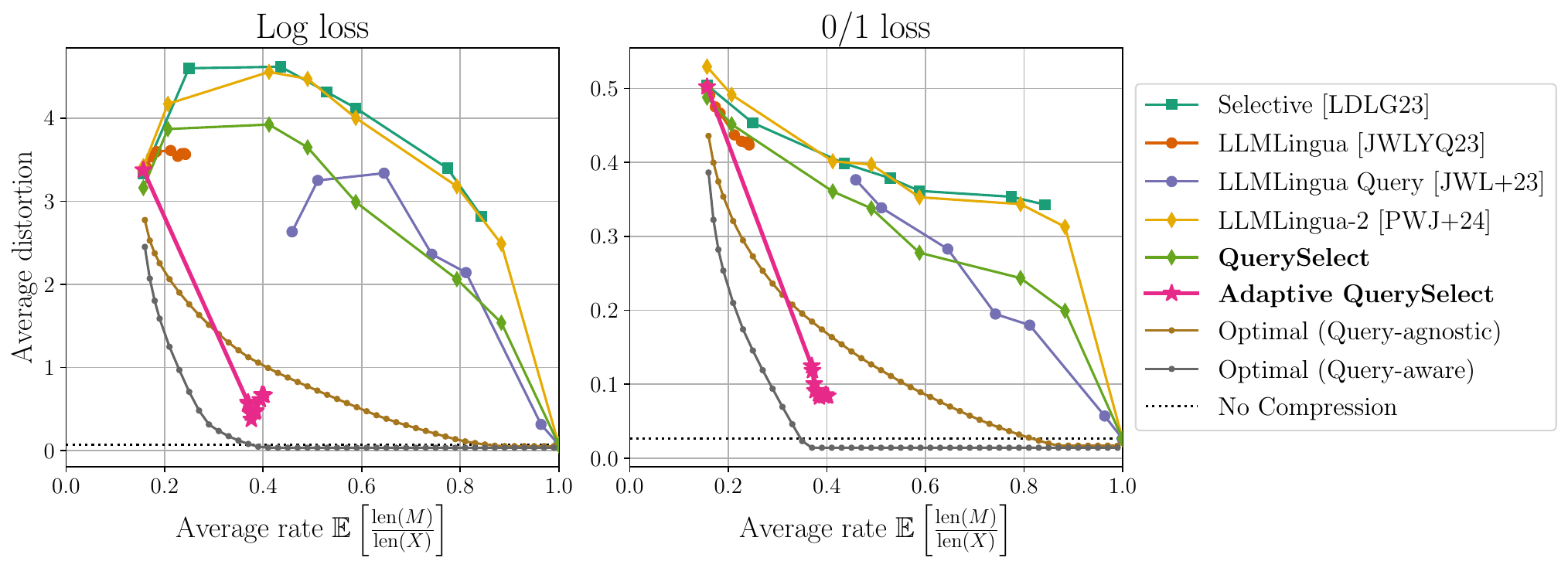}
    \caption{The distortion-rate trade-off of all prompt compression methods compared to the query-aware and query-agnostic theoretical limits on a synthetic dataset with binary prompts. All distortions are computed with the log loss \textbf{(left)} and 0/1 loss \textbf{(right)} distortion metrics formally defined in \eqref{eqn: distortions}. 
    We observe that (1) most existing methods are far from the theoretical limit, suggesting that there is still room for improvement in this field, (2) conditioning on the query allows for a significant improvement, as seen by the performance of the query-aware method \qselect against the query-agnostic LLMLingua-2 \cite{wu2024llmlingua2}, and (3) our proposed method \aqselect, a query-aware and variable-rate adaptation of LLMLingua-2, achieves the best performance among all methods considered, and is the only method to outperform the optimal query-agnostic strategy. 
    }
    \label{fig: comp_forced}
\end{figure}

\looseness=-1


\section{Background and related works}
\label{sec: bg_and_rel-work}

\begin{figure}[tb!]
    \centering
    \begin{minipage}[b]{0.45\textwidth}
        \vspace*{\fill}
        \centering
        \begin{subfigure}{\textwidth}
            \centering
            \begin{tikzpicture}[node distance=2cm,>=latex', example/.style={rectangle callout, draw=black, fill=orange!5}]
    \node [coordinate] (prompt) {};
    \node [block, right of=prompt, node distance=1cm, fill=red!20] (llm) {\texttt{LLM}};
    \node [coordinate, right of=llm, node distance=1cm] (output) {};
    \node [coordinate, below of=prompt, node distance=1cm] (query) {};

    \draw [->] node[anchor=east] {\color{blue} $x$} (prompt) --  (llm);
    \draw [->] (llm) -- node[anchor=west] {\hspace{0.2cm}$\probP_{\hat Y} = \LLMf{\phi}({\color{blue} x}, q)$} (output);
    \draw [->] node[yshift=-1cm, anchor=east] {$q$} (query) -|  (llm);

    \node [name = prompt_ex, example, callout absolute pointer={([xshift=-0.2cm]prompt.east), callout pointer width = 1cm}, above left=28pt and -120pt of prompt, callout pointer shorten=0.2cm] {\parbox[t]{6cm}{\scriptsize It was the \sethlcolor{orange}\hl{best} of \sethlcolor{yellow}\hl{times}, it was the \sethlcolor{orange}\hl{worst} of times\sethlcolor{yellow}\hl{,} it was the \hl{age} of \hl{wisdom}, it was the age of \hl{foolish}ness\hl{,} it was the \hl{epoch} of \hl{belief}, it was the epoch of \hl{incredul}ity\hl{,} it was the \hl{season} of \hl{light}, it was the season of \hl{dark}ness\hl{,} it was the spring of \hl{hope}, it was the winter of \hl{despair}\sethlcolor{orange}\hl{.}}};
    \node [font=\scriptsize, anchor=south west] at ([xshift=-0.1cm, yshift=-0.1cm]prompt_ex.north west) {\bf Prompt};
    
    \node [name = query_ex, example, callout absolute pointer={([xshift=-0.2cm]query.east), callout pointer width = 1cm}, below left= 20pt and -35pt of query, callout pointer shorten=0.2cm] {\parbox[t]{2.8cm}{\scriptsize How were the times?}};
    \node [font=\scriptsize, anchor=south west] at ([xshift=-0.1cm, yshift=-0.1cm]query_ex.north west) {\bf Query};
    
    \node [name = output_ex, example, callout absolute pointer={([xshift=0.2cm]output.west), callout pointer width = 1cm}, below right=23pt and -15pt of output, callout pointer shorten=0.3cm] {\parbox[t]{3cm}{
    \scriptsize
    \begin{tabular}{@{}l@{\enspace}r@{}}
    {\color{black!90} Best and worst.} &$(60\%)$\\
    {\color{black!60} Contrasting.} &$(20\%)$\\
    {\color{black!40} Mixed.} &$(10\%)$\\
    {\color{black!20} Dualistic.} &$(5\%)$\\
    \dots& 
    \end{tabular} }
    }; 
    \node [font=\scriptsize, anchor=south east] at ([xshift=0.1cm,yshift=-0.1cm]output_ex.north east) {\bf Output};

\end{tikzpicture}
            \caption{Black-box LLM without prompt compression.}
            \label{fig: prompt_model}
        \end{subfigure}
    \end{minipage}\hfill
    \begin{minipage}[b]{0.55\textwidth}
        \centering
        \begin{subfigure}{\textwidth}
            \centering
            \begin{tikzpicture}[auto, node distance=2cm,>=latex', example/.style={rectangle callout, draw=black, fill=orange!5}]
    \node [coordinate] (prompt) {};
    \node [block, right of=prompt, node distance=1cm,fill=green!20] (comp) {\comp};
    \node [block, right of=comp, node distance=2cm,fill=red!20] (llm) {\texttt{LLM}};
    \node [coordinate, right of=llm, node distance=1cm] (output) {};
    \node [coordinate, below of=prompt, node distance=1cm] (query) {};

    \draw [->] node[anchor=east] {$x$} (prompt) --  (comp);
    \draw [->] (comp) -- node {\color{blue} $m$} (llm);
    \draw [->] (llm) -- node[anchor=west] {\hspace{0.2cm}$\probP_{\hat Y} = \LLMf{\phi}({\color{blue} m}, q)$} (output);
    \draw [->] node[yshift=-1cm, anchor=east] {$q$} (query) -|  (llm);

    \node [name=comp_promp_ex, example=red!20, callout absolute pointer={([xshift=-0.6cm]llm.west), callout pointer width = 1cm}, above left=7pt and -50pt of llm, callout pointer shorten=0.4cm] {\parbox[t]{4cm}{\scriptsize best times worst, age wisdom foolish, epoch belief incredul, season light dark, hope despair.}};
    
    \node [font=\scriptsize, anchor=south west] at ([xshift=-0.1cm,yshift=-0.1cm]comp_promp_ex.north west) {{\bf Compressed prompt} (query-agnostic)};
    
\end{tikzpicture}
            \caption{Query-agnostic prompt compression}
            \label{fig: unaware_prompt_comp_model}
        \end{subfigure}\\
        \vspace*{10pt}
        \begin{subfigure}{\textwidth}
            \centering
            \begin{tikzpicture}[auto, node distance=2cm,>=latex', example/.style={rectangle callout, draw=black, fill=orange!5}]
    \node [coordinate] (input) {};
    \node [block, right of=input, node distance=1cm, fill=green!20] (comp) {\comp};
    \node [block, right of=comp, node distance=2cm, fill=red!20] (llm) {\texttt{LLM}};
    \node [coordinate, right of=llm, node distance=1cm] (output) {};
    \node [coordinate, below of=input, node distance=1cm] (side info) {};

    \draw [->] node[anchor=east] {$x$} (input) --  (comp);
    \draw [->] (comp) -- node {\color{blue}$m$} (llm);
    \draw [->] (llm) -- node[anchor=west] {\hspace{0.2cm}$\probP_{\hat Y} = \LLMf{\phi}({\color{blue} m}, q)$} (output);
    \draw [->] node[yshift=-1cm, anchor=east] {} (side info) -|  (comp);
    \draw [->] node[yshift=-1cm, anchor=east] {$q$} (side info) -|  (llm);
 
    \node [name=comp_promp_ex, example=red!20, callout absolute pointer={([xshift=-0.6cm]llm.west), callout pointer width = 1cm}, above left=7pt and -20pt of llm, callout pointer shorten=0.4cm] {\parbox[t]{1.7cm}{\scriptsize best worst.}};
    
    \node [font=\scriptsize, anchor=south west] at ([xshift=-0.1cm,yshift=-0.1cm]comp_promp_ex.north west) {{\bf Compressed prompt} (query-aware)};
   
\end{tikzpicture}
            \caption{Query-aware prompt compression}
            \label{fig: aware_prompt_comp_model}
        \end{subfigure}
    \end{minipage}
    \caption{Model for prompt compression in LLMs. \enspace (a): Without prompt compression, the LLM takes a long {\small\textbf{Prompt}} and {\small\textbf{Query}} as input, and produces an {\small\textbf{Output}} distribution. \enspace (b) and (c): The prompt is passed through a compressor to obtain a shorter {\small\textbf{Compressed prompt}} and the LLM takes this compressed prompt and query as input instead. \enspace (b) The compressor does not have access to the query, and preserves all highlighted tokens. \enspace (c) The compressor has access to the query, and preserves only the tokens highlighted in orange.}
    \label{fig: models}
\end{figure}
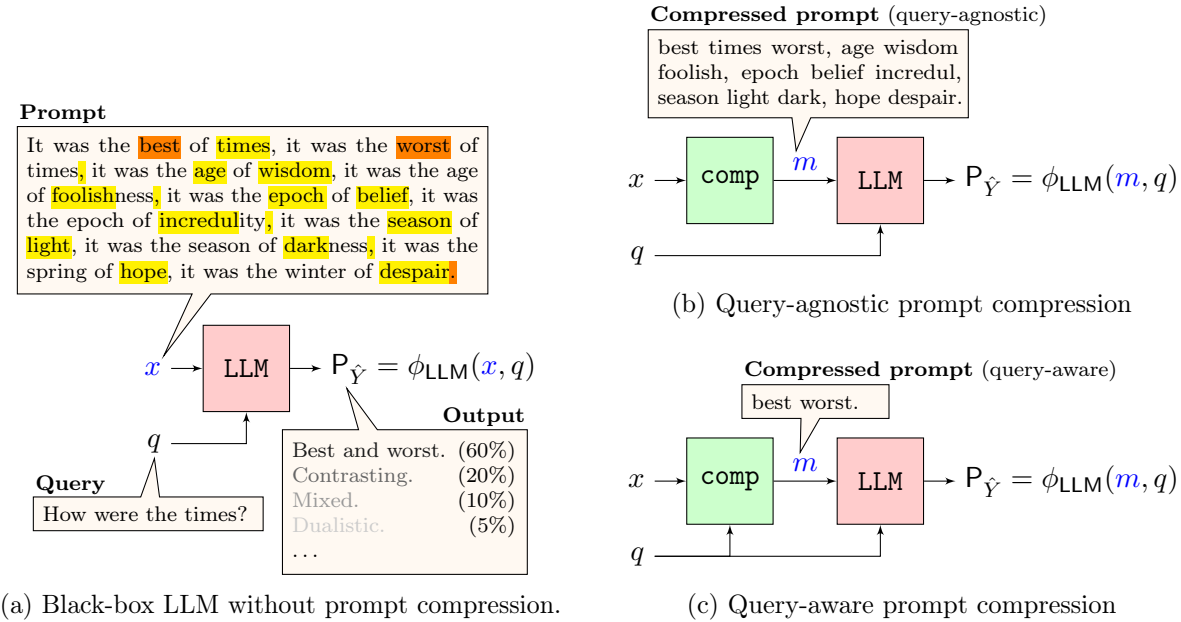

Long prompts slow the inference process due to the increase in the number of tokens that the LLM has to process. It is also known that very long prompts can cause LLMs to ignore or ``forget'' parts of the input and produce erroneous answers \cite{liu2023lost}. Therefore, studying how these prompts can be compressed is essential. As illustrated in \prettyref{fig: models}, we wish to design a compressor that, upon receiving the prompt, produces a ``compressed'' version (in the sense that it should have fewer tokens than the prompt) called the \emph{compressed prompt}, such that a target LLM is able to give answers that are ``close enough,'' in the sense of some appropriately chosen metric, to the ground truth. Though similar in spirit to text summarization, prompt compression has the advantage that the compressed prompt is not required to be human-readable.

All prompt compression methods belong to one of two groups: those that compress the prompt into \textit{soft prompts} and those that compress the prompt into \textit{hard prompts}. In soft-prompt compression, the compressor
is trained to transform the input prompt into a set of embedding vectors (sometimes referred to as ``soft tokens'') that do not map back into the token space. These methods, including Gist Tokens \cite{mu2023learning}, AutoCompressor \cite{chevalier2023adapting}, and In-Context Auto-Encoder \cite{ge2024incontext} are trained end-to-end and require specialized fine-tuning of the target LLM to interpret the soft prompt inputs.

In this work, we focus instead on methods that compress the prompt into hard prompts, where the compressor's output is a set of tokens. While it is technically feasible to fine-tune the target LLM in this setting, it is unnecessary and often avoided because the utility of this setting is compressing prompts for black-box models that are not fine-tuned. These methods often use either the target LLM, or a smaller and faster LLM, to compress the prompt. 
The basic idea behind all these methods is to identify the tokens that are ``most relevant,'' per an appropriate metric, and retain as many of them in the compressed prompt as possible. 
These methods include Selective Context \cite{li2023compressing},
LLMLingua \cite{jiang2023llmlingua}, LLMLingua-2 \cite{wu2024llmlingua2},
and LongLLMLingua \cite{jiang23longllmlingua}. More details on these works can be found in \prettyref{sec: exp_results}. Precursors to the prompt compression works include text compression methods, which have the added constraint that the compressed text is human-readable \cite{niu2019deleter,schumann2020discrete,ghalandari2022efficient}. Prompt compression methods are different from these in that the text only needs to be interpretable by the target LLM, not by a human.

We offer a framework for hard-prompt compression methods where we assume that a query is provided in addition to the compressed prompt during the target LLM inference call. Functionally, this is the most useful interpretation of prompt compression since it clarifies that the goal is to compress the prompt for a given query/task. This setting is also used in the LLMLingua and LongLLMLingua works, and is slightly more general than the setting where no query is used (in that case, the query can be empty).

\subsection*{Notation} \label{sec: notation}
{\bf General. } We use $\triangleq$ to signify a definition.
For a set $\calX$, with $x_i \in \calX$ for $i = 1,\dots,n$, we represent the sequence $(x_1,\dots,x_n)$ by $x^n \in \calX^n$, which is short for $\calX \times \dots \times \calX$. We use $\calX^*$ to denote $\bigcup_{n\geq 1} \calX^n$, the set of all nonempty finite-length sequences on $\calX$. In general, for $y \in \calX^n$, we denote the length of $y$ by $\len(y) = n$. We denote the cardinality of a set $\calX$ by $|\calX|$. We use $\bbR_+$ to denote the set of nonnegative real numbers. For a set $\calX = \{x_1,\dots,x_k\}$, we use the boldface $\left(\bA_x\right)_{x\in \calX}$ to denote the vector $(\bA_{x_1},\dots, \bA_{x_k})$ indexed by elements of $\calX$. We also write $\bbR_+^\calX = \{(\bv_x)_{x\in \calX} : \bv_x \in \bbR_+ \text{ for each } x\in \calX\}$. We use $\indic$ to denote the indicator function, which takes the value $1$ when its argument is true and $0$ otherwise. The infimum and supremum of a set of values is denoted using $\inf$ and $\sup$ respectively. We use $\mathbf{0}$ and $\mathbf{1}$ to denote the vector of appropriate dimension with all elements equal to $0$ and $1$ respectively.

\paragraph{Probability.} We deal with discrete probability distributions on finite sets, for which we use calligraphic letters to denote the set (e.g.\ $\calX$), uppercase letters to denote the random variable (r.v., e.g.\ $X$) and lowercase letters to denote samples of the r.v.\ (e.g.\ $x$).
The set of all probability distributions on the set $\calX$ is denoted by $\probset(\calX)$. The probability distribution of the r.v.\ $X$ on $\calX$ is denoted by $\probP_X \in \probset(\calX)$, and we say $X \sim \probP_X$. For a (measurable) function $f$, the expectation of the r.v.\ $f(X)$ is denoted by $\bbE_{X \sim \probP_X}\left[f(X)\right]$, or $\bbE\left[f(X)\right]$, with the subscripts dropped when the distribution and/or r.v.'s are clear from context. The degenerate probability distribution with mass 1 at $x \in \calX$ is represented by $\delta_x \in \probset(\calX)$. Conditional probabilities from $\calX$ to $\calY$ are denoted as $\probP_{Y|X}$, and for each $x \in \calX$, we denote the distribution on $\calY$ as $\probP_{Y|X}(\cdot | x)$.

\paragraph{Problem-setup-specific notation.}  In our model, we use $\calV$ to refer to the vocabulary of the prompt. We use the uppercase letters $X$ to refer to the prompt, $M$ to refer to the compressed prompt, $Q$ to refer to the query and $Y$ to refer to the answer as random variables (and corresponding calligraphic and lowercase letters to denote the set and samples of the r.v.\ respectively). We use $\probP_{\hat Y}$ to refer to the output distribution of the LLM, which is modeled as the function $\LLMf{\phi}$.
For a given prompt $x$, we use $\calM_x$ to refer to the set of possible compressed prompts, which is the set of all sequences of length smaller than $\len(x)$. To denote the distortion measure, we use $\dist$, which can be the log loss $\distlog$, the 0/1 loss $\distacc$, or a semantic distortion measure. We denote the query-agnostic distortion-rate function at rate $R$ by $D^*(R)$. The average query-aware distortion-rate function is denoted by $\bar D^*(R)$, and the conditional query-aware distortion-rate function for query $q \in \calQ$ is given by $D^*_q(R)$.

\looseness=-1


\section{Distortion-rate function for prompt compression} \label{sec: rate_dist}

We first formalize the problem of prompt compression, and then develop a rate-distortion framework to study its fundamental limits. In particular, we define and characterize the \emph{distortion-rate function}, which describes the optimal trade-off between how much and how well the prompt is compressed.

\subsection{A formal model for prompt compression}\label{sec: formal-pc}

\paragraph{Black-box LLM.} 
As depicted in \prettyref{fig: prompt_model}, we assume that we have a pretrained LLM which takes a pair of the \emph{prompt} $x \in \calV^{n_x}$ and the \emph{query} $q \in \calV^{n_q}$, $(x, q) \in \calV^{n_x + n_q}$ as inputs, where $\calV$ refers to the vocabulary of the LLM (i.e., the set of all tokens), and $n_x$ and $n_q$ are the lengths of the prompt and query respectively. 
The \emph{output} of the LLM is given by $\probP_{\hat Y} = \LLMf{\phi}(x, q) \in \probset(\calV^*)$, where $\LLMf{\phi} : \calV^* \to \probset(\calV^*)$ is a deterministic function which maps a sequence of tokens to a probability distribution on sequences of tokens. We denote the set of all prompts $x$ by $\calX$ and the set of all queries $q$ by $\calQ$. Clearly, they are both equal to $\calV^*$, but this notation is useful in the subsequent discussion. 
We model prompt-query pairs $(X, Q)$ as random variables drawn according to the joint distribution $\probP_{XQ} \in \probset(\calX \times \calQ)$. 

In cases where we have a correct answer $y \in \calY = \calV^*$ corresponding to the pair $(x,q)$, we characterize the ``closeness'' of the LLM output $\probP_{\hat Y} = \LLMf{\phi}(x,q)$ to the answer $y$ using a distortion measure $\dist : \calY\times \probset(\calY) \to [0,\infty]$. Two possible choices of $\dist$ are the log loss $\distlog$ and the 0/1 loss $\distacc$, given by
\begin{equation}
    \distlog(y, \probP_{\hat Y}) = \log \frac{1}{\probP_{\hat Y}(y)} \quad\text{ and }\quad \distacc(y, \probP_{\hat Y}) = \indic\Big\{y \neq \argmax_{\hat y}\probP_{\hat Y}(\hat y)  \Big\}. \label{eqn: distortions}
\end{equation}
These are respectively the cross-entropy loss between the distributions $\delta_y$ and $\probP_{\hat Y}$, and the prediction error. 
When dealing with natural language queries, a semantic distortion metric such as RougeL \cite{lin2004rouge} or BertScore \cite{BergerBook} is a more approrpiate choice. Additionally, there is no single answer that is uniquely correct. To account for this variability in what qualifies as a correct answer, we model the \emph{answer} as a random variable $Y$ drawn from the distribution $\probP_{Y|XQ}(\cdot | x,q)$, which depends on the prompt $x$ and the query $q$. This induces the 
joint distribution $\probP_{XQY} = \probP_{XQ}\probP_{Y|XQ}$.
We then characterize the ``closeness'' between the correct answer and the LLM output by the average distortion, given by $\bbE_{Y \sim \probP_{Y|XQ}(\cdot | x,q)} \left[\dist\big(Y, \LLMf{\phi}(x,q)\big)\right]$. With $\dist= \distlog$, this is the cross-entropy loss between the distributions $\probP_{Y|XQ}(\cdot | x,q)$ and $\probP_{\hat Y}=\LLMf{\phi}(x,q)$, and with $\dist =\distacc$, this is the prediction error probability.

\paragraph{Prompt compression.} As described in \prettyref{sec: bg_and_rel-work}, we consider two types of prompt compression: query-agnostic and query-aware. \prettyref{fig: unaware_prompt_comp_model} depicts \emph{query-agnostic} prompt compression, where the goal is to design a {\em compressor} denoted by $\comp$ as a possibly random function from $\calX$ to $\calM$, i.e., the set of all compressed prompts. The compressor takes in the prompt $X \sim \probP_X$ and produces a \emph{compressed prompt} $M = \comp(X)$ with $\len(M) \leq \len(X)$. 
Then, the user replaces the original prompt $X$ with the compressed prompt $M$ and provides the LLM with the input $(M,Q)$. The output distribution produced by the LLM is $\probP_{\hat Y} = \LLMf{\phi}(M,Q)$. To quantify the performance of this compressor $\comp$, two quantities are of interest:
\begin{enumerate}[itemsep=0ex]
    \item[(1)] the \emph{rate} $\bbE\left[\frac{\len(M)}{\len(X)}\right]$, to measure \emph{how much} the prompt is compressed, and
    \item[(2)] the \emph{distortion} $\bbE\left[\dist\big(Y, \LLMf{\phi}(M,Q)\big)\right]$, to measure \emph{how well} the prompt is compressed,
\end{enumerate}
where both expectations are taken with respect to the joint distribution $\probP_{MXQY}$.
If we compress $x$ to a very low rate, it is likely that the compressed prompt $m$ may not retain all of the information in $x$ that is necessary for the query $q$, leading to an output $\LLMf{\phi}(m,q)$ that is very different from $\probP_{Y|XQ}(\cdot | x,q)$ and consequently, a high distortion. Thus, there is a trade-off between these quantities, and we formalize and study this trade-off via the distortion-rate function in \prettyref{sec: prompt_comp_RD}.

We can also model \emph{query-aware} prompt compression similarly, with the difference being that the compressor also has access to the query $q \in \calQ$, as shown in \prettyref{fig: aware_prompt_comp_model}. In addition to the average rate and distortion computed over all queries, it is also interesting to consider the rate and distortion \emph{for each query}. To simplify the presentation, we first focus on the query-agnostic setting.

\subsection{Rate-distortion formulation for query-agnostic prompt compression} \label{sec: prompt_comp_RD}

\paragraph{Distortion-rate function $D^*(R)$.} The \emph{distortion-rate function} for any compression problem characterizes the fundamental trade-off between the distortion and the rate~\cite{shannon1959coding, BergerBook, cover-thomas, El_Gamal_Kim_2011}. We say that the pair $(R,D)$ is \emph{achievable} if there exists a compressor with rate at most $R$ and distortion at most $D$.
For a given rate $R$, the distortion-rate function $D^*(R)$ is the smallest distortion that can be achieved by a compressor with rate at most $R$. 
Formally, it is defined as 
\begin{align}
    D^*(R) &\triangleq \inf \{D \geq 0 \mid (R,D)\text{ is achievable}\}  \nonumber \\
    &= \inf \{D \geq 0 \mid \text{there exists a compressor with rate}\leq R \text{ and distortion}\leq D\}.\label{eqn: RD_prompt} 
\end{align}
We are now ready to characterize the distortion-rate function for prompt compression. 

\paragraph{$D^*(R)$ for query-agnostic prompt compression.} 
Recall that our quantities of interest are the rate $\bbE\left[\frac{\len(M)}{\len(X)}\right]$, and the distortion $\bbE\left[\dist\big(Y, \LLMf{\phi}(M,Q)\big)\right]$, with both expectations taken with respect to $\probP_{MXQY}$,
where $M = \comp(X)$ for a random function $\comp$. 
By the functional representation lemma \cite{El_Gamal_Kim_2011, hajek1979funcrep}, a random function from $\calX$ to $\calM$ is equivalent to a conditional distribution $\probP_{M|X}$. Thus, we can equivalently model the compressor as a conditional distribution $\probP_{M|X}$, and \eqref{eqn: RD_prompt} is explicitly written as
\begin{equation}
\begin{aligned}
D^*(R) =  \quad \inf_{\probP_{M|X}} \quad & \bbE\left[\dist\big(Y, \LLMf{\phi}(M,Q)\big)\right]\\
\textrm{s.t.} \quad & \probP_{M|X} \text{ is a compressor, and}\\
            &  \bbE\left[\frac{\len(M)}{\len(X)}\right] \leq R, 
\end{aligned} \label{eqn: RD_prompt_explicit}
\end{equation}
with both expectations taken with respect to the joint distribution $\probP_{MXQY} = \probP_{M|X}\probP_{XQY}$ induced by the compressor $\probP_{M|X}$. 
The constraint ``$\probP_{M|X}$ is a compressor'' is short for the following requirements: (1) it is a conditional distribution, i.e., for each $x \in \calX$, $\sum_{m \in \calM} \probP_{M|X}(m| x) = 1$, (2) if $\len(m) > \len(x)$, then $\probP_{M|X}(m| x) = 0$, and (3) if $\len(m) = \len(x)$, then $\probP_{M|X}(m| x) = 0$ unless $m = x$. This means that the compressor either strictly reduces the length of the prompt or does no compression and retains the original prompt.

Note that all of the expressions in the objective and the constraints in \eqref{eqn: RD_prompt_explicit} are linear in $\probP_{M|X}$. Hence, the optimization problem is simply a linear program, which is simple from an optimization perspective \cite{boyd2004convex, dantzig2002linear}. However, the dimensions of this problem are still large and solving the linear program directly quickly becomes infeasible as the lengths of the prompts increase. In \prettyref{sec: LP_dual}, we deal with this optimization problem directly, and show that the dual of the linear program provides an exact, practically realizable solution. 


\paragraph{Connections to information-theoretic setups. }
We provide a brief overview of rate-distortion theory from the information theory literature in \prettyref{app: rate-dist-info-theory} and describe how our model compares. In particular, we note that our model for prompt compression closely resembles the setup of compression with side-information for function computation 
\cite{wyner-ziv1,wyner-ziv2,weissman2006,yamamoto_func_source_coding}, where both the encoder and the decoder are part of the system design. More recently, there has also been a growing interest in computing the distortion-rate functions of these classical setups for real-world datasets \cite{NERD,yang2023estimating,kim2024estimation}. However, in our model for prompt compression, only the encoder (which is the compressor) can be designed, hence our model is one of compression \emph{for a fixed decoder}. Such a model has not been actively studied in the information theory literature before, but in the next subsection, we show that the distortion-rate function can be written as an explicit linear program in terms of this fixed decoder.

\subsection{Linear program formulation of the distortion-rate function} \label{sec: LP_dual}

Having expressed the distortion-rate function for prompt compression as a linear program, we now focus on obtaining solutions to these linear programs. We first rewrite \eqref{eqn: RD_prompt_explicit} as an explicit linear program using optimization-theoretic notation, and hide the probabilistic notation involving expectations and conditional probabilities in the parameters of the linear program, which are defined below. 

\begin{proposition}[Primal LP] \label{prop: lp} The distortion-rate function for query-agnostic prompt compression \eqref{eqn: RD_prompt_explicit} is given by the solution to the linear program 
\begin{equation}
\begin{aligned}
D^*(R) =  \quad \inf_{\left(\zvec_x \in \bbR_{+}^{\calM_x}\right)_{x \in \calX}} \quad & \sum_{x \in \calX} \distvec_{x}^\top \zvec_x\\
\textrm{s.t.} \qquad 
            &  \sum_{x \in \calX} \ratevec_{x}^\top \zvec_x \leq R, \quad \mathbf{1}^\top \zvec_x = 1,  \quad \forall\, x \in \calX,
\end{aligned} \label{eqn: RD_prompt_LP} \tag{LP}
\end{equation}
where for each $x \in \calX$, $\calM_x$ denotes the set of compressed prompts associated to $x$, i.e., the set of all possible token sequences of length smaller than $\len(x)$, the vectors $\zvec_x\in \bbR_+^{\calM_x}$ are the optimization variables and the constants $\distvec_x, \ratevec_x \in \bbR_+^{\calM_x}$ with components indexed by $m \in \calM_x$ are given by
\begin{equation}
    \distvec_{x,m} \triangleq \probP_X(x)\,\bbE\left[\dist(Y, \LLMf{\phi}(m, Q))\right]\quad\text{and} \quad
    \ratevec_{x,m} \triangleq \probP_X(x)\,\frac{\len(m)}{\len(x)}, \qquad m \in \calM_x, \label{eqn: lp_consts}
\end{equation}
with the expectation taken with respect to $\probP_{QY|MX}(\cdot,\cdot|m,x)$.
\end{proposition}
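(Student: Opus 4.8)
The plan is to show that \eqref{eqn: RD_prompt_LP} is nothing but a rewriting of \eqref{eqn: RD_prompt_explicit} after (i) reparametrizing the compressor and (ii) simplifying the objective and the rate constraint using the tower property. First I would address the reparametrization. By the functional representation lemma, already invoked in the excerpt, the compressor $\comp$ is equivalent to a conditional distribution $\probP_{M|X}$. The constraint ``$\probP_{M|X}$ is a compressor'' forces $\probP_{M|X}(m|x) = 0$ whenever $\len(m) > \len(x)$, and also whenever $\len(m) = \len(x)$ with $m \neq x$; so for each $x$, the only admissible $m$ are those in $\calM_x$ (sequences strictly shorter than $x$) together with $m = x$ itself. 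I would absorb the trivial ``no compression'' option $m = x$ into the description — either by noting that $x \in \calM_x$ can be included at rate contribution $\probP_X(x)$ (i.e.\ rate $1$ for that $x$), or by treating $\calM_x$ as already containing this padding symbol; this is a bookkeeping point I would state cleanly but not belabor. Setting $\zvec_x \triangleq \big(\probP_{M|X}(m|x)\big)_{m \in \calM_x} \in \bbR_+^{\calM_x}$, the requirement that $\probP_{M|X}(\cdot|x)$ be a probability distribution supported on $\calM_x$ is exactly $\mathbf 1^\top \zvec_x = 1$ with $\zvec_x \geq 0$.

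Next I would rewrite the two functionals. For the rate, using $\probP_{MXQY} = \probP_{M|X}\probP_{XQY}$ and iterated expectation,
\begin{equation}
\bbE\!\left[\frac{\len(M)}{\len(X)}\right]
= \sum_{x \in \calX} \probP_X(x) \sum_{m \in \calM_x} \probP_{M|X}(m|x)\,\frac{\len(m)}{\len(x)}
= \sum_{x \in \calX} \ratevec_x^\top \zvec_x,
\end{equation}
with $\ratevec_{x,m} = \probP_X(x)\,\len(m)/\len(x)$ as in \eqref{eqn: lp_consts}; the inner expectation over $Q,Y$ is vacuous here since $\len(M)/\len(X)$ does not depend on them. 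For the distortion, conditioning first on $(M,X)$ and then peeling off $\probP_{M|X}$,
\begin{equation}
\bbE\!\left[\dist\big(Y,\LLMf{\phi}(M,Q)\big)\right]
= \sum_{x\in\calX}\probP_X(x)\sum_{m\in\calM_x}\probP_{M|X}(m|x)\;
\bbE_{\probP_{QY|MX}(\cdot,\cdot|m,x)}\!\big[\dist\big(Y,\LLMf{\phi}(m,Q)\big)\big]
= \sum_{x\in\calX}\distvec_x^\top\zvec_x,
\end{equation}
which matches the definition of $\distvec_{x,m}$ in \eqref{eqn: lp_consts}. The key structural point making this valid is the Markov chain $M \markov X \markov (Q,Y)$ — i.e.\ $M$ is generated from $X$ alone, so conditionally on $(M,X)$ the distribution of $(Q,Y)$ is the same as conditionally on $X$; this is what lets us pull the term $\bbE[\dist(Y,\LLMf{\phi}(m,Q))\mid M=m,X=x]$ out as a constant $\distvec_{x,m}/\probP_X(x)$ that does not itself depend on the optimization variable. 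I would state this Markov property explicitly and note it follows from $M = \comp(X)$.

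Finally I would assemble the pieces: substituting the two displays into \eqref{eqn: RD_prompt_explicit}, the objective becomes $\sum_x \distvec_x^\top\zvec_x$, the rate constraint becomes $\sum_x \ratevec_x^\top\zvec_x \le R$, and the ``is a compressor'' constraint becomes $\zvec_x \ge 0$, $\mathbf 1^\top\zvec_x = 1$ for all $x$, with the domain of $\zvec_x$ being $\bbR_+^{\calM_x}$. Since the correspondence $\probP_{M|X}\leftrightarrow(\zvec_x)_{x\in\calX}$ is a bijection between admissible compressors and feasible points of \eqref{eqn: RD_prompt_LP} that preserves both objective and constraint values, the two infima coincide, giving \eqref{eqn: RD_prompt_LP}. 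The one genuinely delicate point — and the part I would be most careful about — is the treatment of the ``no compression'' / length-equality case and, relatedly, whether $\calX$ (hence the index set of $\zvec_x$'s) should be taken finite or whether one needs a truncation/limiting argument for the linear program to be well-posed; everything else is a routine application of linearity and the tower property. I would handle the former by explicitly including $m=x$ as an allowed ``compressed prompt'' in $\calM_x$ (consistent with requirement (3) in the excerpt) and note that in practice $\calX$ is restricted to a finite support, so \eqref{eqn: RD_prompt_LP} is a genuine finite-dimensional LP.
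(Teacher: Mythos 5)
Your proof is correct and takes essentially the same route as the paper's one-line argument: reparametrize $\probP_{M|X}(\cdot\mid x)$ as $\zvec_x$, use linearity and iterated expectation, and reduce the index set to $\calM_x$. You add two useful refinements the paper's sketch leaves implicit — making explicit the Markov chain $M$–$X$–$(Q,Y)$ that guarantees $\distvec_{x,m}$ is a genuine constant independent of the compressor being optimized over, and flagging the bookkeeping around the $m=x$ ``no compression'' option, which is needed for the LP over $\bbR_+^{\calM_x}$ to have the same feasible set (and hence infimum) as \eqref{eqn: RD_prompt_explicit}.
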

\vspace*{-10pt}
\begin{proof}[]
    This follows immediately from \eqref{eqn: RD_prompt_explicit} by defining the constants $\distvec_x, \ratevec_x \in \bbR_+^{\calM_x}$ for each $x \in \calX$ as given in \eqref{eqn: lp_consts}, and taking $\zvec_x$ to be $\probP_{M|X}(\cdot | x)$. We use the fact that $\probP_{M|X}(m|x) = 0$ when $\len(m) > \len(x)$ to reduce the dimension of $\zvec_x$ from $\calM$ to $\calM_x$ to obtain \eqref{eqn: RD_prompt_LP}.
\end{proof}

For our experimental setup in \prettyref{sec: exp_results}, we see that dimension of the linear program is too large to solve \eqref{eqn: RD_prompt_LP} directly using off-the-shelf solvers. Fortunately, the dual of the linear program can be written more concisely, and can also be solved using a relatively simple algorithm.

\begin{theorem}[Dual LP] \label{thm: dual_lp} The distortion-rate function for query-agnostic prompt compression \eqref{eqn: RD_prompt_explicit} is given by the solution to the dual of the linear program \eqref{eqn: RD_prompt_LP}, i.e.,
\begin{equation}
D^*(R) =  \sup_{\lambda \geq 0} \left\{ - \lambda R + \sum_{x \in \calX} \min_{m \in \calM_x} \left[ \distvec_{x,m} + \lambda \ratevec_{x,m} \right] \right\}.
 \label{eqn: RD_prompt_dual_LP} \tag{dual-LP}    
\end{equation}
\end{theorem}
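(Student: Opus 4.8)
The plan is to derive \eqref{eqn: RD_prompt_dual_LP} as the Lagrangian dual of the primal linear program \eqref{eqn: RD_prompt_LP}, and then invoke strong duality for linear programs to conclude that the two optimal values coincide. First I would form the Lagrangian of \eqref{eqn: RD_prompt_LP}, dualizing only the single scalar rate constraint $\sum_{x} \ratevec_x^\top \zvec_x \le R$ with a multiplier $\lambda \ge 0$, while keeping the simplex constraints $\mathbf{1}^\top \zvec_x = 1$, $\zvec_x \ge 0$ as explicit constraints on the primal variables. This gives
\begin{equation*}
L(\{\zvec_x\}, \lambda) = \sum_{x \in \calX} \distvec_x^\top \zvec_x + \lambda\Big(\sum_{x \in \calX}\ratevec_x^\top \zvec_x - R\Big) = -\lambda R + \sum_{x\in\calX}(\distvec_x + \lambda \ratevec_x)^\top \zvec_x,
\end{equation*}
so the dual function is $g(\lambda) = \inf_{\{\zvec_x \in \Delta(\calM_x)\}} L(\{\zvec_x\},\lambda)$, where each $\zvec_x$ ranges over the probability simplex on $\calM_x$.

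The next step is to evaluate this inner infimum in closed form. Because the objective decouples across $x \in \calX$ and each $\zvec_x$ lives in an independent simplex, the minimization separates: $\inf_{\zvec_x \in \Delta(\calM_x)} (\distvec_x + \lambda \ratevec_x)^\top \zvec_x = \min_{m \in \calM_x}\big[\distvec_{x,m} + \lambda \ratevec_{x,m}\big]$, since a linear function over a simplex attains its minimum at a vertex. Substituting back yields exactly
\begin{equation*}
g(\lambda) = -\lambda R + \sum_{x\in\calX}\min_{m\in\calM_x}\big[\distvec_{x,m} + \lambda \ratevec_{x,m}\big],
\end{equation*}
and the dual problem $\sup_{\lambda \ge 0} g(\lambda)$ is precisely \eqref{eqn: RD_prompt_dual_LP}. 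It remains to justify that $\sup_{\lambda \ge 0} g(\lambda) = D^*(R)$, i.e., that there is no duality gap. For this I would appeal to strong duality for linear programs: \eqref{eqn: RD_prompt_LP} is a feasible linear program (the constraints are always satisfiable, e.g. by the identity compressor $\zvec_x = \delta_x$, which has rate $1$; for $R < 1$ feasibility is part of the standing setup, and otherwise the problem is trivial), and its optimal value is finite and bounded below by $0$ since the objective is a nonnegative combination of the nonnegative constants $\distvec_{x,m}$. Under these conditions LP strong duality (e.g.\ \cite{boyd2004convex, dantzig2002linear}) gives that the primal and dual optima are equal, which combined with \prettyref{prop: lp} establishes the claim. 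One can alternatively phrase this via Slater's condition, noting that the only inequality constraint is the affine rate constraint, for which affine constraint qualification suffices.

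The main obstacle is not the algebra — the decoupling and the vertex argument are routine — but rather the careful handling of feasibility and the finiteness of the primal value in the edge cases (whether $R \ge 1$, whether the distortions $\distvec_{x,m}$ can be $+\infty$ under the log loss when $\LLMf{\phi}(m,q)$ assigns zero probability to $y$, and whether the set $\calX$ is finite so that the sums and the LP are genuinely finite-dimensional). I would address the potential $+\infty$ issue by noting that for any fixed $x$ the identity-compression choice $m = x$ is always available in $\calM_x$ with finite distortion (by the standing assumptions on the dataset), so each inner $\min$ is finite and the dual function is well-defined and real-valued for every $\lambda \ge 0$; thus the LP has a finite optimal value and strong duality applies without qualification. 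For infinite $\calX$ one works with the appropriate limiting/compactness argument or simply notes that in the setups of interest $\calX$ is finite.
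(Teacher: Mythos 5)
Your proof is correct, but it takes a genuinely different (and arguably cleaner) route than the paper's. The paper forms the \emph{full} Lagrangian of \eqref{eqn: RD_prompt_LP}, dualizing both the scalar rate constraint (with $\lambda\ge 0$) and each equality constraint $\mathbf{1}^\top \zvec_x = 1$ (with an auxiliary multiplier $\mu_x\in\bbR$), keeping only nonnegativity $\zvec_x\ge 0$ explicit. It then invokes a minimax theorem to swap $\inf$ and $\sup$, observes that the inner infimum over unbounded $\zvec_x\ge 0$ is either $0$ or $-\infty$ (giving a pointwise constraint on $\distvec_{x,m}+\lambda\ratevec_{x,m}+\mu_x$), and finally optimizes out the $\mu_x$ to recover the $\min_{m\in\calM_x}$ term. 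You instead dualize only the rate constraint and keep the entire simplex constraint $\zvec_x\in\Delta(\calM_x)$ as an explicit domain; the inner minimization then decouples over $x$ and, because a linear function on a compact simplex attains its minimum at a vertex, immediately produces $\min_{m\in\calM_x}[\distvec_{x,m}+\lambda\ratevec_{x,m}]$ with no auxiliary multipliers. Your approach buys a simpler bookkeeping (no $\mu_x$'s to introduce or eliminate) and an inner infimum over a compact set, which makes the dual function manifestly real-valued; the price is that the no-duality-gap step must be justified directly, which you do correctly by LP strong duality / Slater's refined condition for an affine inequality. The paper's approach instead routes that justification through an explicit minimax swap. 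Both are valid, and your careful attention to the edge cases (finiteness of $\distvec_{x,m}$ under log loss, finiteness of $\calX$ and $\calM_x$, feasibility when $R<1$) is appropriate and is not spelled out in the paper's proof.
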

\begin{hproof}[] This follows by taking the dual \cite{boyd2004convex} of the linear program \eqref{eqn: RD_prompt_LP} and simplifying the resulting expression. For a complete proof, refer to \prettyref{app: proofs}.
\end{hproof}
{

\paragraph{Algorithm to solve \eqref{eqn: RD_prompt_dual_LP}. }  While the optimization problem in \eqref{eqn: RD_prompt_dual_LP} may seem difficult to solve, with its max-min structure and the supremum over the continuous variable $\lambda$, it provides a neat geometric interpretation which allows for a computationally simple algorithm, as given in \prettyref{alg: dual-lp}. 
It takes as input $R$, $\left( \distvec_x\right)_{x \in \calX}$, $\left( \ratevec_x\right)_{x \in \calX}$, and returns as output the distortion-rate function at $R$, i.e., $D^*(R)$. In \prettyref{app: dual-lp-eg}, we show that the output is indeed $D^*(R)$, and also provide a step-by-step illustration of the algorithm on an artificial example. A high-level description of the algorithm is given below. 

Before presenting the algorithm, it is useful to define the following geometric object. The \emph{lower-left convex envelope} of a set of points in $\bbR^2_+$ is the largest convex function that lies below and to the left of the points, as shown in \prettyref{fig: algo_example_small} for the points $\left\{\left( \ratevec_{x,m}, \distvec_{x,m}\right)\right\}_{m \in \calM_x}$ for a fixed $x \in \calX$. 
Let $k_x$ be the number of points on this envelope, then these $k_x$ points are exactly the minimizers of $\min_{m \in \calM_x} \left[ \distvec_{x,m} + \lambda \ratevec_{x,m} \right]$ for some $\lambda \geq 0$. Solving this inner minimization problem of \eqref{eqn: RD_prompt_dual_LP} is thus easy, and amounts to simply finding the points labelled as ``$m^{(x)}$'' on the lower-left convex envelope, ordered from left to right, as done in Lines 3--4 of \prettyref{alg: dual-lp}. Let the magnitudes of slopes of the line segments on the envelope be given by the ``$\lambda^{(x)}$'' terms in decreasing order (Lines 5--6). Also letting $\lambda_0^{(x)} = +\infty$ and $\lambda_{k_x}^{(x)} = 0$, observe that for $i = 1,\dots,k_x$, $m_i^{(x)}$ minimizes $\distvec_{x,m} + \lambda \ratevec_{x,m}$ over $m$ for $\lambda \in \big[\lambda_i^{(x)}, \lambda_{i-1}^{(x)}\big)$. Importantly, it is enough to consider just these sequences ``$m^{(x)}$'' and ``$\lambda^{(x)}$'' sequences computed for all $x \in \calX$ (Lines 2--6) to solve \eqref{eqn: RD_prompt_dual_LP}, instead of the entire set $\calM_x$ and the continuum of all positive real numbers $\lambda$ respectively. This makes the problem tractable even for large values of $|\calM_x|$.
\begin{figure}[!htbp]
    \centering
  \begin{center}
  \scalebox{0.9}{
    \begin{tikzpicture}[scale=9]
    \draw[->] (0,0) -- (0.5,0) node[anchor=north] {$\ratevec_{x,m}$};
    \draw[->] (0,0) -- (0,0.5) node[anchor=east] {$\distvec_{x,m}$};
    \foreach \x in {0.1,0.2,0.3,0.4} {
        \draw (\x,0.01) -- (\x,-0.01) node[anchor=north] {\footnotesize \x};
        \draw (0.01,\x) -- (-0.01,\x) node[anchor=east] {\footnotesize\x};
        \draw[line width=0.1mm, dotted] (\x,0) -- (\x,0.45);
        \draw[line width=0.1mm, dotted] (0,\x) -- (0.45,\x);
    }
    \draw[red, thick, -] (0.1,0.3) -- (0.2,0.15) -- (0.4, 0.05);
    \node at ($(0.1,0.3)!0.5!(0.2,0.15)$) [left] {\tiny$-\lambda_1^{(x)}$};
    \node at ($(0.2,0.15)!0.5!(0.4,0.05)$) [below] {\tiny$-\lambda_2^{(x)}$};
    \foreach \x/\y in {0.1/0.3, 0.1/0.39, 
                       0.2/0.15, 0.2/0.27, 0.2/0.42, 
                       0.3/0.125, 0.3/0.28, 0.3/0.35,
                       0.4/0.05, 0.4/0.2, 0.4/0.31} {
        \fill[blue] (\x,\y) circle (0.009875cm);
    }
    \foreach \x/\y in {0.1/0.3, 0.2/0.15, 0.4/0.05} {
        \draw[red, very thick] (\x,\y) circle (0.009875cm);
    }
    \node[anchor=east] at (0.1,0.3) {\footnotesize$m^{(x)}_1$};
    \node[anchor=north] at (0.2,0.15) {\footnotesize$m^{(x)}_2$};
    \node[anchor=west] at (0.4,0.05) {\footnotesize$m^{(x)}_3$};
\end{tikzpicture}
    }
  \end{center}
  \caption{Geometric intuition for solving \eqref{eqn: RD_prompt_dual_LP}: lower-left convex envelope for an example $\{(\ratevec_{x, m}, \distvec_{x,m})\}_{m \in \calM_x}$ for a fixed $x$ with $|\calM_x| = 11$, $k_x = 3$.}
  \label{fig: algo_example_small}
\end{figure}

The rest of the algorithm computes the outer supremum. Lines 7--9 prepare for this by introducing new notation ``$\tilde m^{(x)}$'' and ``$\tilde \lambda$'' such that for $\lambda \in \big[\tilde \lambda_j, \tilde \lambda_{j-1}\big)$, $\tilde m_j^{(x)}$ minimizes $ \distvec_{x,m} + \lambda \ratevec_{x,m} $ over $m \in \calM_x$. There is no calculation involved in this step; what we gain is that the range of $\lambda$ on which the minimizer is $\tilde m_j^{(x)}$ no longer depends on $x$. 
This gives us everything we need to compute the distortion-rate function, which is obtained by lines 10--13. Observe that the input $R$ is only used for lines 10--13, so for a given dataset, lines 1--9 can be run once and the results ``$\tilde m^{(x)}$'' and ``$\tilde \lambda$'' stored, with only lines 10--13 run for each value of $R$. 
}


\begin{algorithm}[!ht]
\caption{To compute the distortion-rate function via the dual linear program \eqref{eqn: RD_prompt_dual_LP}}
\label{alg: dual-lp}
\newcommand{\lIfElse}[3]{\lIf{#1}{#2 \textbf{else}~#3}}
\SetKwComment{Comment}{$\triangleright$\ }{}
\newcommand\mycommfont[1]{\footnotesize\rmfamily\textcolor{blue!85!black}{#1}}
\SetCommentSty{mycommfont}
\SetKwInput{KwInput}{Input}                
\SetKwInput{KwOutput}{Output}    
\newcommand\myKwInputOutput[2]{%
  \textbf{Input:} #1; \hfill \textbf{Output:} #2;%
}
\myKwInputOutput{$R$, $\left( \distvec_x\right)_{x \in \calX}$, $\left( \ratevec_x\right)_{x \in \calX}$}{$D^*(R)$, the distortion-rate function at rate $R$}\vspace{5pt}

  \For{$x \in \calX$}{
    Find $\calM^{(x)}_{\text{env}} \subseteq \calM_x$ such that $\left\{\left(\ratevec_{x, m}, \distvec_{x, m}\right)\right\}_{m \in \calM^{(x)}_{\text{env}}}$ are on the lower-left convex boundary of $\left\{\left(\ratevec_{x, m}, \distvec_{x, m}\right)\right\}_{m \in \calM_x}$ \Comment*{see \prettyref{fig: algo_example_small} for an example}
    $\left\{m_1^{(x)}, m_2^{(x)}, \dots,m_{k_x}^{(x)}\right\} \leftarrow \calM^{(x)}_{\text{env}}$ ordered such that $\ratevec_{x, m_{k_x}^{(x)}} > \dots > \ratevec_{x, m_{1}^{(x)}}$\;
    \lFor{$i = 1,\dots,k_x - 1$}{$\lambda_i^{(x)} \leftarrow \frac{\distvec_{x, m_i^{(x)}} - \distvec_{x, m_{i+1}^{(x)}}}{\ratevec_{x, m_{i+1}^{(x)}} - \ratevec_{x, m_i^{(x)}}}$}
   $\lambda_{0}^{(x)} \leftarrow +\infty$;  $\lambda_{k_x}^{(x)} \leftarrow 0$; $\Lambda^{(x)} \leftarrow \left\{\lambda_0^{(x)}, \lambda_1^{(x)}, \dots, \lambda_{k_x-1}^{(x)}, \lambda_{k_x}^{(x)}\right\}$ \Comment*{$\lambda_0^{(x)} > \dots > \lambda_{k_x}^{(x)}$}
  }
  $\left\{\tilde \lambda_0, \dots, \tilde \lambda_k\right\} \leftarrow \bigcup_{x \in \calX} \Lambda^{(x)}$ with $+\infty = \tilde \lambda_0 > \tilde \lambda_1 > \dots > \tilde \lambda_{k-1} > \tilde \lambda_k = 0$ \Comment*{$k \geq k_x\; \forall\, x \in \calX$}
  \For{$x \in \calX$}{
    \lFor{$j = 1,\dots,k$}{
        Find $i \in \left\{1,\dots,k_x\right\} : \left(\lambda^{(x)}_i, \lambda^{(x)}_{i-1}\right) \supseteq \left(\tilde \lambda_j, \tilde \lambda_{j-1}\right)$; \enspace set $\tilde m^{(x)}_j \leftarrow m^{(x)}_i$
    }
  }
  \For{$j = 1,\dots,k$}{
      \lIfElse{$\sum_{x \in \calX}\ratevec_{x, \tilde m_j^{(x)}} > R$}
    {
       $\lambda_j \leftarrow \tilde \lambda_{j-1}$
    }
    {
         $\lambda_j \leftarrow \tilde \lambda_{j}$
    }  
    $D_j \leftarrow -\lambda_{j}R + \sum_{x \in \calX} \left[\distvec_{x, \tilde m_j^{(x)}} +  \lambda_{j}\ratevec_{x, \tilde m_j^{(x)}} \right]$\;
  }
  Return $\max_{j=1,\dots,k} D_j$ \Comment*{$ = D^*(R)$}
\end{algorithm}

\subsection{Extension to query-aware prompt compression} \label{sec: query-aware}

As mentioned in \prettyref{sec: prompt_comp_RD} and \prettyref{sec: LP_dual}, analogous definitions and results can be obtained for the query-aware setting as well. The difference is that the compressor has access to the query in addition to the prompt. Thus, the compressor $\comp$ is a possibly random function from $\calX \times \calQ$ to $\calM$. For the query $Q$, the compressor maps the prompt $X$ to the compressed prompt $M = \comp(X, Q)$ with $\len(M) \leq \len(X)$. The user provides the input $[M,Q]$ to the LLM, which produces the output distribution $\probP_{\hat Y} = \LLMf{\phi}(M, Q)$. Just as in the query-agnostic setting, two quantities of interest are 
\begin{center}
    (1) the (average) {rate} $\bbE\left[\frac{\len(M)}{\len(X)}\right]$,\quad and \quad (2) the {distortion} $\bbE\left[\distlog(Y, \LLMf{\phi}(M,Q))\right]$,
\end{center}
 with both expectations taken with respect to the joint distribution $\probP_{MXQY}$.
Since different queries may require different amounts of information to be preserved during compression, it is also of interest to define the (conditional) rate and distortion \emph{for the specific query $q$} as $\bbE\left[\frac{\len(M)}{\len(X)}\right]$ and $\bbE\left[\distlog(Y, \LLMf{\phi}(M,q))\right]$ respectively, with both expectations taken with respect to the joint distribution $\probP_{MXY|Q}(\cdot | q)$.

The rate-distortion problem for query-aware prompt compression can be also formulated similarly to \eqref{eqn: RD_prompt_explicit}. We model $\comp$ as a random mapping $\probP_{M|XQ}$ from $\calX \times \calQ$ to $\calM$. Then, the (average) distortion-rate function at rate $R$ is the smallest distortion that can be achieved by a query-aware compressor with rate at most $R$, given by
\begin{equation}
\begin{aligned}
\bar D^*(R) =  \quad \inf_{\probP_{M|XQ}} \quad & \bbE\left[\distlog\big(Y, \LLMf{\phi}(M,Q)\big)\right]\\
\textrm{s.t.} \quad & \probP_{M|XQ} \text{ is a compressor, and}\\
            &  \bbE\left[\frac{\len(M)}{\len(X)}\right] \leq R, 
\end{aligned} \label{eqn: RD_prompt_explicit_conditional_avg}
\end{equation}
with both expectations taken with respect to the joint distribution $\probP_{MXQY} = \probP_{M|XQ}\probP_{XQY}$ induced by the compressor. The condition ``$\probP_{M|XQ}$ is a compressor'' is short for (1) for each $x \in \calX$ and $q \in \calQ$, $\sum_{m \in \calM} \probP_{M|XQ}(m|x,q) = 1$, (2) $\probP_{M|XQ}(m|x,q) = 0$ if $\len(m) > \len(x)$, and (3) if $\len(m) = \len(x)$, then $\probP_{M|XQ}(m| x,q) = 0$ unless $m = x$.
Similarly, the (conditional) distortion-rate function at rate $R$ is the smallest distortion that can be achieved by a query-aware compressor for query $q$ at rate at most $R$, given by
\begin{equation}
\begin{aligned}
 D^*_q(R) =  \quad \inf_{\probP_{M|XQ}(\cdot | \cdot, q)} \quad & \bbE\left[\distlog\big(Y, \LLMf{\phi}(M,Q)\big)\right]\\
\textrm{s.t.} \quad & \probP_{M|XQ}(\cdot | \cdot, q) \text{ is a compressor, and}\\
            &  \bbE\left[\frac{\len(M)}{\len(X)}\right] \leq R, 
\end{aligned} \label{eqn: RD_prompt_explicit_conditional}
\end{equation}
with both expectations taken with respect to $\probP_{MXY|Q}(\cdot,\cdot,\cdot|q)$.

Just like the query-agnostic setting, note that both \eqref{eqn: RD_prompt_explicit_conditional_avg} and \eqref{eqn: RD_prompt_explicit_conditional} are linear programs, as the objective and constraints are all linear in $\probP_{M|XQ}$ and $\probP_{M|XQ}(\cdot | \cdot, q)$ respectively. We obtain explicit linear programs analogous to \eqref{eqn: RD_prompt_LP} by defining constants $\bar \distvec^q_x$ and $\bar \ratevec^q_x \in \bbR_{+}^{\calM_x}$ for the average distortion-rate function and $\distvec^q_x$ and $\ratevec^q_x \in \bbR_{+}^{\calM_x}$ for the conditional distortion-rate functions, for each $x \in \calX$ and $q \in \calQ$, similarly to \eqref{eqn: lp_consts}.

\begin{proposition}[Query-aware primal LPs] \label{thm: cond-lp} The (average) distortion-rate function for query-aware prompt compression \eqref{eqn: RD_prompt_explicit_conditional_avg} is given by the solution to 
\begin{equation}
\begin{aligned}
\bar D^*(R) =  \quad \inf_{\left(\zvec_{x,q} \in \bbR_{+}^{\calM_x}\right)_{x \in \calX, q\in\calQ}} \quad & \sum_{x \in \calX, q \in \calQ} {\bar{\distvec}}_x^q{}^\top \zvec_{x,q}\\
\textrm{s.t.} \qquad 
            &  \sum_{x \in \calX, q \in \calQ} {\bar{\ratevec}}_x^q{}^\top \zvec_{x,q} \leq R,   \\
            & \mathbf{1}^\top \zvec_{x,q} = 1,  \quad \forall\, x \in \calX,\, q \in \calQ.
\end{aligned} \label{eqn: RD_prompt_LP_conditional_avg} \tag{avg-cond-LP}
\end{equation}
The (conditional) distortion-rate function for query-aware prompt compression for query $q$ \eqref{eqn: RD_prompt_explicit_conditional} is given by the solution to 
\begin{equation}
\begin{aligned}
D^*_q(R) =  \quad \inf_{\left(\zvec_x \in \bbR_{+}^{\calM_x}\right)_{x \in \calX}} \quad & \sum_{x \in \calX} {\distvec^q_x}^\top \zvec_x\\
\textrm{s.t.} \qquad 
            &  \sum_{x \in \calX} {\ratevec^q_x}^\top \zvec_x \leq R,   \\
            & \mathbf{1}^\top \zvec_x = 1,  \quad \forall\, x \in \calX.
\end{aligned} \label{eqn: RD_prompt_LP_conditional} \tag{cond-LP}
\end{equation}
For each $x \in \calX$, $\calM_x$ denotes the set of all possible compressed prompts associated to $x$, i.e., the set of all possible token sequences of length at most $\len(x)$, the vectors $\zvec_{x,q}\in \bbR_+^{\calM_x}$ for $q \in \calQ$ and $\zvec_x\in \bbR_+^{\calM_x}$ are the optimization variables respectively and the constants $\bar \distvec^q_x, \bar \ratevec^q_x, \distvec^q_x, \ratevec^q_x \in \bbR_+^{\calM_x}$ are given by 
\begin{align}
    \bar \distvec^q_{x,m} &\triangleq \probP_{XQ}(x,q)\,\bbE \left[\distlog(Y, \LLMf{\phi}(m, q))\right]\enskip\text{and} \enskip
    \bar \ratevec^q_{x,m} \triangleq \probP_{XQ}(x,q)\,\frac{\len(m)}{\len(x)}, \quad m \in \calM_x, \label{eqn: cond_avg_lp_consts}\\
    \distvec^q_{x,m} &\triangleq \probP_{X|Q}(x|q)\,\bbE \left[\distlog(Y, \LLMf{\phi}(m, q))\right]\enskip\text{and} \enskip
    \ratevec^q_{x,m} \triangleq \probP_{X|Q}(x|q)\,\frac{\len(m)}{\len(x)}, \quad m \in \calM_x, \label{eqn: cond_lp_consts}
\end{align}
with the expectation taken with respect to $\probP_{Y|MXQ}(\cdot|m,x,q)$.
\end{proposition}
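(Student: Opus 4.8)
The plan is to reduce both statements to the primal LP already established in Proposition \ref{prop: lp}, since the query-aware setting is structurally identical to the query-agnostic one once the roles of the conditioning variables are made explicit. For the conditional distortion-rate function $D^*_q(R)$, I would proceed exactly as in the proof of Proposition \ref{prop: lp}: start from the optimization \eqref{eqn: RD_prompt_explicit_conditional}, note that the compressor $\probP_{M|XQ}(\cdot\,|\,\cdot,q)$ with $q$ held fixed is just a conditional distribution from $\calX$ to $\calM$, and set $\zvec_x \triangleq \probP_{M|XQ}(\cdot\,|\,x,q)$. Then the normalization constraints $\mathbf{1}^\top\zvec_x = 1$ and nonnegativity $\zvec_x \in \bbR_+^{\calM_x}$ encode that it is a valid conditional distribution, and the support restriction $\probP_{M|XQ}(m|x,q) = 0$ for $\len(m) > \len(x)$ (together with the $=$ case) lets us shrink the ambient dimension from $\calM$ to $\calM_x$. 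Expanding the objective $\bbE[\distlog(Y,\LLMf{\phi}(M,Q))]$ and the rate constraint under $\probP_{MXY|Q}(\cdot|q) = \probP_{M|XQ}(\cdot|\cdot,q)\probP_{XY|Q}(\cdot|q)$, and using the tower property to push the inner expectation onto $\probP_{Y|MXQ}(\cdot|m,x,q)$, produces precisely the coefficients $\distvec^q_{x,m}$ and $\ratevec^q_{x,m}$ in \eqref{eqn: cond_lp_consts}, with the $\probP_{X|Q}(x|q)$ factor arising from marginalizing over $X$. This yields \eqref{eqn: RD_prompt_LP_conditional}.

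For the average distortion-rate function $\bar D^*(R)$, the same manipulation applies but now the compressor $\probP_{M|XQ}$ is a conditional distribution from the product space $\calX \times \calQ$ to $\calM$, so the optimization variable is indexed by pairs $(x,q)$; setting $\zvec_{x,q} \triangleq \probP_{M|XQ}(\cdot\,|\,x,q)$ gives one normalization constraint $\mathbf{1}^\top\zvec_{x,q} = 1$ per pair. Expanding both expectations with respect to $\probP_{MXQY} = \probP_{M|XQ}\probP_{XQY}$ and again using the tower property yields the coefficients in \eqref{eqn: cond_avg_lp_consts}, where the factor is now the joint $\probP_{XQ}(x,q)$ rather than the conditional $\probP_{X|Q}(x|q)$ — this is the only substantive difference between the two cases, and it reflects whether we average over queries or fix one. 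Linearity of the objective and constraints in $\probP_{M|XQ}$ (or its $q$-slice) is immediate from their explicit forms, confirming these are genuine linear programs.

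I do not expect any real obstacle here: the argument is a direct transcription of the query-agnostic proof with bookkeeping for the extra conditioning variable. The only point requiring a little care is making sure the expectation-rewriting is done consistently — specifically, that the reduction of $\bbE_{\probP_{MXQY}}[\dist(Y,\LLMf{\phi}(M,Q))]$ to $\sum_{x,m}\probP_{X\cdots}(\cdots)\,\zvec_{x(,q)}(m)\,\bbE_{\probP_{Y|MXQ}}[\dist(Y,\LLMf{\phi}(m,q))]$ correctly accounts for the dependence of $Q$ on $(M,X)$ in the average case versus $Q$ being frozen in the conditional case, so that the stated conditioning $\probP_{Y|MXQ}(\cdot|m,x,q)$ for the inner expectation is the right one in both. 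Once that is checked, both displays follow, and I would simply remark that — as in Theorem \ref{thm: dual_lp} — taking Lagrangian duals of \eqref{eqn: RD_prompt_LP_conditional_avg} and \eqref{eqn: RD_prompt_LP_conditional} gives dual programs of the same form as \eqref{eqn: RD_prompt_dual_LP}, so that \prettyref{alg: dual-lp} applies verbatim.
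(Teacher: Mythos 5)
Your proposal is correct and matches the paper's own proof, which simply observes that the result follows immediately by identifying $\zvec_{x,q}$ (resp.\ $\zvec_x$) with $\probP_{M|XQ}(\cdot\,|\,x,q)$, defining the constants as in \eqref{eqn: cond_avg_lp_consts}--\eqref{eqn: cond_lp_consts}, and using the length constraint on the compressor to shrink the ambient dimension from $\calM$ to $\calM_x$. Your extra remark about tracking the conditioning is sound but is not a genuine subtlety: since $M$ is generated from $(X,Q)$ and $Y$ from $\probP_{Y|XQ}$, the chain $M \to (X,Q) \to Y$ makes $\probP_{Y|MXQ} = \probP_{Y|XQ}$, so the stated inner conditioning is automatically consistent in both cases.
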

\begin{proof}[]
    This follows immediately from \eqref{eqn: RD_prompt_explicit_conditional_avg} and \eqref{eqn: RD_prompt_explicit_conditional} by defining the constants $\bar \distvec^q_x, \bar \ratevec^q_x, \distvec^q_x, \ratevec^q_x \in \bbR_+^{\calM_x}$ for each $x \in \calX$ and $q \in \calQ$ as given in \eqref{eqn: cond_avg_lp_consts} and \eqref{eqn: cond_lp_consts} and taking $\zvec_{x,q}$ and $\zvec_x$ to be $\probP_{M|XQ}(\cdot | x,q)$ respectively,. We use the fact that $\probP_{M|XQ}(m|x,q) = 0$ when $\len(m) > \len(x)$ to reduce the dimension of $\zvec_{x,q}$ and $\zvec_x$ from $\calM$ to $\calM_x$.
\end{proof}

\begin{remark} \label{rmk: variable-rate}
An interesting phenomenon here that does not occur in the query-agnostic setting is the comparison between the average and conditional distortion-rate functions, i.e., $\bar D^*(R)$ and $D^*_q(R)$ for $q \in \calQ$. One possible way to ``average'' the conditional distortion-rate functions would be to simply compute $\bbE_{Q \sim \probP_Q}\left[D^*_Q(R)\right]$, but we always have $\bar D^*(R) \leq \bbE_{Q \sim \probP_Q}\left[D^*_Q(R)\right]$. This is because the latter averages the distortion-rate functions over $\probP_Q$ at a fixed value of the rate, i.e., the prompt for each query is forced to be compressed to the same rate $R$. For $\bar D^*(R)$, on the other hand, only the \emph{average} rate over the queries is required to be $R$. This allows the compressor to set a higher rate for ``difficult queries'' that have higher distortion values, and use a lower rate for queries that have lower distortion values in general. This is
exactly the phenomenon we exploit in designing the \emph{variable-rate} compression scheme \aqselect in \prettyref{sec: exp_setup}, which outperforms other existing schemes in our experiments.    
\end{remark}

Just as in the query-agnostic setting, it is useful to compute and solve the dual linear programs instead of directly solving the linear programs above.

\begin{theorem}[Query-aware dual LPs] \label{thm: cond-dual-lp} The (average) distortion-rate function for query-aware prompt compression \eqref{eqn: RD_prompt_explicit_conditional_avg} is given by the solution to the dual of the linear program \eqref{eqn: RD_prompt_LP_conditional_avg}, i.e.,
\begin{equation}
\bar D^*(R) =  \sup_{\lambda \geq 0} \left\{ - \lambda R + \sum_{x \in \calX, q\in\calQ} \min_{m \in \calM_x} \left[ \bar \distvec^q_{x,m} + \lambda \bar \ratevec^q_{x,m} \right] \right\}.
 \label{eqn: RD_prompt_dual_LP_conditional_avg} \tag{avg-cond-dual-LP}
    \end{equation}
The (conditional) distortion-rate function for query-aware prompt compression for query $q$ \eqref{eqn: RD_prompt_explicit_conditional} is given by the solution to the dual of the linear program \eqref{eqn: RD_prompt_LP_conditional}, i.e.,
\begin{equation}
D^*_q(R) =  \sup_{\lambda \geq 0} \left\{ - \lambda R + \sum_{x \in \calX} \min_{m \in \calM_x} \left[ \distvec^q_{x,m} + \lambda \ratevec^q_{x,m} \right] \right\}.
 \label{eqn: RD_prompt_dual_LP_conditional} \tag{cond-dual-LP}
    \end{equation}
\end{theorem}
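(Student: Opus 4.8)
The plan is to derive both dual linear programs by applying standard LP duality to the primal programs \eqref{eqn: RD_prompt_LP_conditional_avg} and \eqref{eqn: RD_prompt_LP_conditional} from \prettyref{thm: cond-lp}, exactly mirroring the derivation of \eqref{eqn: RD_prompt_dual_LP} from \eqref{eqn: RD_prompt_LP} that is deferred to \prettyref{app: proofs}. Both primals have the same structure: minimize a linear objective $\sum \distvec^\top \zvec$ over nonnegative vectors $\zvec$, subject to one global inequality constraint $\sum \ratevec^\top \zvec \leq R$ and a family of equality (normalization) constraints $\mathbf{1}^\top \zvec_x = 1$ (one per index $x$, or per pair $(x,q)$). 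So the argument is essentially a single computation applied twice.

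For the conditional case \eqref{eqn: RD_prompt_LP_conditional} I would introduce a scalar dual variable $\lambda \geq 0$ for the rate constraint and dual variables $\nu_x \in \bbR$ (unconstrained) for each normalization constraint $\mathbf{1}^\top \zvec_x = 1$. The Lagrangian is $\sum_x {\distvec^q_x}^\top \zvec_x + \lambda(\sum_x {\ratevec^q_x}^\top \zvec_x - R) + \sum_x \nu_x(1 - \mathbf{1}^\top \zvec_x)$. Minimizing over $\zvec_x \geq 0$ componentwise: the coefficient of $\zvec_{x,m}$ is $\distvec^q_{x,m} + \lambda\ratevec^q_{x,m} - \nu_x$, so the infimum over $\zvec_x \geq 0$ is $-\infty$ unless $\nu_x \leq \distvec^q_{x,m} + \lambda\ratevec^q_{x,m}$ for all $m \in \calM_x$, in which case the minimizing $\zvec_x = \mathbf{0}$ and the Lagrangian reduces to $-\lambda R + \sum_x \nu_x$. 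The dual is then to maximize $-\lambda R + \sum_x \nu_x$ over $\lambda \geq 0$ and $\nu_x \leq \min_{m \in \calM_x}[\distvec^q_{x,m} + \lambda\ratevec^q_{x,m}]$; at optimality each $\nu_x$ is pushed up to its bound, giving $D^*_q(R) = \sup_{\lambda \geq 0}\{-\lambda R + \sum_x \min_{m}[\distvec^q_{x,m} + \lambda\ratevec^q_{x,m}]\}$, which is \eqref{eqn: RD_prompt_dual_LP_conditional}. Strong duality holds since the primal is a feasible, bounded LP (feasibility: put all mass on $m = x$, which is allowed; the objective is bounded below by $0$ since $\distlog \geq 0$ and the $\zvec$'s are probability vectors), so the standard LP strong duality theorem \cite{boyd2004convex} applies and there is no gap. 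The average case \eqref{eqn: RD_prompt_LP_conditional_avg} is identical with the index $x$ replaced by the pair $(x,q)$ throughout and $\distvec^q_x, \ratevec^q_x$ replaced by $\bar\distvec^q_x, \bar\ratevec^q_x$, yielding \eqref{eqn: RD_prompt_dual_LP_conditional_avg}.

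The only genuinely non-routine point — and thus the main obstacle, though a mild one — is justifying strong duality rigorously, i.e., that the $\sup$ over $\lambda$ is attained (or at least that the value matches the primal) despite $\calM_x$ possibly being large; this is handled by noting the primal is a finite-dimensional LP with a nonempty feasible set and finite optimal value, so Slater-type concerns do not arise and the finite LP duality theorem gives equality directly. A secondary bookkeeping point is to confirm that the compressor constraint (3) — forbidding $\probP_{M|XQ}(m|x,q) > 0$ when $\len(m) = \len(x)$ and $m \neq x$ — is already baked into the definition of $\calM_x$ used in \prettyref{thm: cond-lp} (the set of token sequences of length at most $\len(x)$ with the length-$\len(x)$ entry restricted to $x$ itself), so it imposes no extra dual variables; I would simply remark that this is the same reduction used in the proof of \prettyref{prop: lp}. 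Since everything reduces to the query-agnostic argument with relabeled indices, I would present the conditional derivation in full and then state that the average case follows verbatim with $x \mapsto (x,q)$, referring to \prettyref{app: proofs_alg} for the query-agnostic template.
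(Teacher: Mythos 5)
Your proposal is correct and follows essentially the same route as the paper, which simply observes that \eqref{eqn: RD_prompt_LP_conditional} and \eqref{eqn: RD_prompt_LP_conditional_avg} are structurally identical to \eqref{eqn: RD_prompt_LP} (with $\distvec_x, \ratevec_x$ relabeled as $\distvec_x^q, \ratevec_x^q$, and additionally $\calX$ replaced by $\calX\times\calQ$ in the average case) and then cites \prettyref{thm: dual_lp} directly rather than redoing the Lagrangian computation. Your explicit rederivation and your appeal to finite-dimensional LP strong duality are both valid, though note the paper's own proof of \prettyref{thm: dual_lp} in \prettyref{app: proofs} justifies the inf-sup swap via the minimax theorem rather than by citing the LP duality theorem outright.
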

\begin{proof}[] (Conditional) This follows trivially by simply observing that the linear program in \eqref{eqn: RD_prompt_LP_conditional} is identical to that in \eqref{eqn: RD_prompt_LP}, except that $\distvec_x$ and $\ratevec_x$ are replaced by the (conditional) query-aware versions $\distvec^q_x$ and $\ratevec^q_x$ respectively. Henc,  by \prettyref{thm: dual_lp} the solution to \eqref{eqn: RD_prompt_LP_conditional} is given by \eqref{eqn: RD_prompt_dual_LP} with $\distvec_x$ and $\ratevec_x$ replaced by the (conditional) query-aware versions $\distvec^q_x$ and $\ratevec^q_x$ respectively, which gives \eqref{eqn: RD_prompt_dual_LP_conditional}.

(Average) In addition to replacing $\distvec_x$ and $\ratevec_x$ from \eqref{eqn: RD_prompt_LP} by the (average) query-aware versions $\bar \distvec^q_x$ and $\bar \ratevec^q_x$ respectively, we also have that the optimization variables are given by $z_{x,q}$ for each pair $(x,q)\in \calX \times \calQ$ as opposed to simply $z_x$ for each $x \in \calX$. Hence, by \prettyref{thm: dual_lp} the solution to \eqref{eqn: RD_prompt_LP_conditional_avg} is given by \eqref{eqn: RD_prompt_dual_LP} with $\distvec_x$ and $\ratevec_x$ are replaced by the (average) query-aware versions $\bar \distvec^q_x$ and $\bar \ratevec^q_x$ respectively and $\calX$ replaced by $\calX \times \calQ$. This gives \eqref{eqn: RD_prompt_dual_LP_conditional_avg} exactly, and we are done.
\end{proof} 

Note that both \eqref{eqn: RD_prompt_dual_LP_conditional_avg} and \eqref{eqn: RD_prompt_dual_LP_conditional} are of the same form as \eqref{eqn: RD_prompt_dual_LP}, with some minor differences. For a given $q \in \calQ$, the conditional distortion-rate function $D^*_q(R)$ is identical to the query-unaware distortion-rate function $D^*_q(R)$ with $(\distvec_x,\ratevec_x)$ replaced by $(\distvec^q_x, \ratevec^q_x)$, and hence can be solved by running \prettyref{alg: dual-lp} with the input $\left\{R, \left(\distvec^q_x\right)_{x \in \calX}, \left(\ratevec^q_x\right)_{x \in \calX}\right\}$. For the average distortion-rate function $\bar D^*(R)$, in addition to replacing $\distvec_x$ and $\ratevec_x$ by $\bar \distvec^q_x$ and $\bar \ratevec^q_x$, we also have that $\calX$ is replaced by $\calX\times \calQ$, hence $\bar D^*(R)$ is obtained by running \prettyref{alg: dual-lp} with the input $\left\{R, \left(\distvec^q_{x'}\right)_{x' \in \calX'}, \left(\ratevec^q_{x'}\right)_{x' \in \calX'}\right\}$, where $\calX' \triangleq \calX \times \calQ$ and $x'$ runs over all pairs $(x,q)$.


\looseness=-1


\section{Experiments}\label{sec:experiments}

The distortion-rate function defined in \prettyref{sec: rate_dist} describes the best possible trade-off between the achievable values of rate and distortion in the query-aware and query-agnostic cases. In this section, we compare the performance of existing prompt compression methods (that are compatible with the black-box model setting we consider in this work) with the optimal curve for synthetic and natural language datasets. We observe that there is \emph{a sizeable gap} between the performance of the existing methods and the optimal curve. We propose \aqselect, a \emph{query-aware and variable-rate adaptation} of LLMLingua-2 \cite{wu2024llmlingua2}, that outperforms the existing methods on these datasets.
We also consider a query-aware version of LLMLingua-2 called QuerySelect and observe that it outperforms the query-agnostic version, which highlights the \emph{importance of conditioning on the query}.
We report our results with the log loss and 0/1 loss metrics (defined in \eqref{eqn: distortions}) for the synthetic dataset and semantic distortion measures such as RougeL \cite{lin2004rouge} and BertScore \cite{zhang2020bertscore} for natural language datasets.

We include an ablation study on the impact of tokenization of the prompt compression problem, as tokenization is lossy since it groups together multiple symbols into a single symbol before passing it to an LLM. We study the effect of tokenization on the prompt compression problem by forcing the tokenizer on the encoder and decoder side to tokenize the bits of the binary string prompts in our synthetic dataset individually, which we refer to as ``forced tokenization.'' We run experiments in this setting and with the regular ``standard tokenization.''
Additional details on our experiments can be found in \prettyref{app: experiments}. Our code is made available for reproducibility purposes.\footnote{Our code is available at \texttt{\href{https://github.com/acnagle/fundamental-limits}{https://github.com/acnagle/fundamental-limits}}}

\subsection{Experimental setup}
\label{sec: exp_setup}

\paragraph{Dataset.}
In order to run experiments that are computationally tractable but still meaningful to the prompt compression problem, we construct a synthetic dataset $\{(x_i, q_i, y_i)\}_{i=1}^N$ with (1) prompts $x_i$ being sequences from $\calV = \{0,1\}$, i.e., binary prompts, 
(2) natural language queries $q_i$, such as ``Count the number of 1s,'' ``Compute the parity,'' and ``Is the binary string a palindrome?'' and (3) their associated answers $y_i$. In total, we construct a dataset of seven queries; a complete specification of the dataset, including a few examples is available in \prettyref{app: syn_dataset}. The binary prompts are generated from a first-order Markov chain on $\{0,1\}$ with a 0.1 probability of transitioning and a 0.9 probability of remaining in the same state, and the minimum and maximum possible lengths for each prompt are four and ten, respectively. 
All methods are evaluated on a validation set of 1400 examples in total (7 queries, 200 examples per query). The optimal distortion-rate function is computed using \prettyref{alg: dual-lp}, taking $\probP_{XQY}$ to be the empirical distribution on the dataset, i.e., $\probP_{XQY} = \frac{1}{N}\sum_{i=1}^N \delta_{(x_i, q_i, y_i)}$. This is the natural choice when the true distribution is unknown, but other choices may also be valid such as a parametric model with parameters learned from a dataset, but it is unclear what an appropriate parametric model for LLM prompts is.

We also run experiments on a small natural language dataset curated with GPT-4 \cite{openai2024gpt4} and NarrativeQA \cite{kočiský2017narrativeqareadingcomprehensionchallenge} for a large-scale experiment. The small dataset consists of ten prompts with four queries each, and a few examples are provided in \prettyref{tab: nlp_data} in \prettyref{app: nlp_dataset}.
More details on the considerations made in constructing our datasets are described in \prettyref{sec: exp_results}.

\paragraph{Baseline methods.}
We compare the rate-distortion trade-off of the optimal strategy (both query-aware and query-agnostic) with prompt compression methods that can be used to compress prompts for a black-box target LLM: Selective Context \cite{li2023compressing}, LLMLingua \cite{jiang2023llmlingua}, LLMLingua Query \cite{jiang23longllmlingua}, LLMLingua-2 \cite{wu2024llmlingua2}. As such, we do not consider methods like Gist Tokens \cite{mu2023learning}, In-Context Autoencoder \cite{ge2024incontext}, and AutoCompressor \cite{chevalier2023adapting} since they require special training methods generally not compatible with black-box target LLMs. 
Selective Context uses $ -\log \probP(x_i \mid x_0, x_1, \ldots, x_{i-1})$ to score the $i$-th token, and retains the tokens whose score is larger than the $p$-percentile, where $p \in [0, 1]$ is the ratio parameter. 
LLMLingua uses a similar method, but they first partition the input prompt into segments and condition on previously compressed segments to compress the current segment. They later extended their method to perform query-aware compression, which is what we use for LLMLingua Query. While these methods use a decoder-style (causal) transformer LLM to do prompt compression, this approach makes an independence assumption on the influence of future tokens have on the $i$-th token. LLMLingua-2 instead uses an encoder-style (bidirectional) LLM to perform a token classification task, where their model predicts whether a given token should be kept or removed.

\paragraph{Our proposed methods.} We add two novel contributions over the LLMLingua-2 work: (1) we adapt LLMLingua-2 to the query-aware setting, whereas the original work only proposed the query-agnostic approach, which we call ``\qselect,'' and (2) we further adapt this query-aware approach into a variable-rate approach we refer to as ``\aqselect.'' This approach which lets the encoder model decide which tokens to keep based on the confidence over a specified threshold. In other words, LLMLingua-2 and \qselect accept a rate parameter to determine the compression ratio, but \aqselect replaces the rate parameter with a threshold parameter. As a result, the encoder model predicts the probability of keeping a particular token, and the token is kept if the predicted probability is above a given threshold, resulting in a variable-rate compression of the prompt. This is an important aspect of the prompt compression problem, as some prompts are more compressible than others, and vice versa.

\paragraph{Models.}
We use Mistral 7B Instruct v0.2 \cite{jiang2023mistral} as our black-box target LLM (decoder side), which is fine-tuned on the training set partition of our synthetic dataset. Critically, this model is fixed after fine-tuning and no prompt compression methods have access to any part of it. All prompt compression methods use an LLM as part of their compression algorithm (encoder side); we use deduplicated Pythia 1B \cite{biderman2023pythia} for Selective Context, LLMLingua, and LLMLingua Query and RoBERTa Base \cite{liu2020roberta} for LLMLingua-2 methods. For each method, we finetune on the training set partition to enable the best performance possible for that method. 
More information on how we trained these methods and the data we used is in \prettyref{app: experiments}. For all models, including the target LLM, we fine-tune with LoRA \cite{hu2022lora} and conduct a hyperparameter grid search. We choose the configuration with the best performance on a test set that is different from the validation set. More details on the hyperparameter search are provided in \prettyref{app: fine-tuning}. 

For the natural language dataset, no fine-tuning is necessary on the decoder side. On the encoder side, Selective Context, LLMLingua, and LLMLingua Query use the same model as on the decoder side, and LLMLingua-2 uses a specially fine-tuned version of XLM RoBERTa Large \cite{ruder2019unsupervised,wu2024llmlingua2}. We use a custom fine-tuned XLM RoBERTa Large model as the encoder for the \qselect and \aqselect methods. The training dataset of (prompt, query, answer) tuples used to train this custom model is filtered from the Databricks Dolly 15k \cite{Databricks2023DollyV2} dataset to only include examples with prompt lengths between a specified minimum and maximum length (see \prettyref{sec: nlp_dataset} for details).

\subsection{Results}\label{sec: exp_results}
Since computing the exact optimal curve is intractable for large vocabulary sizes, we first report our results on the synthetic dataset consisting of binary prompts in \prettyref{sec: binary_results}. We then describe how to overcome the vocabulary size barrier by extending our results to a small natural language dataset in \prettyref{sec: small_nlp}. Finally, we describe how to approximate the optimal curve for large-scale natural language datasets in \prettyref{sec: beam_search}.

\subsubsection{Synthetic dataset} \label{sec: binary_results}

\prettyref{fig: comp_forced} summarizes our experimental contributions on the synthetic dataset. We observe a \emph{large gap} between the optimal curve and existing prompt compression methods. Thus, we propose \qselect as a \emph{query-aware} and \aqselect as a \emph{query-aware, variable-rate modification} of LLMLingua-2 to close this gap. Our results show that \aqselect achieves the best performance and, in fact, is the only method to outperform the optimal query-agnostic strategy. We also note that the optimal distortion-rate curves eventually fall below the baseline performance of using the full prompt (no compression). This observation is especially interesting because it shows that compressing prompts can improve performance on downstream tasks, as observed on natural language datasets in previous prompt compression works \cite{jiang2023llmlingua,jiang23longllmlingua,wu2024llmlingua2}. We accredit the performance of \aqselect to variable-rate compression, where we allow the compressor to choose how much it should compress based on the query and prompt as input (see \prettyref{sec: query-aware}, \prettyref{rmk: variable-rate}). Even though this approach relinquishes explicit control over the rate, our experiments show that variable-rate compression is the closest to optimality.

\begin{figure}[htb]
    \centering
    \includegraphics[width=1.0\textwidth]{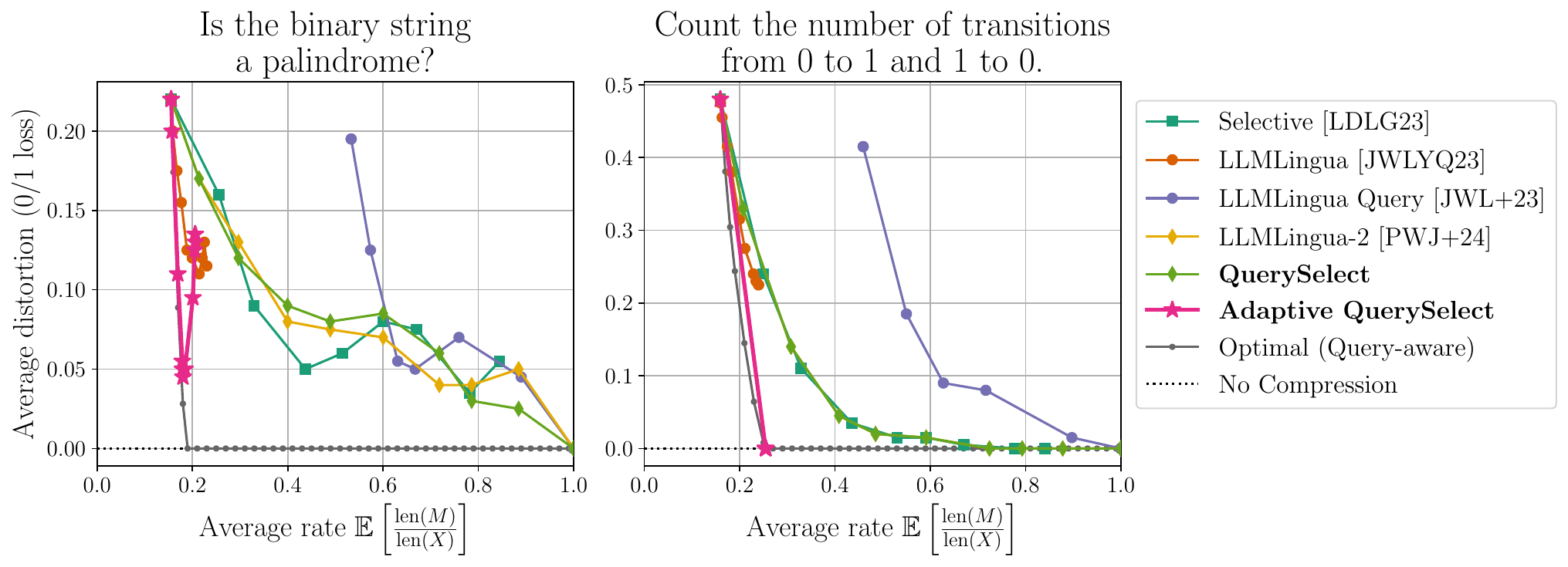}
    \caption{We highlight the distortion-rate curves for two of the seven queries in the validation partition of our synthetic dataset. Our method, \aqselect, is able to match the performance of the optimal query-aware strategy \textbf{(left)}. Some queries naturally incur less distortion than others with the target LLM, even with a query-agnostic approach, if the query is aligned well with the data generation process for the prompt \textbf{(right)}. Note that \qselect covers the line of LLMLingua-2 as their performance is identical for this query.}
    \label{fig: forced_highlight}
\end{figure}

\paragraph{Gap from optimality depends on the query.} 
In \prettyref{fig: forced_highlight}, we highlight the distortion-rate curves for two out of seven of the queries in our synthetic dataset. Despite the fact that \prettyref{fig: comp_forced} shows a gap in average performance between the query-aware optimal strategy and \aqselect, \prettyref{fig: forced_highlight} \textbf{(left)} shows that \aqselect can match the performance of the optimal query-aware compression scheme. Comparing \prettyref{fig: forced_highlight} \textbf{(left)} and \textbf{(right)}, we see that the prompt compression problem is easier (methods are closer to optimality) for certain tasks or queries depending on how the prompts were generated. For our synthetic dataset, all prompts are generated from a Markov chain with a transition probability of 0.1 and a probability of 0.9 for remaining in the same state. This means the tokens with the highest entropy are those that are part of a transition, and those tokens are the most important for answering this query. As a result, we see that methods that use the negative log-likelihood as a means for compression (Selective Context, LLMLingua, and LLMLingua Query) perform well, even without conditioning on the query. An exception here is the performance of LLMLingua Query, which we find has mixed performance compared to vanilla LLMLingua for token-level prompt compression on our dataset. LLMLingua Query performs markedly worse for this query (refer to \prettyref{fig: forced_acc_all} in \prettyref{app: experiments} for results on all queries).

\paragraph{Effect of tokenization.} Finally, the results of our ablation study on the effects of tokenization are provided in \prettyref{fig: comp_standard_vs_forced} in \prettyref{app: experiments}. Interestingly, the optimal curves are nearly identical, suggesting that tokenization does not play a role in attaining the best possible trade-off. Furthermore, we see that, for a fixed rate, the standard tokenization performance often matches or exceeds the performance of forced tokenization. However, the standard tokenization approach does not allow for average rates below 0.6 due to the limited size of the prompts in our synthetic dataset, so the comparison is somewhat limited. In particular, standard tokenization allows for compression of at most four tokens (but usually only two or three tokens), whereas forced tokenization allows for compression of at most ten tokens.

\subsubsection{Small natural language dataset}  \label{sec: small_nlp}

\paragraph{Restriction to pruned prompts.}  The decisive bottleneck in running \prettyref{alg: dual-lp} turns out to be obtaining $\distvec_x$ for each $x\in\calX$, i.e., the input to the algorithm. We require one inference call for each possible compressed prompt $m \in \calM_x$ to compute $\distvec_x$ for a particular prompt $x$ and query. Taking $\calM_x$ to be all sequences of length smaller than $\len(x)$, we see that that the size of $\calM_x$ is $\sum_{i = 1}^{\len(x) - 1} |\calV|^i$, where $\calV$ is the vocabulary of the LLM. The model used in our experiments, Mistral 7B Instruct v0.2 \cite{jiang2023mistral}, has $|\calV| = 32{,}000$. Clearly, it is then virtually impossible to consider prompts with more than 2 tokens, and in fact, makes it difficult to consider even medium length prompts (50 tokens) for a vocabulary of size more than 2. 

A key first step towards extending our algorithm for natural language prompts is the observation that all prompt compression methods in the literature work by \emph{pruning} tokens, i.e., (1) they are non-generative, i.e., work by removing tokens from the original prompt and therefore do not generate any new tokens, and (2) they preserve the order of the tokens as they appear in the input sequence. Hence, to compute the fundamental limit for the schemes that compress the prompt by pruning, it is enough to consider $\calM_x$ to be the sequences that are obtained from $x$ by removing some number of (not necessarily contiguous) tokens. Then, we have $|\calM_x| = 2^{\len(x)}$, irrespective of the vocabulary size $|\calV|$. 

\begin{figure}[t!]
    \centering
    \includegraphics[width=0.9\textwidth]{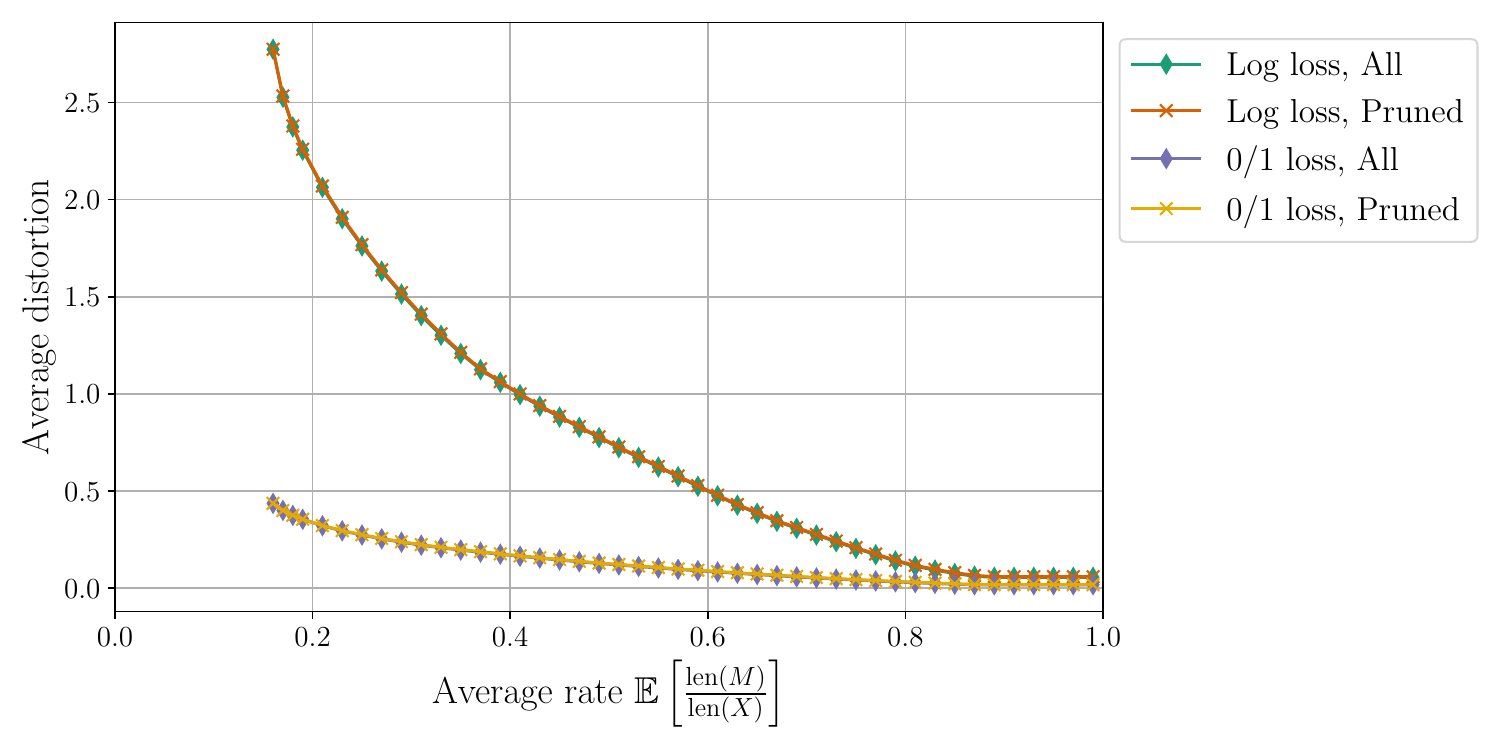}
    \caption{Query-agnostic distortion-rate curves plotted for log loss and 0/1 loss distortion measures. The curves marked with a `diamond' are computed using all possible shorter sequences, while those marked with an `$\times$' are computed using only pruned versions of the original prompt. They are nearly identical, which suggests that a good approximation to the optimal distortion-rate curve can be obtained by considering pruned prompts only.}
    \label{fig: all_vs_pruned}
\end{figure}
In \prettyref{fig: all_vs_pruned}, we observe that the distortion-rate curves obtained by restricting $\calM_x$ to be only those sequences obtained from $x$ from via pruning, are nearly identical to the original curves, where we take $\calM_x$ to be all shorter sequences. This suggests two things: (1) there is no fundamental drawback to considering compression schemes are not generative, i.e., work by pruning the original prompt, and (2) we can approximate the optimal distortion-rate function reasonably well by considering only pruned versions of the prompt as possible compressed prompts. 
Thus, in principle, we can replicate experiments with natural language prompts of the same lengths (4 to 10) as the binary prompts in our experiments above, with the same computational cost. However, it is difficult to identify sufficiently rich natural language prompts of such short lengths for which compression is a reasonable problem, and hence, use binary prompts (generated from a Markov chain, to model the dependence between tokens) with natural language queries (since there is no computational restriction on the vocabulary of the queries) to run experiments such as those in \prettyref{sec: binary_results} at scale. 
Nonetheless, to illustrate that we can get meaningful results by considering such ``pruned'' compressed prompts, we generate a small natural language dataset using GPT-4 \cite{openai2024gpt4}, consisting of ten prompts and four queries for each prompt; a few examples of the (prompt, query, answer) tuples from this dataset are shown in \prettyref{app: experiments}, \prettyref{tab: nlp_data}.

\begin{figure}[!t]
    \centering
    \includegraphics[width=1.0\textwidth]{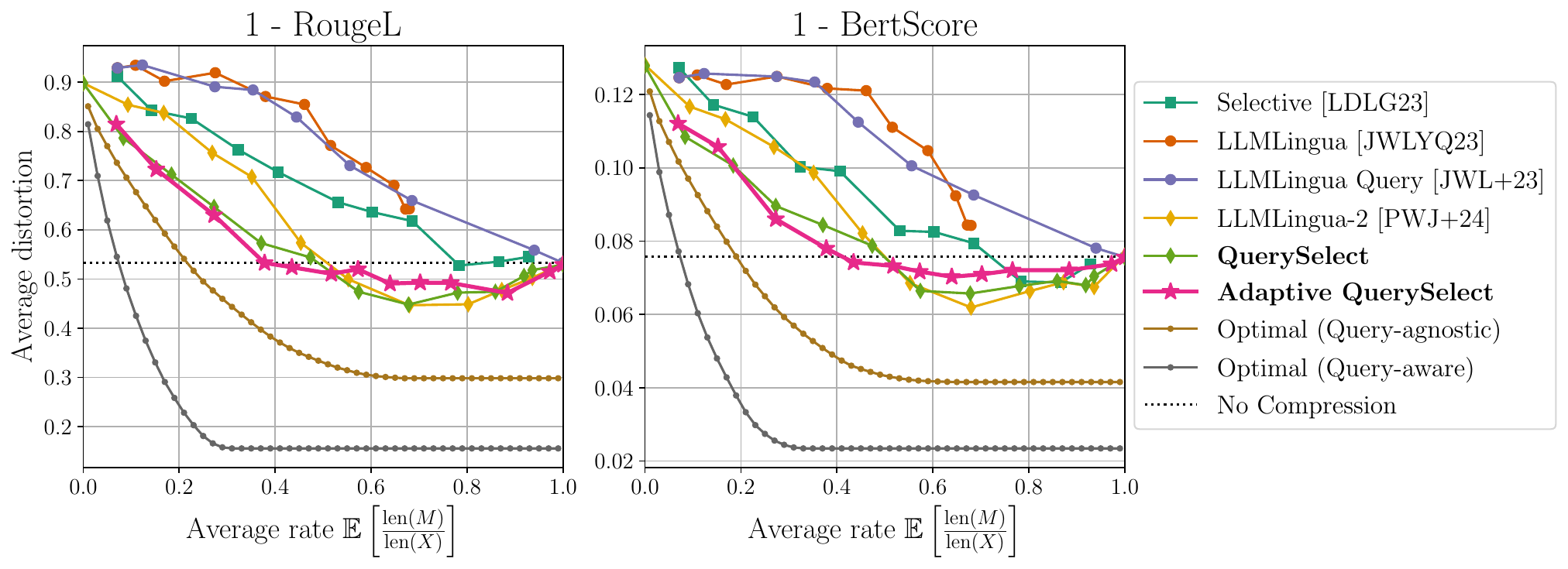}
    \caption{Comparison among all prompt compression methods on our natural language dataset. We show the rate-distortion trade-off for RougeL \cite{lin2004rouge} \textbf{(left)} and BertScore \cite{zhang2020bertscore} \textbf{(right)}. Since a higher RougeL and BertScore metric is better, we plot ``$1 - $ the computed average distortion'' so that a higher rate should yield a lower loss. We discuss the choice of our metrics in \prettyref{app: distortion-metric}.}
    \label{fig: nlp_combined}
\end{figure}

\paragraph{Results for small natural language dataset.}
The results of our extension to natural language prompts, presented in \prettyref{fig: nlp_combined}, show that both of our proposed methods, \qselect and \aqselect, achieve the lowest distortion among all other prompt compression algorithms for low rates. However, the gap between all algorithms and the optimal strategies is significant. Surprisingly, even the gap between the query-aware prompt compression methods and the query-agnostic optimal strategy is quite large. We posit the quality of the training data for LLMLingua-2-based methods accounts for the discrepancy in how far away the best method is from the optimal strategies between binary (\prettyref{fig: comp_forced}) and natural language (\prettyref{fig: nlp_combined}) prompts. More specifically, the labels used for the binary synthetic dataset can be determined algorithmically and are optimal, but GPT-4 is used to determine the labels for the natural language dataset, which generally does not have a set of optimal ``ground truth'' labels. Remarkably, the gap between feeding the prompt directly to the black-box LLM (no compression) and either optimal prompt compression strategy is also large, and \prettyref{fig: nlp_combined} shows that much lower distortion can be achieved in roughly 70\% and 40\% fewer tokens for the query-aware and query-agnostic cases, respectively. LLMLingua-2 methods are the only methods that achieve lower distortion than the no compression result, albeit for higher rates. Finally, we present a few histograms of the rates for \qselect and \aqselect in \prettyref{fig: histograms}, which shows the greater range of rates that \aqselect may choose from over \qselect.

\subsubsection{Beam search for large-scale natural language dataset} \label{sec: beam_search}
Exactly computing the optimal rate-distortion curves for large-scale datasets, where the prompts consist of hundreds or thousands of tokens, is intractable. Even for our curated small-scale dataset, whose largest prompt consists of 15 tokens, it is not feasible to compute the ``true'' optimal curve outright. Instead, as described in \prettyref{sec: small_nlp}, we considered the set of compressed prompts constructed from in-place token removal from the original prompt, which significantly reduced the number of inference calls in constructing $\distvec_x$ from $\sum_{i = 1}^{\len(x) - 1} |\calV|^i$ to $2^{\len(x)}$, where $|\calV|$ is the size of vocabulary of the tokenizer and $x\in\calX$ is the prompt. While this approach does not yield the ``true'' optimal curve, it does provide an optimal curve to all existing prompt compression methods since they all strategically remove tokens in place. Thus, we can still establish the fundamental limit of existing methods!

However, when $\len(x)$ is a few hundred or thousand, as is the case for large-scale datasets, $2^{\len(x)}$ is far too many inference calls to make $\distvec_x$ to be practical. Ideally, one can approximate the optimal curve by finding an upper bound; to do this, we use beam search. Our search space is over all $2^{\len(x)}$ binary masks (a binary mask, when applied to the prompt, forms a compressed prompt). Each mask is assigned a distortion value by computing the distortion between the ground truth answer and the generated answer when the black-box LLM is given the mask's associated compressed prompt and the query. Thus, we can construct the search tree over the binary masks and use beam search to find masks with low distortion.

The search tree begins with the ``all ones'' mask at the root node ($l=0$); each of the root node's children ($l=1$) contains a bit flipped to 0 in precisely one of the $\len(x)$ positions. At the third level ($l=2$), each node inherits a 0 at the same position as its parent and then flips a 1 to a 0 so that each mask at $l=2$ has precisely two 0s. This pattern continues throughout the rest of the tree, resulting in masks with $l$ 0s at level $l$. Furthermore, the branching factor $F_l$ of our beam search is the number of children nodes a given node has at level $l$, is $N-l$. At level $l$, accounting for all nodes that are shared among the parent nodes in level $l-1$, there are ${N \choose l}$ nodes, where $N = \len(x)$. As a sanity check, this tree contains $\sum_{k=0}^{N} {N \choose k} = 2^{\len(x)}$ nodes in this tree, so all binary masks of length $N$ are in the tree. We can search every node in the tree as long as the beam width $B$ is large enough to contain every node at the broadest level of the tree. To achieve this, we need to set $B = \max_l {N \choose l} = {N \choose \lfloor N / 2 \rfloor}$.

Recall that our purpose for using beam search is to drastically reduce the number of LLM inference calls required for constructing $\distvec_x$ while retaining a competitive upper bound to the optimal curve. The cost $C$ associated with the beam search, which is the number of nodes visited in the tree (i.e., the number of LLM inference calls) is 

\begin{equation} \label{eqn: beam_search_cost}
C = B \sum_{l=0}^N F_l = B\sum_{l=0}^N (N - l)= B\sum_{k=0}^N k = \frac{B N(N+1)}{2}.
\end{equation}

Note that, in this case, we search over $C < |\calM_x|$ in \eqref{eqn: RD_prompt_LP} compressed prompts, so beam search will provide an upper bound. When $N$ is large, checking $\mathcal{O}(BN^2)$ nodes is a drastic reduction over checking $2^N$ nodes, but the growth rate is still too large. For example, running beam search on a single prompt of just 100 tokens will require on the order of $10^6$ LLM inference calls, or roughly 11.5 days if assuming a single inference call takes one second. To further reduce the number of checked nodes, we can chunk the binary masks into spans of bits, and flip entire spans from 1 to 0 at each level. With this approach, we can effectively control the length of the binary masks we search over, which we call $N_{\text{eff}}$, for a given budget/cost $C$. Solving \eqref{eqn: beam_search_cost} for $N$, we arrive at 

$$N_{\text{eff}} = \frac{-1 + \sqrt{1 + \frac{8C}{B}}}{2}$$

This approach allows us to choose the total number of LLM inference calls $C$ per prompt that we will spend on beam search for a particular beam width $B$. Since we are chunking the binary mask into spans and then masking the spans as we search, we have reduced the search space and the resulting tree will be smaller than the $N_{\text{eff}} = N$ case.

\prettyref{fig: narrativeqa_beam_search} compares existing methods to our beam search upper bounds for $C = 4000$ and $B = 5$. Although LLMLingua-2-based methods can outperform the upper bound in the query-agnostic case for most rates, the query-agnostic beam search approach shows room for improvement from existing methods. Impressively, the gap between existing methods and the query-aware case is quite large, even for the methods that use the query to compress the prompt. More details on the performance of compression methods on NarrativeQA are provided in \prettyref{app: narrativeqa}.

\begin{figure}[!t]
    \centering
    \includegraphics[width=0.9\textwidth]{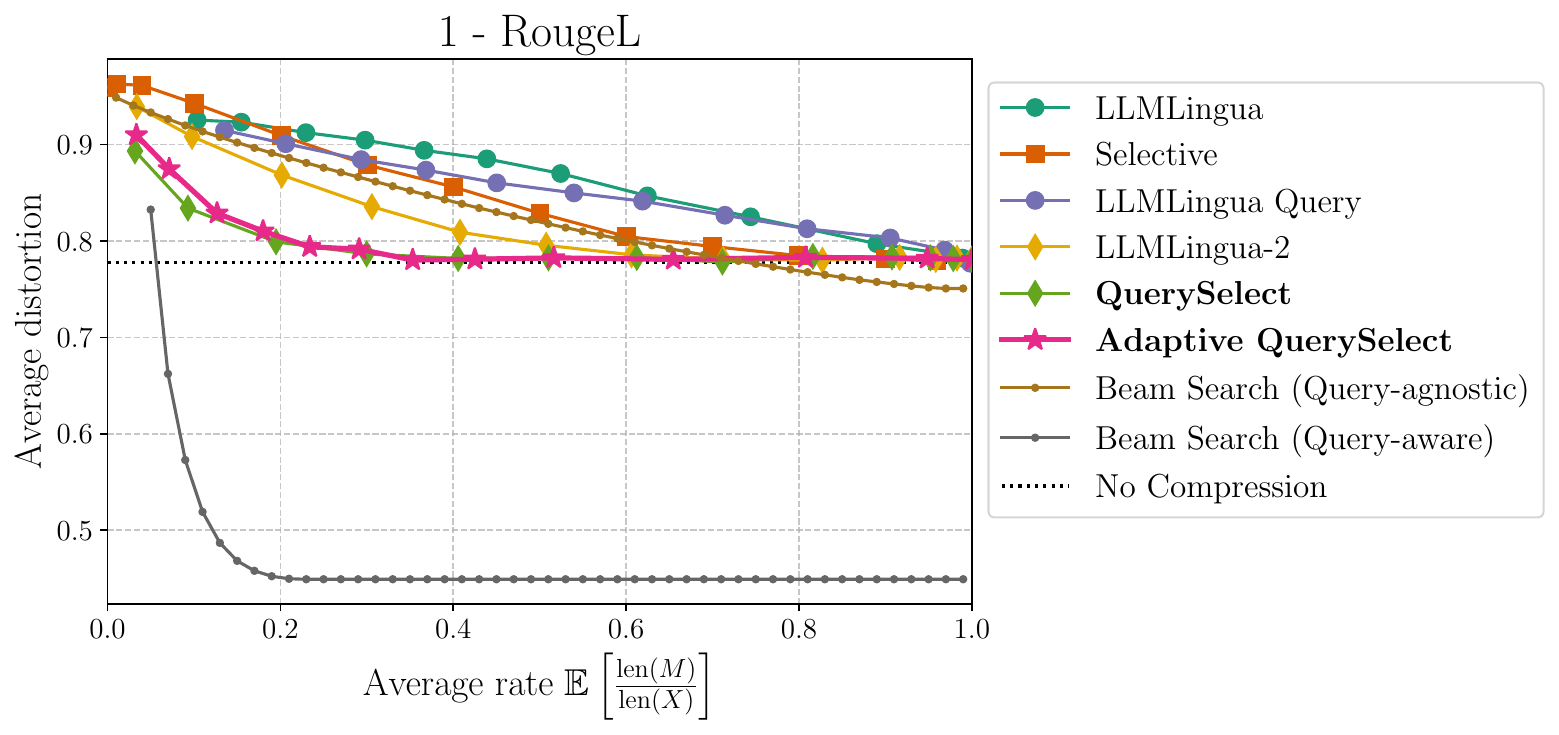}
    \caption{Comparison between existing prompt compression methods and our approximation to the optimal rate-distortion curves via beam search on NarrativeQA.}
    \label{fig: narrativeqa_beam_search}
\end{figure}

\looseness=-1


\section{Discussion and conclusion}
\label{sec: conclusion}

Building off previous works, we have proposed a framework for understanding the prompt compression problem for black-box target LLMs. With this framework, we defined and formulated the optimal distortion-rate trade-off as a linear program, and devised an algorithm to solve this efficiently via its dual, for both query-agnostic and query-aware settings. We computed the optimal trade-off exactly for synthetic datasets with binary prompts, and proposed methods to approximate the optimal trade-off for large-scale natural language datasets such as NarrativeQA.
We compared the optimal curves with prompt compression methods in the existing literature and adapt one of them, LLMLingua-2, to be query-aware and variable-rate; this modified method, which we call ``\aqselect,'' exhibits superior performance, sometimes even matching the performance of the optimal query-aware strategy, on our synthetic dataset. As future work, it is important to exhaustively study our proposed method on natural language datasets. 

\looseness=-1

\section*{Acknowledgements}
This work was partly supported by ARO Award W911NF2310062, ONR Award N000142412542, and the 6G$@$UT center within WNCG at UT Austin. 
The work was also supported in part by the Swiss National Science Foundation under Grant 200364. 
The authors would like to thank Ananda Theertha Suresh for introducing them to the problem of prompt compression. AG would like to thank Emre Telatar for helpful discussions on the problem formulation.

\printbibliography

\newpage


\newpage
\appendix
\section*{Appendix}

The appendix is organized as follows:
\begin{enumerate}
    \item In \prettyref{app: proofs_alg}, we prove the main result of the paper (\prettyref{thm: dual_lp}) and give a detailed description of the working of \prettyref{alg: dual-lp}.
    \item In \prettyref{app: rate-dist-info-theory}, we provide an overview of the information theory literature on rate-distortion theory, and explain how our model compares.
    \item In \prettyref{app: experiments}, we provide details on the setup used for the experiments in \prettyref{sec:experiments}.
    \item In \prettyref{app: results}, we have additional experimental results.
\end{enumerate}


\section{The dual linear program: proof and solution} \label{app: proofs_alg}

\subsection{Derivation of the dual linear program} \label{app: proofs}

\begin{proof}[Proof of \prettyref{thm: dual_lp}] 
      We start from the linear program \eqref{eqn: RD_prompt_LP} and construct its dual. Recall that \eqref{eqn: RD_prompt_LP} is given by
  \begin{align*}
D^*(R) =  \quad \inf_{\left(\zvec_x \in \bbR_{+}^{\calM_x}\right)_{x \in \calX}} \quad & \sum_{x \in \calX} \distvec_{x}^\top \zvec_x\\
\textrm{s.t.} \qquad 
            &  \sum_{x \in \calX} \ratevec_{x}^\top \zvec_x \leq R,   \\
            &  \mathbf{1}^\top \zvec_x = 1,  \quad \forall\, x \in \calX.  
        \end{align*}
    Introduce the Lagrange multipliers $\lambda \geq 0$ to handle the inequality constraint and $\mu_x \in \bbR$ for each $x \in \calX$ to handle the equality constraints. Then, the above equation is equivalent to
\begin{equation*}
    D^*(R) =  \inf_{\left(\zvec_x \in \bbR_{+}^{\calM_x}\right)_{x \in \calX}} \left\{\sum_{x \in \calX} \distvec_{x}^\top \zvec_x 
    + \sup_{\lambda \geq 0} \lambda \left( \sum_{x \in \calX} \ratevec_{x}^\top \zvec_x - R \right)
    + \sum_{x \in \calX} \left[\sup_{\mu_x \in \bbR} \mu_x\left(\mathbf{1}^\top \zvec_x - 1 \right)\right] \right\}.
\end{equation*} \sloppy
To see why this equivalence holds, observe that the terms $\sup_{\lambda \geq 0} \lambda \left( \sum_{x \in \calX} \ratevec_{x}^\top \zvec_x - R \right)$ and $\sup_{\mu_x \in \bbR} \mu_x\left(\mathbf{1}^\top \zvec_x - 1 \right)$ are both $0$ when $(\zvec_x)_{x \in \calX}$ is in the feasible set of \eqref{eqn: RD_prompt_LP} and $+\infty$ otherwise. Let $\mu \triangleq (\mu_x)_{x \in \calX} \in \bbR^\calX$, then we can simplify the above expression by rearranging terms, to obtain
\begin{equation*}
    D^*(R) =  \inf_{\left(\zvec_x \in \bbR_{+}^{\calM_x}\right)_{x \in \calX}}\enspace \sup_{\substack{\mu \in \bbR^{\calX},\\ \lambda \geq 0}} \enspace \left\{\sum_{x \in \calX} \left(\distvec_{x} + \lambda \ratevec_{x} + \mu_x\mathbf{1} \right)^\top\zvec_x 
    - \lambda R
    - \sum_{x \in \calX} \mu_x \right\}.
\end{equation*}
Note that the objective $\sum_{x \in \calX} \left(\distvec_{x} + \lambda \ratevec_{x} + \mu_x\mathbf{1} \right)^\top\zvec_x 
    - \lambda R
    - \sum_{x \in \calX} \mu_x$ is linear in $(\zvec_x)_{x \in \calX}$ and in $(\mu, \lambda)$, and the minimization and maximization are both over convex sets. Hence, by the minmax theorem \cite{minmax-neumann, minmax-simons}, we can switch their order without affecting the equality, i.e.,
\begin{equation*}
    D^*(R) =  \sup_{\substack{\mu \in \bbR^{\calX},\\ \lambda \geq 0}}\enspace \inf_{\left(\zvec_x \in \bbR_{+}^{\calM_x}\right)_{x \in \calX}} \left\{\sum_{x \in \calX} \left(\distvec_{x} + \lambda \ratevec_{x} + \mu_x\mathbf{1} \right)^\top\zvec_x 
    - \lambda R
    - \sum_{x \in \calX} \mu_x \right\}.
\end{equation*}
If, for some $x$, there is a component of the vector $\distvec_{x} + \lambda \ratevec_{x} + \mu_x\mathbf{1} \in \bbR^{\calM_x}$ that is negative, then letting that component of $\zvec_x$ go to infinity, we have that the inner infimum is $-\infty$. On the other hand, if every component of $\distvec_{x} + \lambda \ratevec_{x} + \mu_x\mathbf{1}$ is nonnegative for every $x$, then the infimum is simply $0$, attained by setting $\zvec_x = \mathbf{0}$. Hence, the above equation reduces to 
  \begin{align*}
D^*(R) =  \quad \sup_{\substack{\mu \in \bbR^{\calX},\\ \lambda \geq 0}} \quad & - \lambda R
    - \sum_{x \in \calX} \mu_x \\
\textrm{s.t.} \qquad 
            &  \distvec_{x,m} + \lambda \ratevec_{x,m} + \mu_x \geq 0 \quad \text{ for every } m \in \calM_x \text{ and } x\in \calX.
    \end{align*}
For a given $x$, the constraint $\distvec_{x,m} + \lambda \ratevec_{x,m} + \mu_x \geq 0$ for all $m \in \calM_x$ is equivalent to $- \mu_x \leq  \min_{m \in \calM_x} \left(\distvec_{x,m} + \lambda \ratevec_{x,m} \right)$. Letting $\nu_x \triangleq \min_{m \in \calM_x} \left(\distvec_{x,m} + \lambda \ratevec_{x,m} \right) + \mu_x$ and $\nu \triangleq (\nu_x)_{x \in \calX}$, the constraint is simply that $\nu_x \geq 0$ for all $x$, or equivalently, $\nu \in \bbR^\calX_+$. Hence, the above equation can be written as
\begin{equation*}
    D^*(R) = \sup_{\substack{\nu \in \bbR_+^{\calX},\\ \lambda \geq 0}} \enspace - \lambda R + \sum_{x \in \calX} \min_{m \in \calM_x} \left(\distvec_{x,m} + \lambda \ratevec_{x,m} \right) - \sum_{x \in \calX} \nu_x.
\end{equation*}
Observe that only the first two terms depend on $\lambda$, and only the last term depends on $\nu$. This lets us optimize over $\lambda$ and $\nu$ separately, to give
\begin{align*}
    D^*(R) &= \sup_{\lambda \geq 0} \left\{ - \lambda R + \sum_{x \in \calX} \min_{m \in \calM_x} \left(\distvec_{x,m} + \lambda \ratevec_{x,m} \right) \right\} + \sup_{\nu \in \bbR_+^\calX} \left(- \sum_{x \in \calX} \nu_x\right) \\
    &= \sup_{\lambda \geq 0} \left\{ - \lambda R +\sum_{x \in \calX} \min_{m \in \calM_x} \left(\distvec_{x,m} + \lambda \ratevec_{x,m} \right) \right\},
\end{align*}
since $\sup_{\nu \in \bbR_+^\calX} \left(- \sum_{x \in \calX} \nu_x\right) = - \inf_{\nu \in \bbR_+^\calX}\sum_{x \in \calX} \nu_x = - \sum_{x \in \calX}\inf_{\nu_x \geq 0} \nu_x = 0$, and we are done.
\end{proof}

\subsection{Proof and illustration of \prettyref{alg: dual-lp}} \label{app: dual-lp-eg}

In this section, we explain each step of \prettyref{alg: dual-lp} in detail. 
In doing so, we prove that the algorithm does indeed solve \eqref{eqn: RD_prompt_dual_LP}, i.e., computes
\begin{equation*}
    D^*(R) =  \sup_{\lambda \geq 0} \left\{ - \lambda R + \sum_{x \in \calX} \min_{m \in \calM_x} \left[ \distvec_{x,m} + \lambda \ratevec_{x,m} \right] \right\}.
\end{equation*}
We also use an artificial example as described below to show the working of the algorithm, in particular lines 1--9. For convenience, the algorithm is repeated verbatim below, without comments:
\setcounter{algocf}{0}
\begin{algorithm}[!ht]
\caption{To compute the distortion-rate function via the dual linear program \eqref{eqn: RD_prompt_dual_LP}}
\label{alg: dual-lp-copy}
\newcommand{\lIfElse}[3]{\lIf{#1}{#2 \textbf{else}~#3}}
\SetKwComment{Comment}{$\triangleright$\ }{}
\newcommand\mycommfont[1]{\footnotesize\rmfamily\textcolor{blue!85!black}{#1}}
\SetCommentSty{mycommfont}
\SetKwInput{KwInput}{Input}                
\SetKwInput{KwOutput}{Output}    
\newcommand\myKwInputOutput[2]{%
  \textbf{Input:} #1; \hfill \textbf{Output:} #2;%
}
\myKwInputOutput{$R$, $\left( \distvec_x\right)_{x \in \calX}$, $\left( \ratevec_x\right)_{x \in \calX}$}{$D^*(R)$, the distortion-rate function at rate $R$}\vspace{5pt}

  \For{$x \in \calX$}{
    Find $\calM^{(x)}_{\text{env}} \subseteq \calM_x$ such that $\left\{\left(\ratevec_{x, m}, \distvec_{x, m}\right)\right\}_{m \in \calM^{(x)}_{\text{env}}}$ are on the lower-left convex boundary of $\left\{\left(\ratevec_{x, m}, \distvec_{x, m}\right)\right\}_{m \in \calM_x}$\;
    $\left\{m_1^{(x)}, m_2^{(x)}, \dots,m_{k_x}^{(x)}\right\} \leftarrow \calM^{(x)}_{\text{env}}$ ordered such that $\ratevec_{x, m_{k_x}^{(x)}} > \dots > \ratevec_{x, m_{1}^{(x)}}$\;
    \lFor{$i = 1,\dots,k_x - 1$}{$\lambda_i^{(x)} \leftarrow \frac{\distvec_{x, m_i^{(x)}} - \distvec_{x, m_{i+1}^{(x)}}}{\ratevec_{x, m_{i+1}^{(x)}} - \ratevec_{x, m_i^{(x)}}}$}
   $\lambda_{0}^{(x)} \leftarrow +\infty$;  $\lambda_{k_x}^{(x)} \leftarrow 0$; $\Lambda^{(x)} \leftarrow \left\{\lambda_0^{(x)}, \lambda_1^{(x)}, \dots, \lambda_{k_x-1}^{(x)}, \lambda_{k_x}^{(x)}\right\}$\;
  }
  $\left\{\tilde \lambda_0, \dots, \tilde \lambda_k\right\} \leftarrow \bigcup_{x \in \calX} \Lambda^{(x)}$ with $+\infty = \tilde \lambda_0 > \tilde \lambda_1 > \dots > \tilde \lambda_{k-1} > \tilde \lambda_k = 0$ \;
  \For{$x \in \calX$}{
    \lFor{$j = 1,\dots,k$}{
        Find $i \in \left\{1,\dots,k_x\right\} : \left(\lambda^{(x)}_i, \lambda^{(x)}_{i-1}\right) \supseteq \left(\tilde \lambda_j, \tilde \lambda_{j-1}\right)$; \enspace set $\tilde m^{(x)}_j \leftarrow m^{(x)}_i$
    }
  }
  \For{$j = 1,\dots,k$}{      \lIfElse{$\sum_{x \in \calX}\ratevec_{x, \tilde m_j^{(x)}} > R$}
    {
       $\lambda_j \leftarrow \tilde \lambda_{j-1}$
    }
    {
         $\lambda_j \leftarrow \tilde \lambda_{j}$
    }  
    $D_j \leftarrow -\lambda_{j}R + \sum_{x \in \calX} \left[\distvec_{x, \tilde m_j^{(x)}} +  \lambda_{j}\ratevec_{x, \tilde m_j^{(x)}} \right]$\;
  }
  Return $\max_{j=1,\dots,k} D_j$\;
\end{algorithm}

Consider the following artificial example with $\calX = \{\alpha, \beta\}$. Let $(\ratevec_{\alpha}, \distvec_{\alpha})$ and $(\ratevec_{\beta}, \distvec_{\beta})$ be as given by the blue points in the scatter plots over $m \in \calM_{\alpha}$ and $m \in \calM_{\beta}$ respectively in \prettyref{fig: algo_example}. In our example, we have $|\calM_{\alpha}| = 11$ and $|\calM_{\beta}| = 8$. The following observation is crucial: recall the definitions of $\ratevec_{x}$ and $\distvec_{x}$,
\begin{equation*}
    \distvec_{x,m} \triangleq \probP_X(x)\,\bbE\left[\distlog(Y, \LLMf{\phi}(m, Q))\right]\quad\text{and} \quad
    \ratevec_{x,m} \triangleq \probP_X(x)\,\frac{\len(m)}{\len(x)}, \qquad m \in \calM_x, 
\end{equation*}
with the expectation computed with respect to $\probP_{QY|MX}(\cdot,\cdot | m,x)$.
For a fixed value of $x$, the positive real numbers $\distvec_{x,m}$ can be arbitrary, but $\ratevec_{x,m}$ must be an integral multiple of the constant $\frac{\probP_X(x)}{\len(x)}$. Hence, for a given $x$, $\ratevec_{x,m}$ takes at most $\len(x)$ possible values. This turns out to be extremely beneficial in the first step, namely identifying the points on the lower-left convex boundary.

\begin{figure}[!htbp]
    \centering
    \begin{tikzpicture}[scale=8]
    \draw[->] (0,0) -- (0.5,0) node[anchor=north] {$\ratevec_{\alpha,m}$};
    \draw[->] (0,0) -- (0,0.5) node[anchor=east] {$\distvec_{\alpha,m}$};
    \foreach \x in {0.1,0.2,0.3,0.4} {
        \draw (\x,0.01) -- (\x,-0.01) node[anchor=north] {\footnotesize \x};
        \draw (0.01,\x) -- (-0.01,\x) node[anchor=east] {\footnotesize\x};
        \draw[line width=0.1mm, dotted] (\x,0) -- (\x,0.45);
        \draw[line width=0.1mm, dotted] (0,\x) -- (0.45,\x);
    }
    \draw[red, thick, -] (0.1,0.3) -- (0.2,0.15) -- (0.4, 0.05);
    \node at ($(0.1,0.3)!0.5!(0.2,0.15)$) [left] {\tiny$-\lambda_1^{(\alpha)}$};
    \node at ($(0.2,0.15)!0.5!(0.4,0.05)$) [below] {\tiny$-\lambda_2^{(\alpha)}$};
    \foreach \x/\y in {0.1/0.3, 0.1/0.39, 
                       0.2/0.15, 0.2/0.27, 0.2/0.42, 
                       0.3/0.125, 0.3/0.28, 0.3/0.35,
                       0.4/0.05, 0.4/0.2, 0.4/0.31} {
        \fill[blue] (\x,\y) circle (0.009875cm);
    }
    \foreach \x/\y in {0.1/0.3, 0.2/0.15, 0.4/0.05} {
        \draw[red, very thick] (\x,\y) circle (0.009875cm);
    }
    \node[anchor=east] at (0.1,0.3) {\footnotesize$m^{(\alpha)}_1$};
    \node[anchor=north] at (0.2,0.15) {\footnotesize$m^{(\alpha)}_2$};
    \node[anchor=west] at (0.4,0.05) {\footnotesize$m^{(\alpha)}_3$};
\end{tikzpicture}
\begin{tikzpicture}[scale=5]
    \draw[->] (0,0) -- (0.8,0) node[anchor=north] {$\ratevec_{\beta,m}$};
    \draw[->] (0,0) -- (0,0.8) node[anchor=east] {$\distvec_{\beta,m}$};
    \foreach \x in {0.2, 0.4, 0.6} {
        \draw (\x,0.01) -- (\x,-0.01) node[anchor=north] {\footnotesize \x};
        \draw (0.01,\x) -- (-0.01,\x) node[anchor=east] {\footnotesize\x};
        \draw[line width=0.1mm, dotted] (\x,0) -- (\x,0.7);
        \draw[line width=0.1mm, dotted] (0,\x) -- (0.7,\x);
    }

    \draw[red, thick, -] (0.2,0.4) -- (0.4,0.2);
    \node at ($(0.21,0.39)!0.5!(0.39,0.21)$) [left] {\tiny$-\lambda_1^{(\beta)}$};
    \foreach \x/\y in {0.2/0.4, 0.2/0.6, 
                       0.4/0.2, 0.4/0.45, 0.4/0.65, 0.6/0.25, 0.6/0.5, 0.6/0.58} {
        \fill[blue] (\x,\y) circle (0.015cm);
    }
    \foreach \x/\y in {0.2/0.4, 0.4/0.2} {
        \draw[red, very thick] (\x,\y) circle (0.015cm);
    }
    \node[anchor=east] at (0.2,0.4) {\footnotesize$m^{(\beta)}_1$};
    \node[anchor=north] at (0.4,0.2) {\footnotesize$m^{(\beta)}_2$};
\end{tikzpicture} 
    \caption{Scatter plots showing the points $\{(\ratevec_{\alpha, m}, \distvec_{\alpha, m})\}_{m \in \calM_{\alpha}}$ and $\{(\ratevec_{\beta, m}, \distvec_{\beta, m})\}_{m \in \calM_{\beta}}$ in {\color{blue}blue}. The associated lower-left convex boundaries $\calM_{\text{bd}}^{(\alpha)} = \{m_1^{(\alpha)}, m_2^{(\alpha)}, m_3^{(\alpha)}\}$ and $\calM_{\text{bd}}^{(\beta)} = \{m_1^{(\beta)}, m_2^{(\beta)}\}$ are in {\color{red}red}; $\lambda_1^{(\alpha)}$, $\lambda_2^{(\alpha)}$ and $\lambda_1^{(\beta)}$ are the magnitudes of the slopes of the associated line segments.}
    \label{fig: algo_example}
\end{figure}

For $x \in \calX$, lines 2--3 of the algorithm identify the $k_x$ points $m_1^{(x)},\dots,m_{k_x}^{(x)}$ that lie on the lower-left convex boundary of $\left\{\left(\ratevec_{x, m}, \distvec_{x, m}\right)\right\}_{m \in \calM_x}$. The lower-left convex boundaries are given by the red lines and the points lying on the boundary are outlined in red. Observe that $k_{\alpha} = 3$ and $k_{\beta} = 2$. 
The quantities computed in line 5 are simply the magnitudes of the slopes of the line segments on the boundary. A simple computation gives the result of line 6 of the algorithm in our example to be $\Lambda^{(\alpha)} = \{+\infty, 1.5, 0.5, 0\}$ and $\Lambda^{(\beta)} = \{+\infty, 1, 0\}$. Clearly, for a given value of $x \in \calX$ and $\lambda \in \big[\lambda_j^{(x)}, \lambda_{j-1}^{(x)}\big)$, we have that $m_i^{(x)}$ minimizes $\distvec_{x,m} + \lambda \ratevec_{x,m}$ over all $m \in \calM_x$, by virtue of the fact that these points come from the lower-left convex boundary. Hence, for $\lambda \in \big[\lambda_j^{(x)}, \lambda_{j-1}^{(x)}\big)$, we have
\begin{equation*}
\sum_{x \in \calX} \min_{m \in \calM_x} \left[ \distvec_{x,m} + \lambda \ratevec_{x,m} \right]
    = \sum_{x \in \calX}\left[ \distvec_{x,m_j^{(x)}} + \lambda \ratevec_{x,m_j^{(x)}}\right].
\end{equation*}
We cannot simplify this further in its current state since $\lambda$ in the above expression depends on $x$. Hence, we must remove the dependence of the range $\left[\lambda_j^{(x)}, \lambda_{j-1}^{(x)}\right)$ on $x$. To do so, observe that each $\Lambda^{(x)}$ is a partition of $\bbR_+$ on which $m_i^{(x)}$ is the minimizer of $\distvec_{x,m} + \lambda \ratevec_{x,m}$. 
Line 7 of the algorithm simply constructs the union of all these partitions, with $k$ elements, denoted by the $\tilde \lambda$ variables; here we have $k = 4$ and the union is $\{+\infty, 1.5, 1, 0.5, 0\}$.
For each $x$, the minimizer on each interval $\big[\tilde \lambda_j, \tilde \lambda_{j-1}\big)$ of the finer partition is known exactly to be one of the $m_i^{(x)}$'s; lines 8--9 associate to each interval the corresponding minimizer, given by $\tilde m_j^{(x)}$. 
There is no computation involved in these steps, only notational rewriting. 
The corresponding values obtained for our example are given in the table below (with $\tilde \lambda_0 = +\infty)$; observe that $\tilde m_j^{(x)}$ minimizes $\distvec_{x,m} + \lambda \ratevec_{x,m}$ over $m \in \calM_x$ for $\lambda \in \big[\tilde \lambda_j, \tilde \lambda_{j-1}\big)$.
\begin{table*}[!htbp]
    \centering
    \caption{The outputs produced by lines 7--9 of \prettyref{alg: dual-lp-copy} with $(\ratevec_{\alpha}, \distvec_{\alpha})$ and $(\ratevec_{\beta}, \distvec_{\beta})$ as given in \prettyref{fig: algo_example}.}
    \label{tab: example_vals}
    \begin{tabular}{cccc}\toprule
        $j$ &  $\tilde \lambda_j$ &  $\tilde m_j^{(\alpha)}$ & $\tilde m_j^{(\beta)}$ \\\midrule
        1 & 1.5 & $m_1^{(\alpha)}$ & $m_1^{(\beta)}$ \\
        2 & 1 & $m_2^{(\alpha)}$ & $m_1^{(\beta)}$ \\
        3 & 0.5 & $m_2^{(\alpha)}$ & $m_2^{(\beta)}$ \\
        4 & 0 & $m_3^{(\alpha)}$ & $m_2^{(\beta)}$ \\\bottomrule
    \end{tabular}
\end{table*}

\sloppy
At this point, we have for $\lambda \in \big[\tilde \lambda_j, \tilde \lambda_{j-1}\big)$,
\begin{align*}
\sum_{x \in \calX} \min_{m \in \calM_x} \left[ \distvec_{x,m} + \lambda \ratevec_{x,m} \right]
    &= \sum_{x \in \calX}\left[ \distvec_{x,\tilde m_j^{(x)}} + \lambda \ratevec_{x,\tilde m_j^{(x)}}\right]\\
&= \left(\textstyle\sum_{x \in \calX} \distvec_{x,\tilde m_j^{(x)}} \right)+ \lambda \left(\textstyle\sum_{x \in \calX}\ratevec_{x,\tilde m_j^{(x)}}  \right).
\end{align*}
Hence, the right-hand side of \eqref{eqn: RD_prompt_dual_LP} is simply
\begin{align*}
    & \quad \max_{j = 1,\dots,k} \sup_{\lambda \in [\tilde \lambda_j, \tilde \lambda_{j-1})} \left\{ \left(\textstyle\sum_{x \in \calX} \distvec_{x,\tilde m_j^{(x)}} \right)+ \lambda \left(\textstyle\sum_{x \in \calX}\ratevec_{x,\tilde m_j^{(x)}} - R \right) \right\}\\
    &= \max_{j = 1,\dots,k} \left\{\left(\textstyle\sum_{x \in \calX} \distvec_{x,\tilde m_j^{(x)}} \right) + \sup_{\lambda \in [\tilde \lambda_j, \tilde \lambda_{j-1})} \lambda \left(\textstyle\sum_{x \in \calX}\ratevec_{x,\tilde m_j^{(x)}} - R \right) \right\},
\end{align*}
where the first equality follows since $\big\{\tilde \lambda_j\big\}_{j=0}^{k}$ is a partition of $\bbR_+$. Consider the term $\sup_{\lambda \in [\tilde \lambda_j, \tilde \lambda_{j-1})} \lambda \left(\textstyle\sum_{x \in \calX}\ratevec_{x,\tilde m_j^{(x)}} - R \right)$. If $\ratevec_{x,\tilde m_j^{(x)}} > R$, this supremum occurs in the limit as $\lambda \to \tilde \lambda_{j-1}$, otherwise it is achieved at $\lambda = \tilde \lambda_{j}$. Hence, defining $\lambda_j$ to be $\tilde \lambda_{j-1}$ or $\tilde \lambda_{j}$ accordingly as in line 11, we have that the above expression is simply $ \max_{j = 1,\dots,k} \left\{\left(\textstyle\sum_{x \in \calX} \distvec_{x,\tilde m_j^{(x)}} \right) + \lambda_j\left(\textstyle\sum_{x \in \calX}\ratevec_{x,\tilde m_j^{(x)}} - R \right) \right\}$, which is exactly what line 13 returns. Hence, we have that the algorithm correctly computes the distortion-rate function  $D^*(R)$.


\section{Connections to information theory literature} \label{app: rate-dist-info-theory}
Rate-distortion theory is an area of information theory introduced by Shannon \cite{shannon1959coding} to study the fundamental limits of source compression. The simplest rate-distortion setup is shown in \prettyref{fig: IT-no-side-info}: We are given a source which generates samples $X_1,\dots,X_n$ independently and identically distributed (i.i.d.) according to the distribution $\probP_X$ on the set $\calX$. We are also given a reconstruction alphabet $\hat \calX$, which may or may not be equal to $\calX$. The goal is to compress $X^n$ to a sequence of $k$ elements from an alphabet $\calV$, such that a reconstruction onto $\hat \calX^n$ is as ``faithful'' as possible, while keeping $k$ as small as possible (in the information theory literature, $\calV = \{0,1\}$ typically). The fidelity of representation is quantified by a \emph{distortion function} $\dist : \calX \times \hat \calX \to [0,\infty]$. For example, 
the problem of compressing images with $p$ real-valued pixels into bit sequences can be cast in this formulation by taking $\calX = \hat\calX = \bbR^p$, $\calV = \{0,1\}$, and the squared-loss distortion function $\dist(x,\hat x) = \|x-\hat x\|_2^2$.

Formally, the goal is to construct an encoder $\enc : \calX^n \to \calV^k$ and a decoder $\dec : \calV^k \to \hat\calX^n$ such that: (1) the \emph{rate} $k/n$, and (2) the \emph{(average) distortion} $\bbE\left[\dist(X^n, \dec(\enc(X^n))\right]$, are both as small as possible. We say that the rate-distortion pair $(R,D)$ is \emph{achievable} for the source $\probP_X$ under the distortion function $\dist$ if there exists an $(\enc, \dec)$ pair with rate at most $R$ and average distortion at most $D$. If the pair $(R,D)$ is achievable, then clearly, for $\tilde R \geq R$ and $\tilde D \geq D$, the pair $(\tilde R, \tilde D)$ is also achievable. Thus, the quantity of interest to us is the lower boundary of the set of achievable $(R,D)$ pairs. This is given by the \emph{distortion-rate function} $D^*$, which is defined as follows: the distortion-rate function at rate $R$ is the smallest distortion $D$ such that the pair $(R,D)$ is achievable, or equivalently,
\begin{align}
    D^*(R) &\triangleq \inf \{D \geq 0 \mid (R,D)\text{ is achievable}\}     \label{eqn: R_D}   \\
    &= \inf \{D \geq 0 \mid \text{there exists } (\enc, \dec) \text{ with rate}\leq R \text{ and distortion}\leq D\}. \nonumber
\end{align}
Note that the distortion-rate function depends on the source $\probP_X$ and the choice of distortion measure. Closed form expressions are known in some cases, the reader is encouraged to refer to classical texts on information theory~\cite{BergerBook, cover-thomas, El_Gamal_Kim_2011} for examples. It is important to note that the distortion-rate function is a fundamental limit; no choice of encoder and decoder can give a lower rate and a lower distortion. Thus, the distortion-rate function characterizes the Pareto-optimal front of the trade-off between rate and distortion. It is more common in the information theory literature to define the \emph{rate-distortion function} $R^*(D)$, which is the smallest rate $R$  such that the pair $(R,D)$ is achievable. The two functions trace the same curve when plotted on the same two-dimensional plane.

Several variants of this problem can be defined by introducing the notion of \emph{side-information}, where we have i.i.d.\ samples $(X_1,Q_1),\dots,(X_n, Q_n)$ of a pair of correlated random variables $(X,Q) \sim \probP_{XQ} \in \probset(\calX \times \calQ)$. A natural question to ask is what improvement is possible in terms of the rate-distortion trade-off for $X^n$ when either the encoder or decoder or both have access to this side-information $Q^n$, which is correlated with $X$. If only the encoder has access to $Q^n$, then no improvement can be obtained. If both the encoder and decoder have access to $Q^n$ as shown in \prettyref{fig: IT-side-info-enc} and studied by Gray~\cite{Gray--1972},
then, clearly, an improvement is possible.
Surprisingly, we can obtain nontrivial improvements when the decoder has access to $Q^n$ as shown in \prettyref{fig: IT-side-info-dec} and studied by Wyner and Ziv \cite{wyner-ziv1,wyner-ziv2}. 

\begin{figure}[!htbp]
    \centering
        \begin{subfigure}{0.5\textwidth}
            \centering
            \begin{tikzpicture}[auto, node distance=2cm,>=latex']
    \node [coordinate] (prompt) {};
    \node [block, right of=prompt, node distance=1cm] (comp) {enc};
    \node [block, right of=comp, node distance=2cm] (llm) {dec};
    \node [coordinate, right of=llm, node distance=1cm] (output) {};

    \draw [->] node[anchor=east] {$X^n$} (prompt) --  (comp);
    \draw [->] (comp) -- node {$M$} (llm);
    \draw [->] (llm) -- node[anchor=west] {\hspace{0.3em}$\hat X^n$} (output);
\end{tikzpicture}\vspace{10pt}
            \caption{No side-information \cite{shannon1959coding}}
            \label{fig: IT-no-side-info}
        \end{subfigure}\hfill
        \begin{subfigure}{0.5\textwidth}
            \centering
            \begin{tikzpicture}[auto, node distance=2cm,>=latex']
    \node [coordinate] (prompt) {};
    \node [block, right of=prompt, node distance=1cm] (comp) {enc};
    \node [block, right of=comp, node distance=2cm] (llm) {dec};
    \node [coordinate, right of=llm, node distance=1cm] (output) {};
    \node [coordinate, below of=prompt, node distance=1cm] (query) {};

    \draw [->] node[anchor=east] {$X^n$} (prompt) --  (comp);
    \draw [->] (comp) -- node {$M$} (llm);
    \draw [->] (llm) -- node[anchor=west] {\hspace{0.3em}$\hat X^n$} (output);
    \draw [->] node[yshift=-1cm, anchor=east] {$Q^n$} (query) -|  (llm);
\end{tikzpicture}
            \caption{Side-information at only the decoder \cite{wyner-ziv1,wyner-ziv2}}
            \label{fig: IT-side-info-dec}
        \end{subfigure}\\\vspace{10pt}
        \begin{subfigure}{0.5\textwidth}
            \centering
             \begin{tikzpicture}[auto, node distance=2cm,>=latex']
    \node [coordinate] (prompt) {};
    \node [block, right of=prompt, node distance=1cm] (comp) {enc};
    \node [block, right of=comp, node distance=2cm] (llm) {dec};
    \node [coordinate, right of=llm, node distance=1cm] (output) {};
    \node [coordinate, below of=prompt, node distance=1cm] (query) {};

    \draw [->] node[anchor=east] {$X^n$} (prompt) --  (comp);
    \draw [->] (comp) -- node {$M$} (llm);
    \draw [->] (llm) -- node[anchor=west] {\hspace{0.3em}$\hat X^n$} (output);
    \draw [->] node[yshift=-1cm, anchor=east] {$Q^n$} (query) -|  (llm);
    \draw [->] node[yshift=-1cm, anchor=east] {} (query) -|  (comp);

\end{tikzpicture}
            \caption{Side-information at the encoder and the decoder \cite{Gray--1972}} 
            \label{fig: IT-side-info-enc}
        \end{subfigure}\hfill
        \begin{subfigure}{0.5\textwidth}
            \centering
             \begin{tikzpicture}[auto, node distance=2cm,>=latex']
    \node [coordinate] (prompt) {};
    \node [block, right of=prompt, node distance=1cm] (comp) {enc};
    \node [block, right of=comp, node distance=2cm] (llm) {dec};
    \node [coordinate, right of=llm, node distance=1cm] (output) {};
    \node [coordinate, below of=prompt, node distance=1cm] (query) {};

    \draw [->] node[anchor=east] {$X^n$} (prompt) --  (comp);
    \draw [->] (comp) -- node {$M$} (llm);
    \draw [->] (llm) -- node[anchor=west] {\hspace{0.3em}$\hat Z^n$} (output);
    \draw [->] node[yshift=-1cm, anchor=east] {$Q^n$} (query) -|  (llm);
\end{tikzpicture}
            \caption{For function computation, $Z = f(X,Q)$ \cite{yamamoto_func_source_coding}}
            \label{fig: IT-side-info-func}
        \end{subfigure}
    \caption{Rate-distortion models of compression.}
    \label{fig: IT-models}
\end{figure}

These models resemble our setups for query-aware and query-agnostic prompt compression respectively, with a key difference being that the decoder in our problem is the pretrained LLM, which is fixed. An rate-distortion setup that is closer to our problem in this sense is that of compression for function computation, introduced by \cite{yamamoto_func_source_coding}. Here, the goal is to recover an estimate $\hat Z^n$ that is close to $Z^n$, with $Z_i = f(X_i, Q_i)$ for some desired function $f$. At first glance, it might appear that this setup is exactly our model for prompt compression, but this turns out to be false --- the desired output is an estimate of $f(X, Q)$, but the decoder can be designed to compute \emph{any arbitrary function} of $M$ and $Q$. In prompt compression, we have the constraint that the function computed by the decoder is fixed to be $\LLMf{\phi}$, in addition to requiring that the output be close to some function of $X$ and $Q$. Thus, our model for prompt compression actually corresponds to a rate-distortion problem for function computation with side-information \emph{with a fixed decoder}, which has not been studied before, to the best of our knowledge. The distortion-rate function $D^*(R)$ for this setup is given by \eqref{eqn: RD_prompt_LP} and \eqref{eqn: RD_prompt_dual_LP}. A closed form expression for $D^*(R)$ cannot be obtained without making further assumptions on $\LLMf{\phi}$; nonetheless, $D^*(R)$ can be computed for any $\LLMf{\phi}$ by solving \prettyref{alg: dual-lp}.



\section{Experiment details} \label{app: experiments}

\subsection{Synthetic data experiments}
We provide additional details regarding our synthetic dataset and how we fine-tuned all models. Experiments were run on three different machines, two of which are identical machines with an AMD Ryzen Threadripper PRO 5975WX CPU (32 cores), 256 GB of system RAM, and 2x Nvidia RTX 4090 GPUs with 24 GB each. We also ran experiments on a DGX machine with an AMD EPYC 7742 64-Core Processor, 512 GB of system RAM, and 4x 80GB SMX4 A100 GPUs. The duration of LLM fine-tuning on the synthetic dataset varies, depending on the model being fine-tuned. In general, it takes 10 to 30 minutes for a single fine-tuning run. Running the code necessary to reproduce all plots takes several hours.

We use code from the LLMLingua and Selective Context GitHub repos, which are released under the MIT license. In our experiments, we use the following models: Mistral 7B Instruct v0.2 (Apache-2.0), RoBERTa Base (MIT), and Pythia 1B deduped (Apache-2.0).

\sloppy Each method requires a rate or threshold parameter $r$, for which we use $r \in \{0.04, 0.1, 0.2, 0.3, 0.4, 0.5, 0.6, 0.7, 0.8, 0.9, 0.96, 0.99, 1.0\}$ in our experiments. However, the length of the returned compressed prompt might not be faithful to this rate parameter, so for each $r$, we report the average rate and average distortion on the examples in our validation dataset. LLMLingua and LLMLingua Query have one additional parameter controlling the size of each segment the prompt is broken into before compressing each segment. We found that using a segment size of 2 works best for our synthetic dataset.

\subsubsection{Synthetic dataset} \label{app: syn_dataset}
Solving for the optimal rate-distortion curve requires a substantial amount of compute as mentioned in \prettyref{sec:experiments}. To overcome this, we construct a synthetic dataset consisting of binary string prompts, natural language queries, and numerical and yes/no answers. We show a few examples of the validation partition of our synthetic dataset in \prettyref{tab: binary_data}.
\begin{table}[htbp]
\centering
\caption{One example of each query from the validation set of our synthetic dataset}
\label{tab: binary_data}
\begin{tabular}{>{\raggedright\arraybackslash}p{3cm} >{\raggedright\arraybackslash}p{5cm} >{\centering\arraybackslash}p{2cm}}
\toprule
\textbf{Prompt} & \textbf{Query} & \textbf{Answer} \\
\midrule
110011111 & Count the number of 1s. & 7 \\
11111 & Count the number of 0s. & 0 \\
00000111 & Compute the parity. & 1 \\
11011111 & What is the length of the longest subsequence of 0s or 1s? & 5 \\
0110 & Is the binary string a palindrome? & Yes \\
1100111100 & Count the number of transitions from 0 to 1 and 1 to 0. & 3 \\
111111 & Predict the next bit. & 1 \\
\bottomrule
\end{tabular}
\end{table}

\subsection{Natural language experiments}
We train our models on the same DGX system used in the synthetic dataset experiments. Unlike the synthetic dataset experiments, however, we train these XLM RoBERTa Large (MIT license) models with the single precision (float32) data format and use full fine-tuning rather than LoRA. We observed non-negligible performance improvements with this configuration over bfloat16 with LoRA. As a result, training took one to two hours and 30 GB of VRAM on a single A100. We used the same set or rate and threshold parameters as done in the synthetic data experiments.

\subsubsection{Small natural language dataset} \label{app: nlp_dataset}
We curated a small natural language dataset to generate results shown in \prettyref{fig: nlp_combined} by prompting GPT-4 \cite{openai2024gpt4} to provide short natural language prompts of 15 tokens or less, provide four questions about each prompt, and give the answer. Afterward, we modified some of the questions and prompts slightly when the generated prompt by GPT-4 was too long or the questions and answers contained too much overlap with each other for a given prompt. In total, our dataset consists of ten prompts with four questions each. A few examples of our dataset are shown in \prettyref{tab: nlp_data}.

\begin{table}[htbp]
\centering
\caption{One example of each prompt from our natural language dataset.}
\label{tab: nlp_data}
\begin{tabular}{>{\raggedright\arraybackslash}p{5cm} >{\raggedright\arraybackslash}p{4cm} >{\raggedright\arraybackslash}p{3cm}}
\toprule
\textbf{Prompt} & \textbf{Query} & \textbf{Answer} \\
\midrule
After dinner, the cat chased a mouse around the house. & What was the cat doing? & The cat was chasing a mouse. \\
The dog barked loudly at the passing mailman on a quiet street. & Where did the barking occur? & On a quiet street. \\
After school, the child played with toys in the cozy living room. & When was the child playing? & After school. \\
At the art gallery, the artist painted a colorful mural on the wall. & Where was the painting done? & On the wall at the art gallery. \\
\bottomrule
\end{tabular}
\end{table}

\subsubsection{Choice of distortion metric} \label{app: distortion-metric}
The proper choice of distortion function is important to meaningfully measure the change in performance (distortion) of the LLM as the rate is varied. Ideally, a distortion metric where two texts with the ``same meaning,'' as would be determined by humans, achieve a ``low distortion'' is the gold standard, but this metric is unknown. This is a crucial open problem not just for our work but also for the fair benchmarking of LLMs in general.

For our results on natural language, we report our results using the following distortion metrics (or rather, ``$1\ -$'' these, since these are similarity metrics and we want a low distortion to mean high similarity). RougeL \cite{lin2004rouge} is a standard metric used to evaluate summaries, which does so by computing the precision and recall between the tokens in the generated text and the reference texts. In contrast, BertScore \cite{zhang2020bertscore} computes contextualized embeddings for the generated tokens and reference tokens and uses the pairwise cosine similarity among them to produce the final score. The authors of the BertScore work highlight that their metric correlates better with human judgements than all other metrics (rougeL included). Regardless, our results with these two metrics are in agreement with each other, suggesting that a ``good enough'' metric may be sufficient. A popular approach in current literature is to ask GPT4 to give a score on the similarity between generated and reference texts. Although it has been shown that humans agree with GPT4's evaluation more than the evaluation of other humans \cite{rafailov2023direct}, we are skeptical of this metric because it is not reproducible, and GPT4 has biases that may result in unfair or inaccurate evaluations. Additionally, our theoretical framework is general and does not assume any specific distortion function. In particular, it can also be used with new distortion functions that better capture semantic notions when they are discovered.

\subsection{LLM fine-tuning} \label{app: fine-tuning}
\subsubsection{Synthetic dataset}
Given that our synthetic dataset of binary prompts is not naturally in the distribution of training data of LLMs, we use Mistral 7B Instruct v0.2 \cite{jiang2023mistral} as our black-box model, and fine-tune it on tuples of (prompt, query, answer). This is also known as ``instruction fine-tuning;'' we only compute the loss over the answer.

Each prompt compression method requires an LLM as part of its compression algorithm; we fine-tune Pythia 1B deduplicated \cite{biderman2023pythia} for Selective Context and LLMLingua-based methods. Selective Context and LLMLingua only use negative log-likelihood scores over the prompt, so for these methods we fine-tune with the next-word prediction over the prompts. For LLMLingua Query, we place the (query, prompt, answer) tuple into context and then perform next token prediction over the entire context. We place only the query and prompt into the context for the prompt compression step (inference time).

LLMLingua-2 methods require an additional label set for every prompt as ground-truth answers to teach the model to predict which tokens should be kept. For our dataset, gathering the labels for each prompt is deterministic if the query is known, so it is easy to assemble the label set for query-aware LLMLingua-2 methods. For example, for the query ``Is the binary string a palindrome?'' we can easily choose the shortest sequence of tokens from the input that is also a palindrome (if the answer is ``yes'') as the ground-truth compressed prompt. For \qselect and \aqselect, both of which are query-aware, we put the query and prompt into context and then train the LLM to predict which tokens to keep using the constructed label set. This process is less straightforward for query-agnostic LLMLingua-2 since it is not clear how to assign the labels without the query. In this case, we choose the ground-truth compressed prompt to consist of the highest entropy tokens. Given the Markov chain from which our prompts were generated, these tokens contain the transitions between bits. For all LLMLingua-2 methods, we fine-tune RoBERTa Base \cite{liu2020roberta}.

We conduct a grid search over a set of hyperparameters before fine-tuning the final model used for each method. Specifically, we use the training set to fine-tune a model with all combinations of hyperparameters, evaluate the final performance on each model with a test set, and choose the combination of hyperparameters leading to the best performance. We then merge the train and test set and train with the chosen hyperparameters and do a final evaluation on the validation, which is the dataset used in the results of this paper.

All models are searched over the same learning rate \{5e$-6$, 1e$-5$, 5e$-5$, 1e$-4$\}, batch size \{16, 32\}, LoRA rank \{16, 32, 64, 128\}, and LoRA alpha \{16, 32, 64, 128\} hyperparameters. For the number of training epochs, we search over \{1, 2, 4\} for Mistral 7B Instruct v0.2 and Pythia 1B deduplicated, and \{8, 12\} for RoBERTa Base.

We report our final set of hyperparameters used to fine-tune the LLM used for each prompt compression method in \prettyref{tab: hyperparams}.

\begin{table}[htb]
\centering
\caption{Final set of hyperparameters used to train the LLM used in each prompt compression method.}
\label{tab: hyperparams}
\begin{tabular}{>{\centering\arraybackslash}p{2.6cm}ccp{1cm}>{\centering\arraybackslash}p{1cm}>{\centering\arraybackslash}p{1cm}>{\centering\arraybackslash}p{1cm}}
\toprule
Method & Tokenization & Epochs & Batch Size & Learning Rate & LoRA Rank & LoRA Alpha \\
\midrule\midrule
Selective Context & Standard & 1 & 16 & 5e-5 & 32 & 32 \\
\midrule
Selective Context & Forced & 1 & 16 & 5e-5 & 128 & 64 \\
\midrule
LLMLingua & Standard & 1 & 16 & 5e-5 & 32 & 32 \\
\midrule 
LLMLingua & Forced & 1 & 16 & 5e-5 & 128 & 64 \\
\midrule
LLMLingua Query & Standard & 4 & 32 & 1e-4 & 128 & 128 \\
\midrule
LLMLingua Query & Forced & 4 & 16 & 1e-4 & 64 & 128 \\
\midrule
LLMLingua-2 & Standard & 12 & 32 & 1e-4 & 128 & 128 \\
\midrule 
LLMLingua-2 & Forced & 12 & 32 & 1e-4 & 64 & 128 \\
\midrule
\qselect & Standard & 12 & 32 & 1e-4 & 128 & 128 \\
\midrule
\qselect & Forced & 12 & 32 & 1e-4 & 64 & 128 \\
\midrule
\aqselect & Standard & 12 & 32 & 1e-4 & 128 & 128 \\
\midrule 
\aqselect  & Forced & 12 & 32 & 1e-4 & 64 & 128 \\
\midrule\midrule
Black-box target LLM & Standard & 4 & 16 & 5e-5 & 16 & 16 \\
\midrule
Black-box target LLM & Forced & 4 & 16 & 5e-6 & 16 & 64 \\
\bottomrule
\end{tabular}
\end{table}

\subsubsection{Natural language dataset}
\label{sec: nlp_dataset}
We use fine-tuned models available on Hugging Face for the Selective Context, LLMLingua, LLMLingua Query, and LLMLingua-2 prompt compression methods; only \qselect and \aqselect, our novel methods, require specialized fine-tuning starting from a pretrained XLM RoBERTa Large \cite{liu2020roberta} model. To fine-tune these models, we modify the LLMLingua-2 training code available in the LLMLingua GitHub repository to accept (prompt, query) pairs as input and classify which tokens of the prompt should be kept. Since the query uses additional context, and since the max context window of XLM RoBERTA Large is 512 tokens, we shrink the window size dedicated to the prompt to 384 tokens. We used the same hyperparameters used in the LLMLingua-2 paper \cite{wu2024llmlingua2} for fine-tuning, i.e., learning rate 1e$-5$, batch size $10$, epochs $10$, and the Adam optimizer with $\beta_1 = 0.9, \beta_2 =0.999, \epsilon = $1e$-8$, and weight decay 0.

The dataset used to train our models is a filtered version of Databricks Dolly 15k \cite{Databricks2023DollyV2}. Our final filtered dataset contains samples that meet the following conditions: (1) the context is non-empty, (2) the context is between 50 and 5000 characters in length, and (3) the instruction has fewer than 360 characters. After filtering, the dataset contains 4.35k samples. The contexts of this dataset are chunked into windows of 384 tokens, and we prompt GPT-4 to compress the context by asking it only to keep the tokens necessary for responding to the provided instruction. This allows us to construct a label set for the token classification problem of choosing which tokens to keep or remove. Our dataset used for training consists of the instructions (queries), chunked contexts (prompts), and the set of labels for the contexts. 95\% of the samples in this dataset are used in the training dataset, and the remaining 5\% form the validation set. \prettyref{tab: gpt4_prompt} shows the prompt we used for GPT-4, which is a modified version of the prompt used in the LLMLingua-2 paper. All of our code used for constructing the training dataset and training the models are modified from the LLMLingua GitHub repo. Our modified code is available in the code release.

\begin{table}[ht]
\centering
\caption{Our prompt for GPT-4 to determine which tokens to remove. This prompt was used to construct the dataset for training \qselect and \aqselect on natural language. This is a slight modification from the prompt used in the LLMLingua-2 work \cite{wu2024llmlingua2}.}
\label{tab: gpt4_prompt}
\begin{tabular}{|>{\centering\arraybackslash}m{3.5cm}|p{9cm}|}
\hline
System Prompt & You are an excellent linguist and very good at compressing passages into short expressions by removing unimportant words, while retaining as much information as possible. \\ \hline
User Prompt & Compress some text to short expressions, such that you (GPT-4) can answer the query based on the compressed text. Unlike the usual text compression, I need you to comply with the 5 conditions below:
\begin{enumerate}[leftmargin=*,label=\arabic*., noitemsep]
    \item You can ONLY remove unimportant words.
    \item Do not change the order of words.
    \item Do not change the original words, e.g., `asking' $\rightarrow$ `ask' is NOT OK; `current' $\rightarrow$ `now' is NOT OK.
    \item Do not use abbreviations or emojis, e.g., `without' $\rightarrow$ `w/o' is NOT OK; `as soon as possible' $\rightarrow$ `ASAP' is NOT OK.
    \item Do not add new words or symbols, this is very important. For example, `dedicate 3 hours to each chapter' $\rightarrow$ `3 hours/chapter' is NOT OK because you add new token `/', just compress it into `3 hours each chapter'. '30 eggs plus 20 eggs equals 50 eggs' $\rightarrow$ `30+20=50' is also NOT OK because you add new symbols + and =; just compress it into `30 plus 20 equals 50'.
\end{enumerate}
Compress the origin aggressively by removing words only. Please output the compressed text directly. Compress the origin as short as you can, while retaining ONLY the information needed to answer the following query: \{query\}.

If you understand, please compress the following text: \{text\_to\_compress\}

The compressed text is: \\
\hline
\end{tabular}
\end{table}


\section{Additional experimental results} \label{app: results}
\subsection{Small-scale datasets}
We provide the remainder of our empirical results below. \prettyref{fig: comp_standard} is similar to \prettyref{fig: comp_forced}, but shows the result when the prompt compression method uses standard tokenization rather than forced tokenization. \prettyref{fig: comp_standard_vs_forced} shows a direct comparison between the trade-off for methods using standard and forced tokenization. \prettyref{fig: forced_acc_all}, \prettyref{fig: forced_log_loss_all}, \prettyref{fig: standard_acc_all}, and \prettyref{fig: standard_log_loss_all} show the rate-distortion trade-off curves for each of the seven queries in our synthetic dataset. \prettyref{fig: forced_acc_all} shows forced tokenization with 0/1 loss, \prettyref{fig: forced_log_loss_all} shows forced tokenization with log loss, \prettyref{fig: standard_acc_all} shows standard tokenization with 0/1 loss, and \prettyref{fig: standard_log_loss_all} shows standard tokenization with log loss.

\begin{figure}[!htbp]
    \centering
    \includegraphics[width=1.0\textwidth]{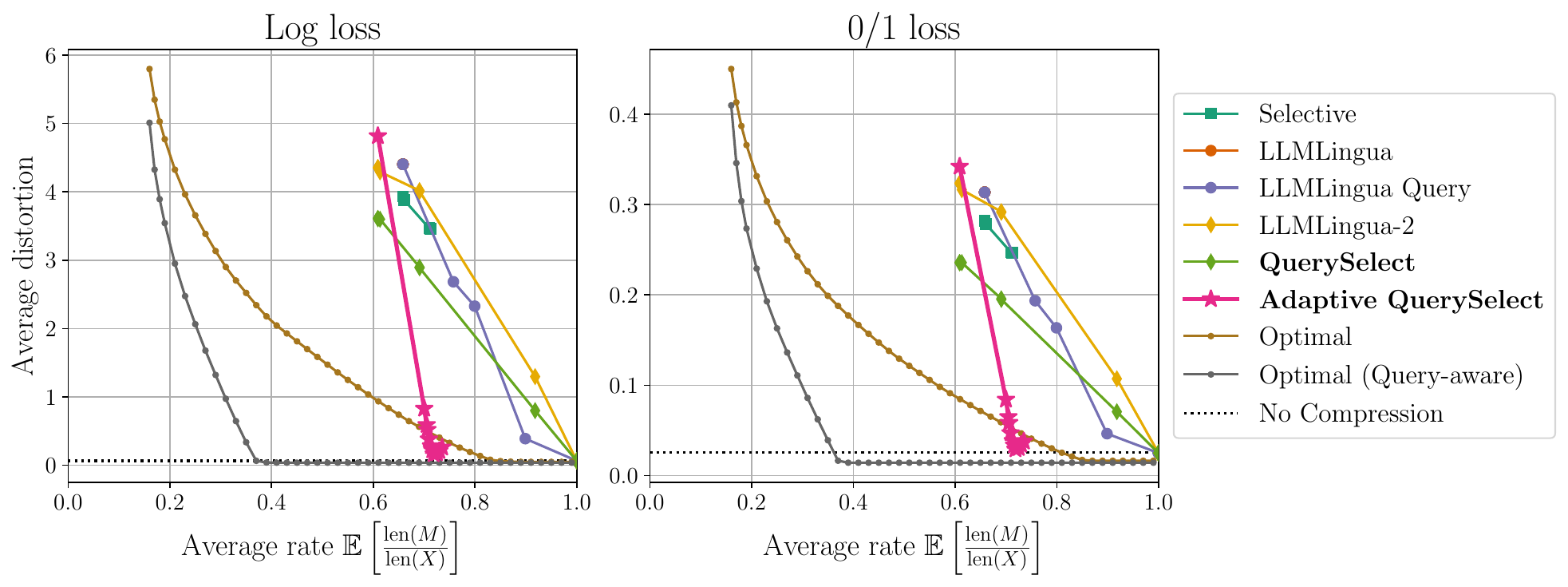}
    \caption{The distortion-rate curves of all prompt compression methods and the optimal strategy attained by solving our dual LP formulation when standard tokenization is used for the prompt. All methods are compared with the log loss \textbf{(left)} and 0/1 loss \textbf{(right)} metrics.}
    \label{fig: comp_standard}
\end{figure}

\begin{figure}[!htbp]
    \centering
    \includegraphics[width=1.0\textwidth]{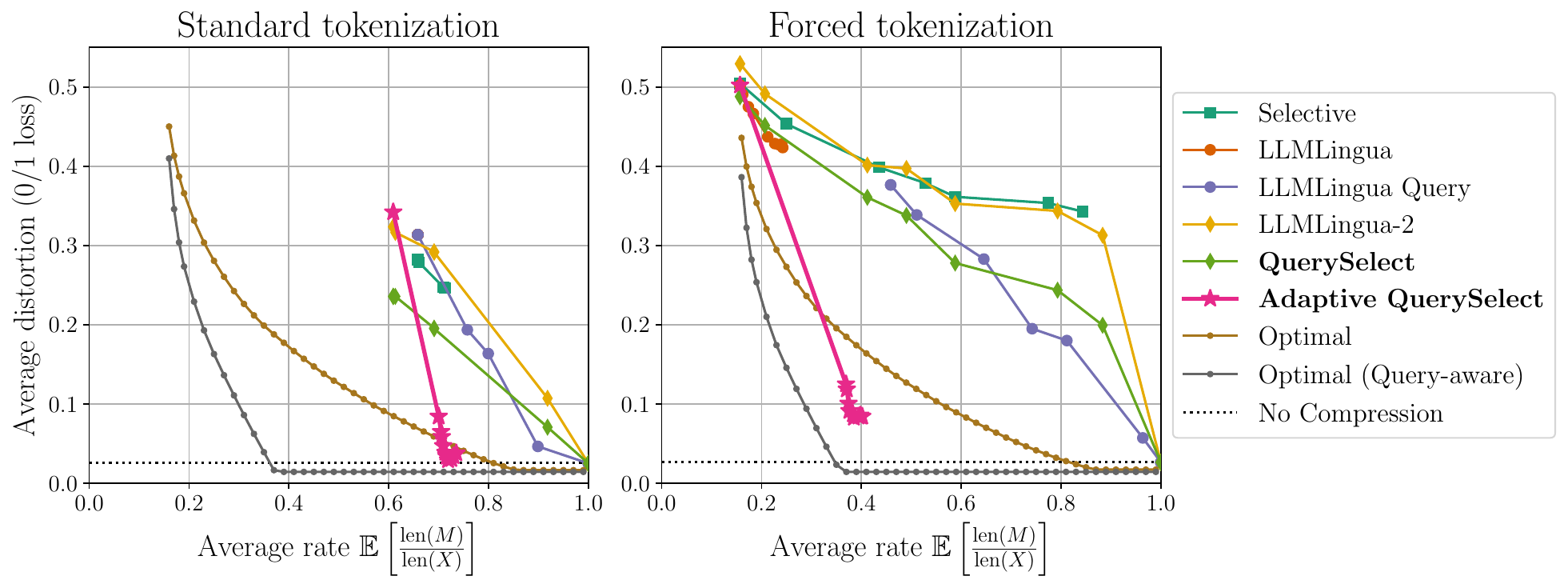}
    \caption{Performance comparison when standard tokenization \textbf{(left)} and forced tokenization \textbf{(right)} is used on our synthetic dataset. Interestingly, the optimal performance is nearly equivalent between the two, and, for a given rate, methods with standard tokenization match or improve upon the performance of a method that forced separate tokenization of every bit. However, standard tokenization results in compression of 1 to 4 tokens on our dataset, whereas forced tokenization compresses up to 10 tokens, allowing for a greater range of rates.}
    \label{fig: comp_standard_vs_forced}
\end{figure}

\begin{figure}[!htbp]
    \centering
    \includegraphics[width=1.0\textwidth]{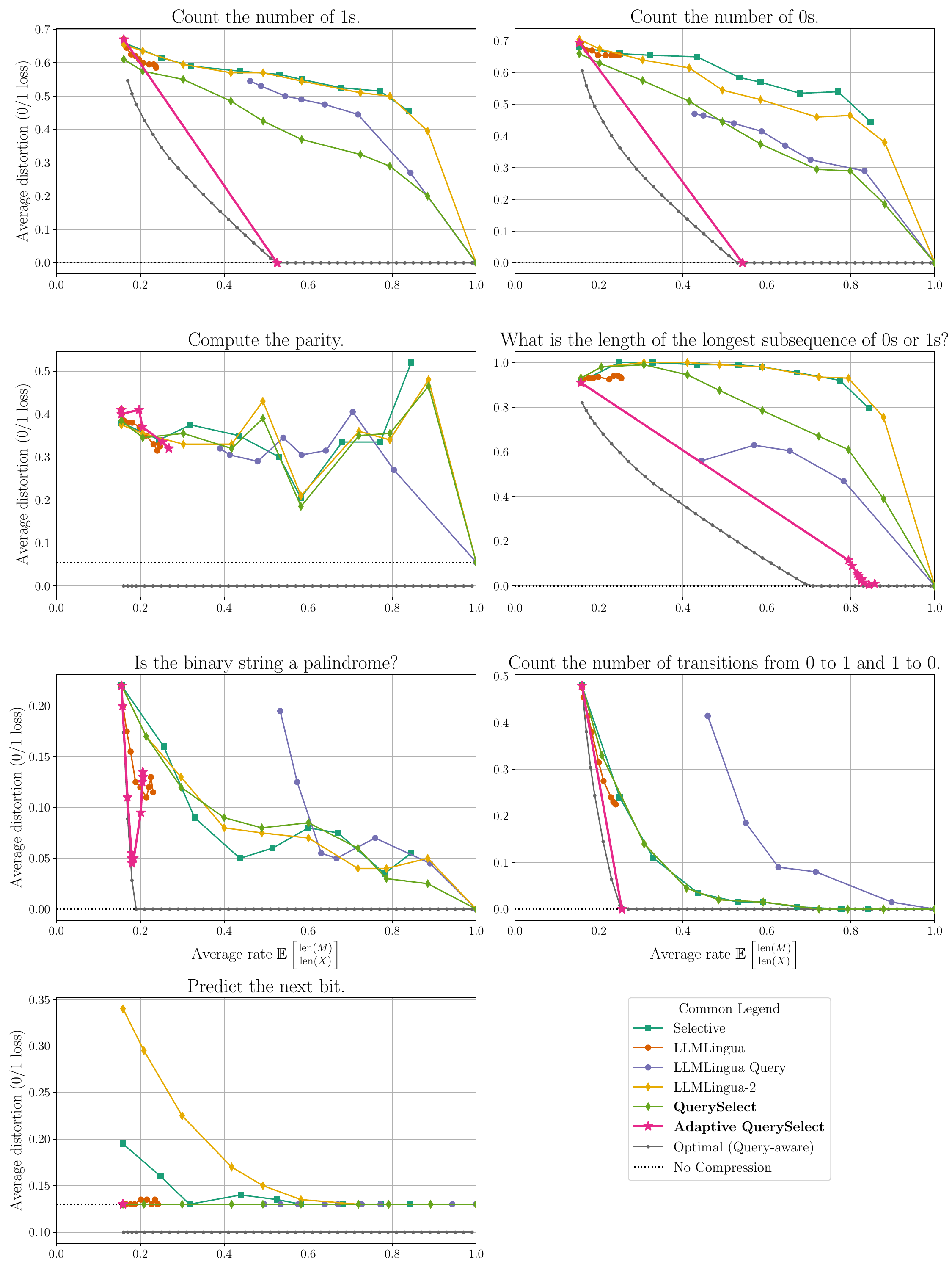}
    \caption{The rate-distortion trade-off of all methods on each individual query for forced tokenization and 0/1 loss.}
    \label{fig: forced_acc_all}
\end{figure}

\begin{figure}[!htbp]
    \centering
    \includegraphics[width=1.0\textwidth]{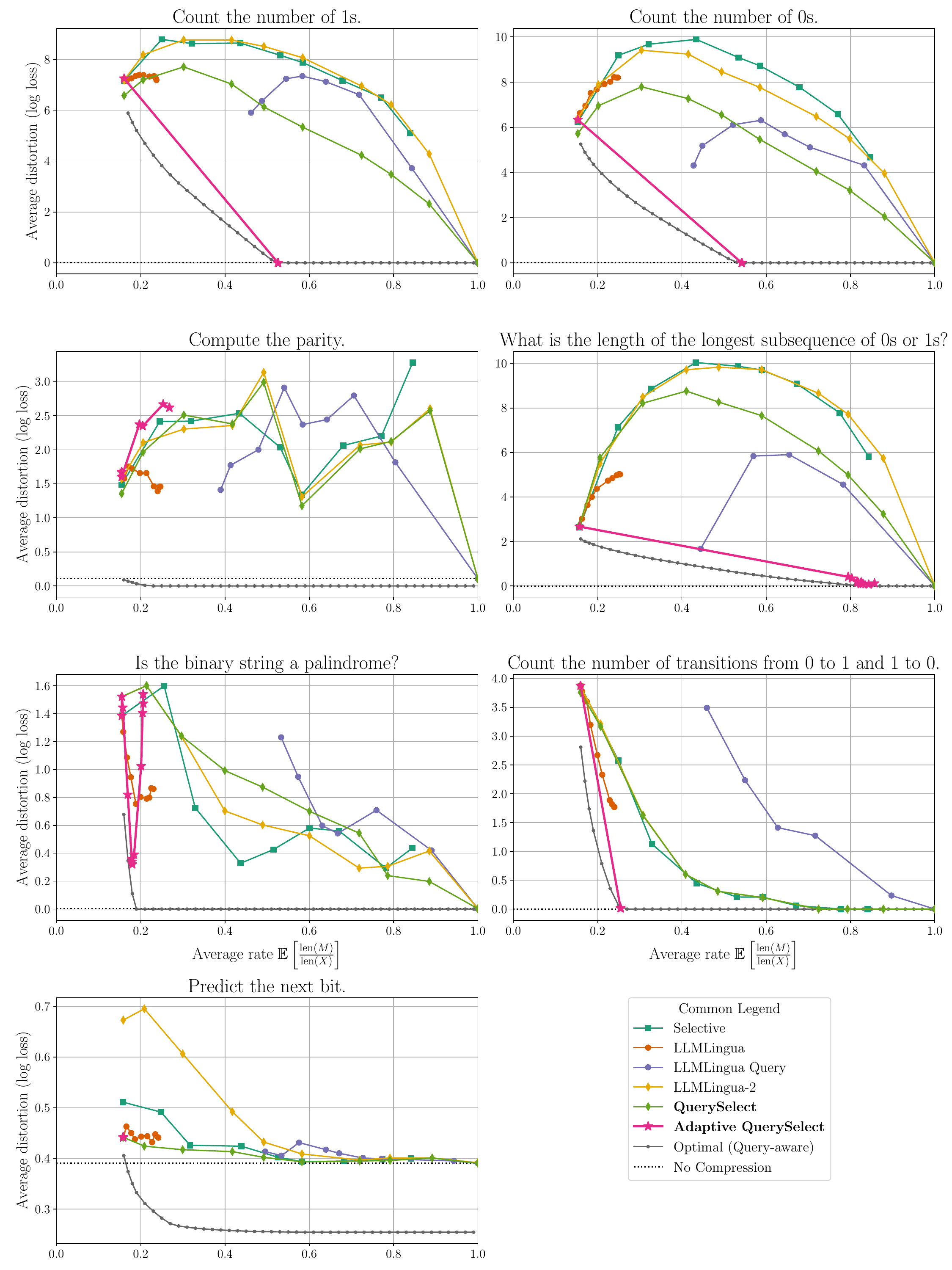}
    \caption{The rate-distortion trade-off of all methods on each individual query for forced tokenization and log loss.}
    \label{fig: forced_log_loss_all}
\end{figure}

\begin{figure}[!htbp]
    \centering
    \includegraphics[width=1.0\textwidth]{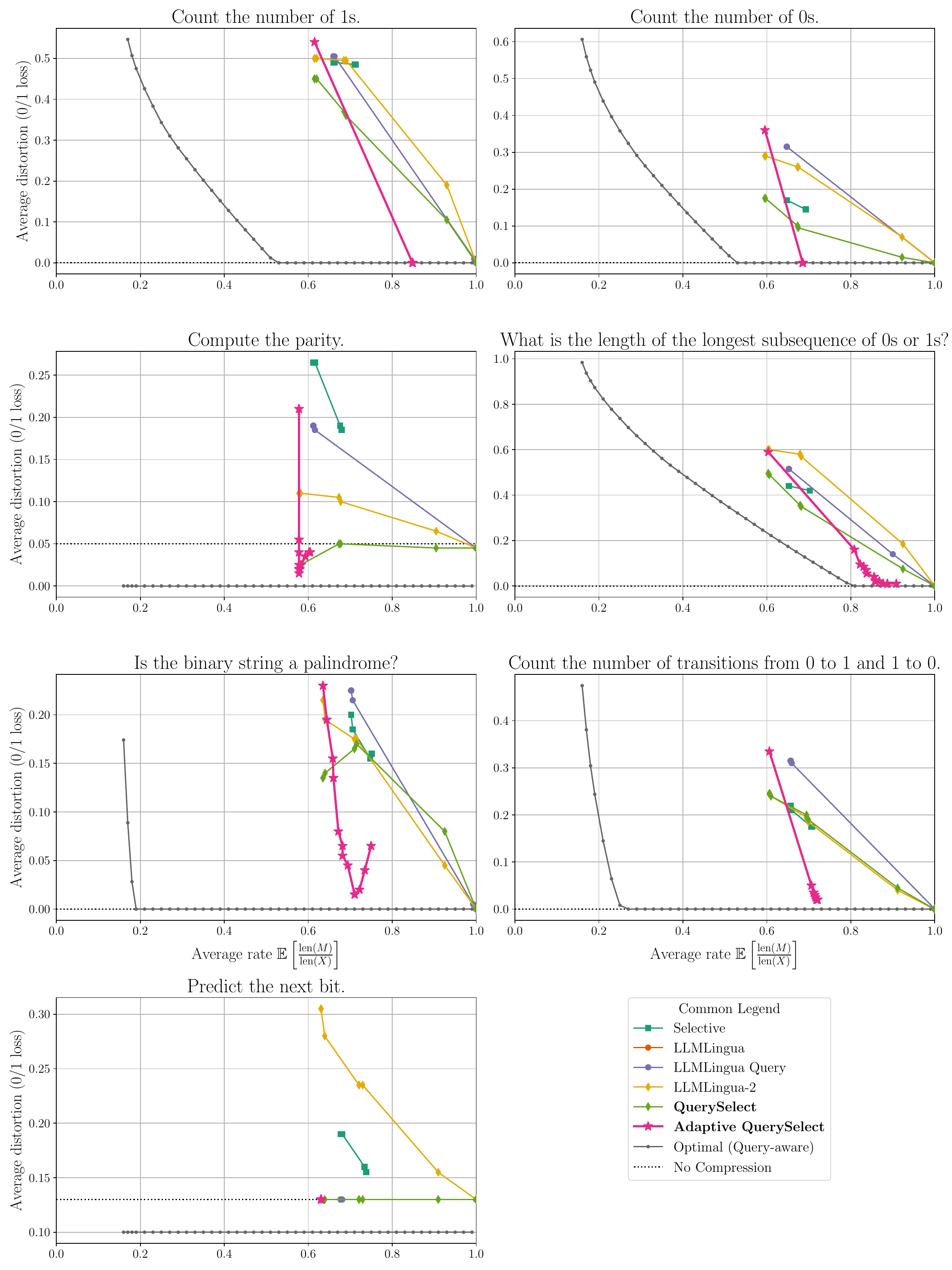}
    \caption{The rate-distortion trade-off of all methods on each individual query for standard tokenization and 0/1 loss.}
    \label{fig: standard_acc_all}
\end{figure}

\begin{figure}[!htbp]
    \centering
    \includegraphics[width=1.0\textwidth]{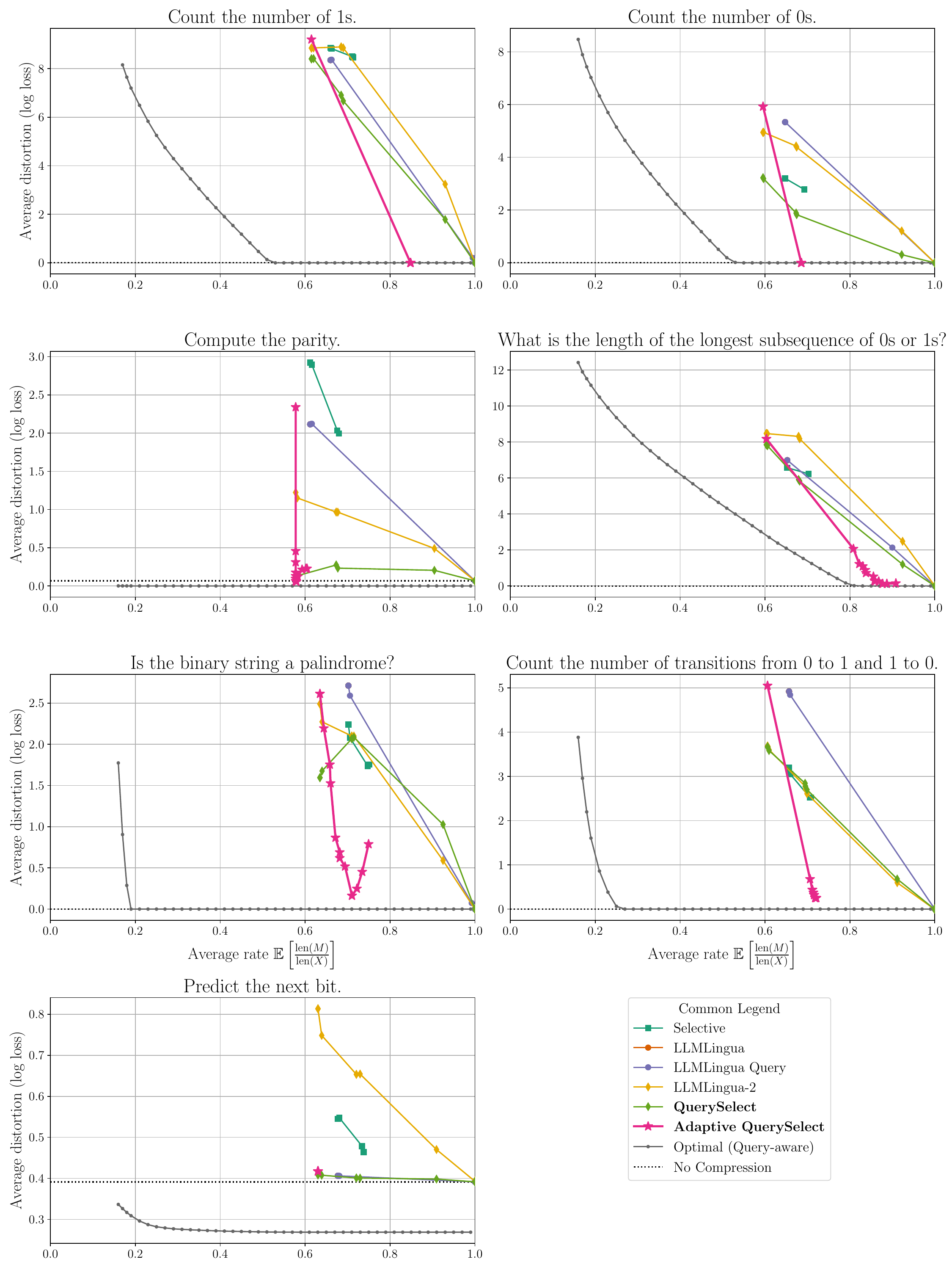}
    \caption{The rate-distortion trade-off of all methods on each individual query for standard tokenization and log loss.}
    \label{fig: standard_log_loss_all}
\end{figure}

\begin{figure}[!htbp]
\centering
\begin{subfigure}{0.48\textwidth}
    \includegraphics[width=\linewidth]{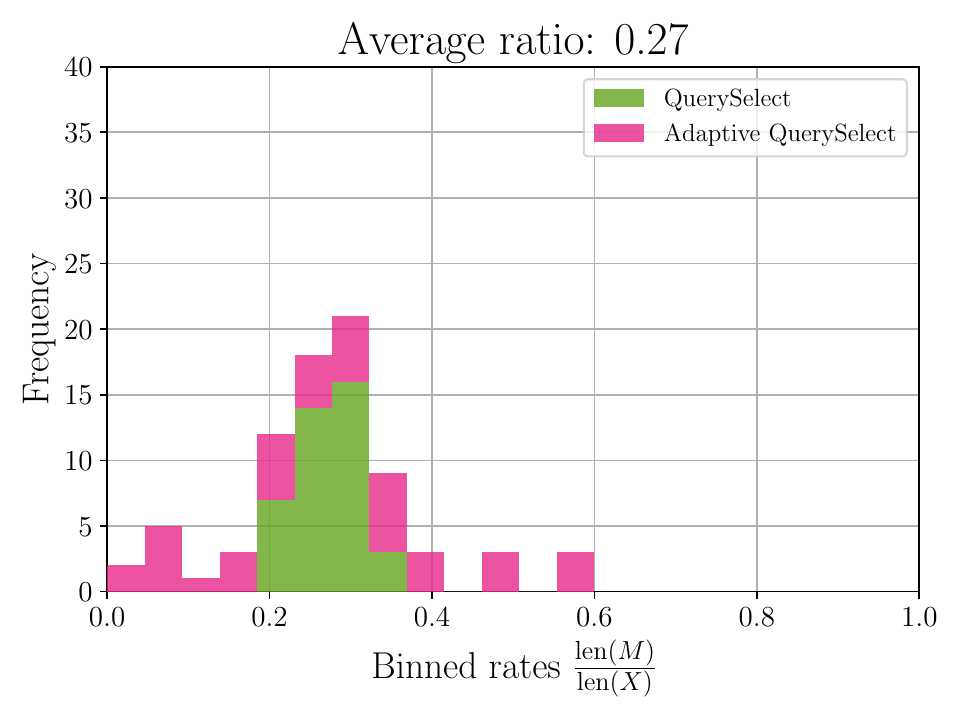}
\end{subfigure}
\hfill 
\begin{subfigure}{0.48\textwidth}
    \includegraphics[width=\linewidth]{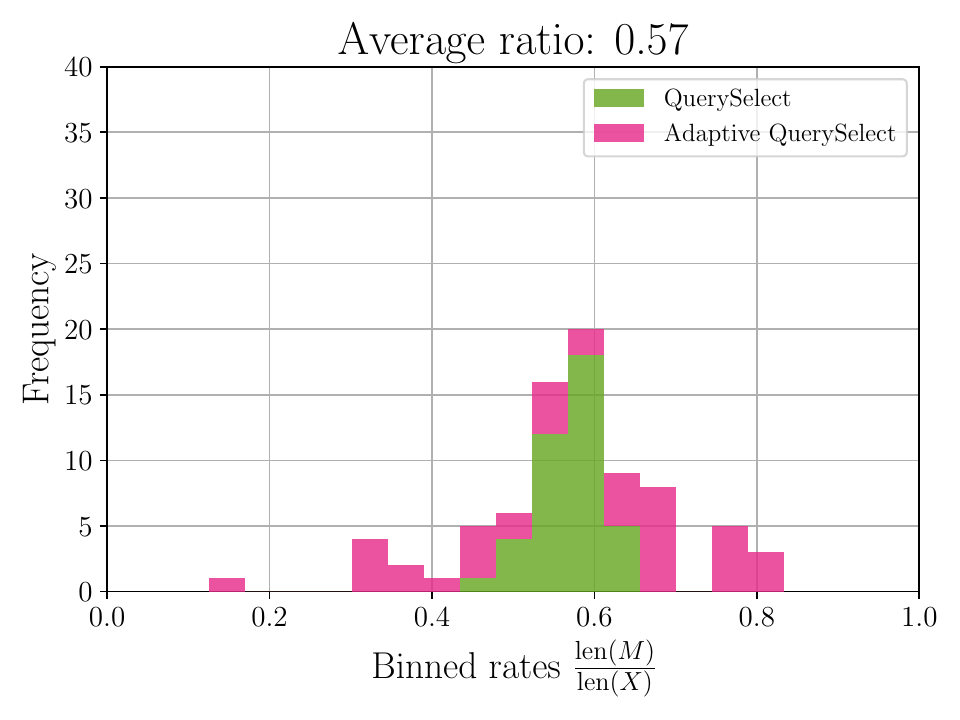}
\end{subfigure}

\vspace{3mm} 

\begin{subfigure}{0.48\textwidth}
    \includegraphics[width=\linewidth]{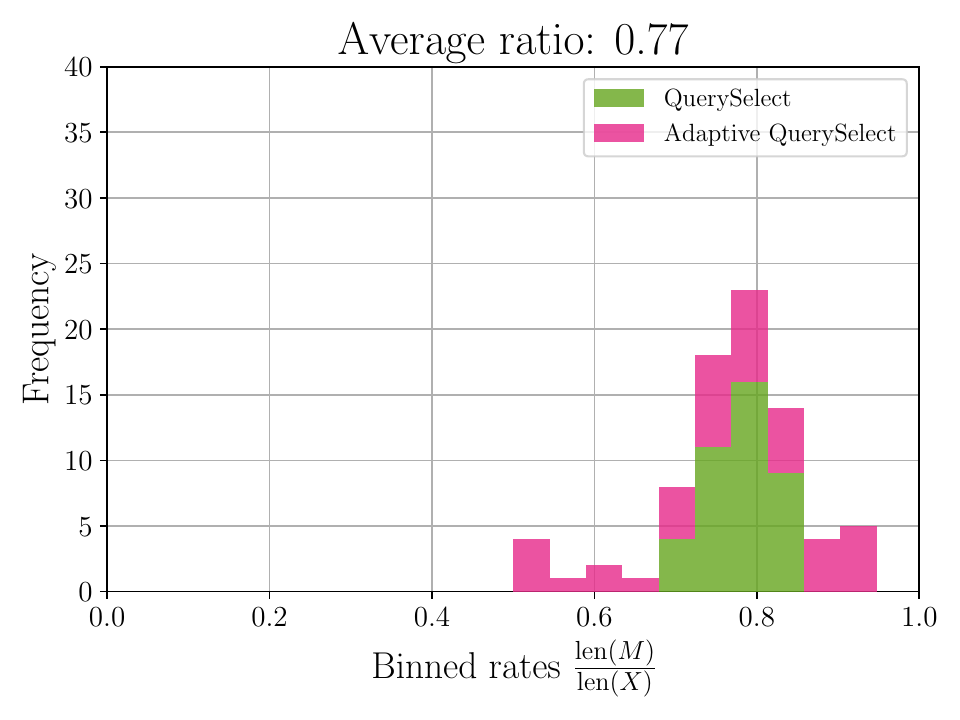}
\end{subfigure}
\hfill
\begin{subfigure}{0.48\textwidth}
    \includegraphics[width=\linewidth]{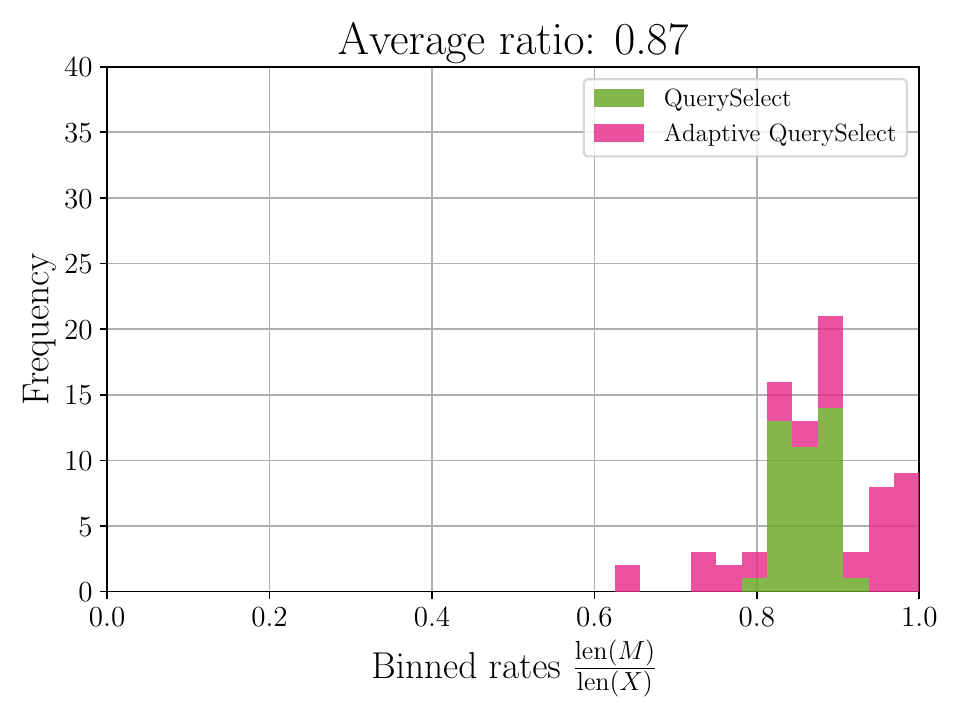}
\end{subfigure}
\caption{Histograms of the rates for \qselect and \aqselect for a given average rate from the left-side figure of \prettyref{fig: nlp_combined}. These figures show that \aqselect has a larger spread of rates across the samples of the natural language dataset. In particular, \aqselect has greater flexibility in choosing an appropriate rate for a given (prompt, query) pair.}
\label{fig: histograms}
\end{figure}

\subsection{NarrativeQA} \label{app: narrativeqa}
We compare existing methods on the NarrativeQA \cite{kočiský2017narrativeqareadingcomprehensionchallenge} dataset. Specifically, we use the summaries from the dataset as the prompt, and use the query and answer as given. Since the sizes of the prompts are hundreds of tokens and approximately 3,500 samples, it is not feasible to compute the optimal rate-distortion curve for this dataset. Instead, \prettyref{fig: narrativeqa_combined} showcases the performance of our proposed methods over existing methods. In particular, \prettyref{fig: narrativeqa_combined} shows the same conclusion as \prettyref{fig: nlp_combined}: our proposed methods are better for rates less than 0.5 and remain competitive for rates about 0.5. \prettyref{tab: timings} shows the average time required to compress a prompt on the NarrativeQA dataset and our curated small-scale NLP dataset. Since our methods are adapted from LLMLingua-2, our methods share the same timings. We compute approximations to the optimal rate-distortion trade-off curves in \prettyref{sec: beam_search}.

\begin{figure}[!t]
    \centering
    \includegraphics[width=1.0\textwidth]{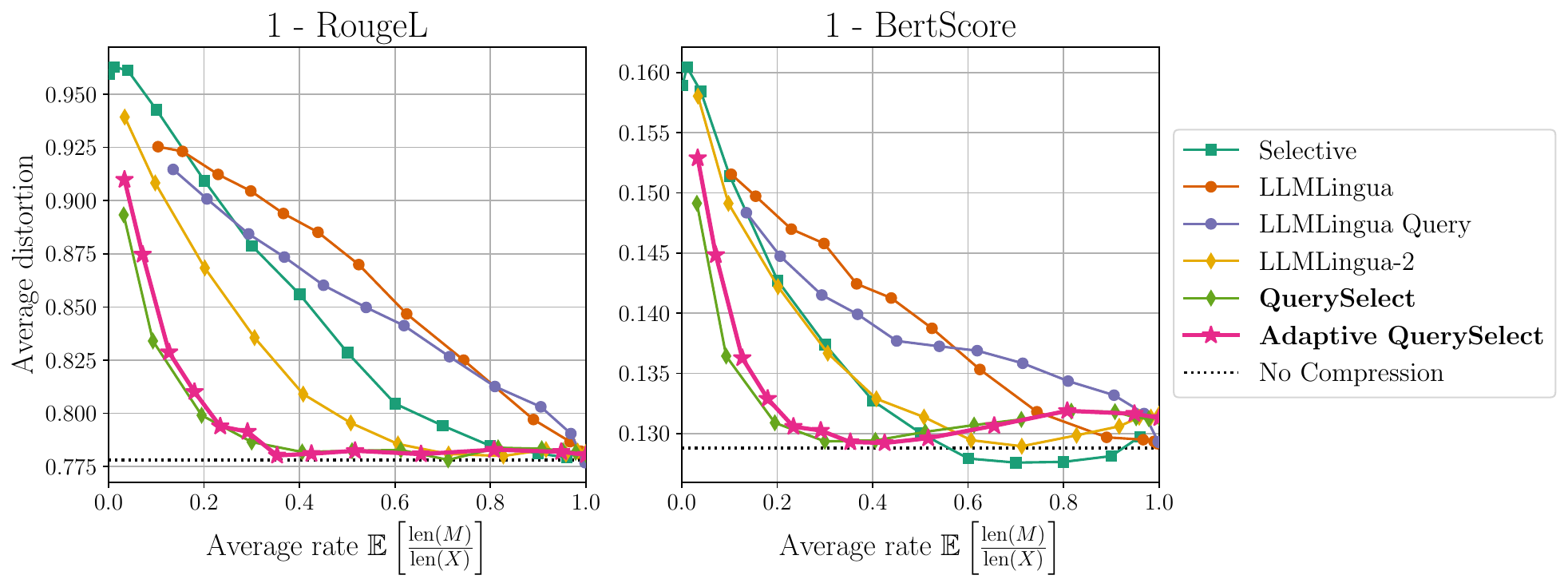}
    \caption{Comparison among all prompt compression methods on the NarrativeQA \cite{kočiský2017narrativeqareadingcomprehensionchallenge} summaries dataset. We show the rate-distortion trade-off for RougeL \cite{lin2004rouge} \textbf{(left)} and BertScore \cite{zhang2020bertscore} \textbf{(right)}. Since a higher RougeL and BertScore metric is better, we plot ``$1 - $ the computed average distortion'' so that a higher rate should yield a lower loss. (Left plot same as \prettyref{fig: narrativeqa_beam_search}, without the approximation to the optimal curve.)}
    \label{fig: narrativeqa_combined}
\end{figure}

\begin{table}[htbp]
\centering
\caption{Average time required to compress a single prompt (seconds).}
\label{tab: timings}
\begin{tabular}{>{\raggedright\arraybackslash}p{4.5cm} >{\centering\arraybackslash}p{4cm} >{\centering\arraybackslash}p{4cm}}
\toprule
\textbf{Method} & \textbf{NarrativeQA} & \textbf{Small NLP Dataset} \\
\midrule
Selective & 1.043 & 0.049 \\
LLMLingua & 0.510 & 0.273 \\
LLMLingua Query & 1.060 & 0.530 \\
LLMLingua-2 & 0.113 & 0.044 \\
\qselect & 0.114 & 0.043 \\
\aqselect & 0.114 & 0.043 \\
\bottomrule
\end{tabular}
\end{table}

\end{document}